

\documentclass{article}
\usepackage[sort]{cite}
\usepackage[utf8]{inputenc} 
\usepackage{geometry} 
\geometry{margin=1in} 

\usepackage{graphicx} 


\usepackage{mathtools}
\usepackage{mathrsfs}
\usepackage{amsmath, amssymb, amsopn, amsthm} 
\usepackage{lipsum}
\usepackage{amsfonts}
\usepackage{graphicx}
\usepackage{epstopdf}
\usepackage{algorithm, algorithmicx, algpseudocode}
\usepackage{booktabs} 
\usepackage{array} 
\usepackage{paralist} 
\usepackage{verbatim} 
\usepackage{subcaption} 
\usepackage{float}
\usepackage[usenames,dvipsnames]{xcolor} 
\usepackage{mathtools}
\usepackage{mathrsfs}
\usepackage{multirow}
\usepackage{mdwlist}
\usepackage[colorlinks]{hyperref}
\definecolor{GGreen}{RGB}{0,128,0}
\hypersetup{
    linkcolor= {GGreen},
    citecolor={blue}, 
    urlcolor={cyan}
}
\usepackage[nameinlink]{cleveref}
\crefformat{equation}{(#2#1#3)}


\newcommand{\mH}{$(\mathscr{H})$}
\newcommand{\mHo}{$(\mathscr{H}_1)$} 







\title{\bfseries Joint reconstruction-segmentation on graphs}
\author{Jeremy Budd\thanks{Hausdorff Centre for Mathematics, Rheinische Friedrich-Wilhelms-Universit\"at Bonn, Germany (\texttt{jeremy.budd@hcm.uni-bonn.de})} \and 
	Yves van Gennip\thanks{Delft Institute of Applied Mathematics, Technische Universiteit Delft, The Netherlands (\texttt{y.vangennip@tudelft.nl})}  \and
	 Jonas Latz\thanks{Maxwell Institute for Mathematical Sciences \& School of Mathematical and Computer Sciences, Heriot-Watt University, Edinburgh, United Kingdom (\texttt{j.latz@hw.ac.uk})}  \and 
	 Simone Parisotto\thanks{Department of Applied Mathematics and Theoretical Physics,
	 	University of Cambridge, United Kingdom (\texttt{sp751@cam.ac.uk}, \texttt{cbs31@cam.ac.uk})} \and
	 Carola-Bibiane Sch\"onlieb\footnotemark[4]
 }
\numberwithin{equation}{section}
\newtheoremstyle{exampstyle}
{4pt} 
{4pt} 
{\itshape} 
{} 
{\bfseries} 
{.} 
{.5em} 
{} 
\theoremstyle{exampstyle}

\DeclareMathOperator*{\argmin}{argmin}

\DeclareMathOperator{\TV}{TV}

\DeclareMathOperator{\fGL}{GL_{\mathit{\varepsilon,\mu},\mathit{f}}}

\DeclarePairedDelimiter\ip{\langle}{\rangle_{\V}}
\newcommand{\be}{\begin{equation}}
\newcommand{\ee}{\end{equation}}
\newcommand{\bes}{\begin{equation*}}
\newcommand{\ees}{\end{equation*}}
\newcommand{\V}{\mathcal{V}}
\newcommand{\bigO}{\mathcal{O}}
\newtheorem{thm}{Theorem}[section]
\newtheorem{mydef}[thm]{Definition}
\newtheorem{example}[thm]{Example}

\newtheorem{ass}[thm]{Assumption}
\newtheorem{lemma}[thm]{Lemma}
\newtheorem{nb}[thm]{Note}
\newtheorem{prob}[thm]{Problem}

\begin{document}
	\maketitle
\begin{abstract}
 Practical image segmentation tasks concern images which must be reconstructed from noisy, distorted, and/or incomplete observations. A recent approach for solving such tasks is to perform this reconstruction jointly with the segmentation, using each to guide the other. However, this work has so far employed relatively simple segmentation methods, such as the Chan--Vese algorithm. In this paper, we present a method for joint reconstruction-segmentation using graph-based segmentation methods, which have been seeing increasing recent interest. Complications arise due to the large size of the matrices involved, and we show how these complications can be managed. We then analyse the convergence properties of our scheme.  Finally, we apply this scheme to distorted versions of ``two cows'' images familiar from previous graph-based segmentation literature, first to a highly noised version and second to a blurred version, achieving highly accurate segmentations in both cases. We compare these results to those obtained by sequential reconstruction-segmentation approaches, finding that our method competes with, or even outperforms, those approaches in terms of reconstruction and segmentation accuracy. 
 \vspace{1em}
 
\noindent\textbf{Key words.}	\parbox[t]{0.775\textwidth}{Image reconstruction, image segmentation, joint reconstruction-segmentation, graph-based learning, Ginzburg--Landau functional, Merriman--Bence--Osher scheme, total variation regularisation.} 

\vspace{1em}
\noindent\textbf{AMS subject classifications.}	05C99, 34B45, 35R02, 65F60, 94A08.
\end{abstract}	
\section{Introduction}

Two tasks which lie at the heart of many applications in image processing are \emph{image reconstruction}---the task of reconstructing an image from noisy, distorted, and/or incomplete observations---and \emph{image segmentation}, the task of separating an image into its ``important'' parts. But in practice, the latter task is not independent of the former: when we seek to segment an image, we will not typically have access to the true image, but rather must reconstruct the image from imperfect observations. That is, in practice a segmentation task is often a \emph{reconstruction}-segmentation task. 

Over the last decade, the framework of ``PDEs on graphs'' has yielded highly effective techniques for image segmentation. In this paper, we will exhibit a technique for reconstruction-segmentation within this framework. In particular, we will incorporate into this framework the method of \emph{joint reconstruction-segmentation}, which is an approach that performs the reconstruction and segmentation together, using each to guide the other, with the goal of improving the quality of the segmentation compared to performing the tasks in sequence. Previous implementations of this approach have employed relatively simple segmentation techniques. The key contribution of this paper will be to show how the more sophisticated graph-PDE-based segmentation techniques can be employed in joint reconstruction-segmentation. 


\subsection{Image reconstruction background}
The general setting for image reconstruction is that one has some observations $y$ of an image $x^*$, which are related via 
\be\label{invP}
y = \mathcal{T}(x^*) + e
\ee  
where $\mathcal{T}$ is the \emph{forward model}, typically a linear map, 
and $e$ is an error term (e.g. a Gaussian random variable). 
Solving \cref{invP} for $x^*$---given $y$, $\mathcal{T}$, and the distribution of $e$---is in general an ill-posed problem. A key approach to solving \cref{invP}, pioneered by Tikhonov \cite{Tikh} and Phillips \cite{phillips1962}, has been to solve the variational problem
\be\label{Tikhvar}
\argmin_x \mathcal{R}(x) + D(\mathcal{T}(x),y) 
\ee
where $\mathcal{R}$ is a \emph{regulariser}, encoding \emph{a priori} information about $x^*$, and $D$ enforces fidelity to the observations and encodes information about $e$. 

\subsection{Image segmentation background}\label{sec:segbg}
	One of the most celebrated methods for image segmentation is that of Mumford and Shah \cite{MS}. This method segments an image $x:\Omega \to \mathbb{R}$ by constructing a piecewise smooth $\tilde x \approx x$ and a set of contours $\Gamma$ (the boundaries of the segments) minimising a given segmentation energy, namely the Mumford--Shah functional 
	\be\label{eq:MS}\operatorname{MS}(\tilde x ,\Gamma) :=\int_{\Omega\setminus\Gamma} |\nabla \tilde x|^2 \; d\mu + \alpha \int_\Omega (x - \tilde x)^2 \; dx +  \beta |\Gamma|,\ee 
	 where $\mu$ is the Lebesgue measure. As $\operatorname{MS}$ is difficult to minimise in full generality, Chan and Vese \cite{CV} devised a method where $\tilde x$ is restricted to being piecewise constant.\footnote{Chan and Vese also add an extra energy term proportional to the area ``inside'' $\Gamma$.} This simplifies \cref{eq:MS} to an energy which can be minimised via level-set methods. Some key drawbacks of these methods are that they: can be computationally expensive, as one must solve a PDE; can be hard to initialise \cite{gao2005image}; can perform poorly if the image has inhomogeneities   \cite{zhang2010active}; and are constrained by the image geometry, and so less able to detect large-scale non-local structures. 
	
	These Mumford--Shah methods are related to Ginzburg--Landau methods, see e.g. \cite{ET}, because the Ginzburg--Landau functional $\Gamma$-converges to total variation \cite{MM}. 
	Because of this  $\Gamma$-convergence (which also holds on graphs \cite{vGB}), 
	Bertozzi and Flenner \cite{BF} were inspired to develop a segmentation method based on minimising the graph Ginzburg--Landau functional using the graph Allen--Cahn gradient flow. Soon after, 
	Merkurjev, Kosti\'c, and Bertozzi \cite{MKB} introduced an alternative method using a graph Merriman--Bence--Osher (MBO) scheme. 
	These ``PDEs on graphs'' methods have received considerable attention, both theoretical, see e.g. \cite{vGGOB,LB2016,BM},  and in applications, see e.g. \cite{mcBertozzi,Birdspot,BayesianGraphs,Bertozzi2021,PoissonMBO,Miller}. 
	In previous work by some of the authors, Budd and Van Gennip \cite{Budd1} showed that the graph MBO scheme is a special case of a semi-discrete implicit Euler (SDIE) scheme for graph Allen--Cahn flow, and  Budd, Van Gennip, and Latz \cite{Budd3} investigated the use of this SDIE scheme for image segmentation and developed refinements to earlier methods that resulted in improved segmentation accuracy.

\subsection{Joint reconstruction-segmentation background}
Reconstruction-segmentation was traditionally approached \emph{sequentially}: first reconstruct the image, then segment the reconstructed image. The key drawback of this method is that the reconstruction ignores any segmentation-relevant information. 
At the other extreme is the \emph{end-to-end} approach: first collect training data $\{(y_n,u_n)\}$ of pairs of observations and corresponding segmentations, then use this data to learn (e.g., via deep learning) 
a map that sends $y$ to $u$. However, this forgoes explicitly reconstructing $x^*$, can require a lot of training data, and the map can be a ``black box'' (i.e., it may be hard to explain its segmentation or prove theoretical guarantees).   

Joint reconstruction-segmentation (a.k.a. simultaneous reconstruction and segmentation) lies between these extremes, seeking to perform the reconstruction and segmentation simultaneously, using each to guide the other. It was first proposed by Ramlau and Ring \cite{RR2007} for CT imaging, with related (but extremely varied) methods later developed for other medical imaging tasks (for an overview, see \cite[\S2.4]{Corona2019}). An extensive theoretical overview of \emph{task-adapted reconstruction} was developed in Adler \emph{et al.} \cite{Adler2018}, which found that joint reconstruction-segmentation produced more accurate segmentations than both the sequential and end-to-end approaches. These methods were enhanced in Corona \emph{et al.} \cite{Corona2019} using Bregman methods, and a number of theoretical guarantees were proved about this enhanced scheme. However, these approaches have mostly relied on Mumford--Shah or Chan--Vese methods for the segmentation.

\subsection{Contributions and outline}
The primary contribution of this work will be a joint reconstruction-segmentation method based around the joint minimisation problem 
\[
\min_{x\in\mathbb{R}^{N\times \ell},u\in \V} \mathcal{R}(x) + \alpha \|\mathcal{T}(x) - y\|_F^2 + \beta 
\operatorname{GL}_{\varepsilon,\mu, f}(u,\Omega(\mathcal{F}(x),z_{d})),
\] 
where $x$ is the reconstruction, $u$ is the segmentation, the first two terms describe a reconstruction energy as in \cref{Tikhvar}, and the final term is a segmentation energy using the graph Ginzburg--Landau energy. The use of this energy is motivated by the success of the graph Ginzburg--Landau-based segmentation methods described in \Cref{sec:segbg}. These objects, and other groundwork required for this paper, will all be defined in \Cref{sec:gwork}.
In particular, in this paper:
\begin{enumerate} [i.]
	\item We will present an iterative scheme for solving this minimisation problem, which alternately updates the candidate reconstruction and the candidate segmentation (\Cref{sec:jrs}).
	\item We will devise algorithms for computing the steps of this iterative scheme (\Cref{xupdatesec,uupdatesec,sec:pipeline}). We compute the reconstruction update by linearising the corresponding variational problem (\Cref{xupdatesec}). We compute the segmentation update via the SDIE scheme (\Cref{uupdatesec}).
	\item We will demonstrate the convergence of this iterative scheme to critical points of the joint minimisation problem (\Cref{sec:convergence}).
	\item We will apply this scheme to highly-noised and to blurred versions of the ``two cows'' image familiar from \cite{BF,MKB,Budd3,Buddthesis} (\Cref{sec:applications}). Our scheme will exhibit very accurate segmentations which compete with or outperform sequential reconstruction-segmentation approaches.
\end{enumerate}




\section{Various groundwork}\label{sec:gwork}
\subsection{Framework for analysis on graphs}
 We begin by giving a framework for analysis on graphs, abridging Budd \cite[\S2]{Buddthesis}, which itself is abridging Van Gennip \emph{et al.} {\cite{vGGOB}}. 
 
 Let $(V,E,\omega)$ be a finite, undirected, weighted, and connected graph with neither multi-edges nor self-loops. The finite set $V$ is the \emph{vertex set}, $E\subseteq V^2$ is the \emph{edge set} (with $ij\in E$ if and only if $ji\in E$ for all $i,j\in V$), and $\{\omega_{ij}\}_{i,j\in V}$ are the \emph{weights}, with $\omega_{ij}\geq 0$, $\omega_{ij} = \omega_{ji}$, and $\omega_{ii}=0$, and $\omega_{ij} > 0$ if and only if $ij\in E$. We define function spaces 
 \begin{align*}
 	&\V := \left\{ u: V\rightarrow\mathbb{R} \right\} , &\V_{X} := \left\{ u: V\rightarrow X \right\}&,  &\mathcal{E} := \left\{ \varphi: E\rightarrow\mathbb{R} \right\},&
 \end{align*}
if $X\subseteq \mathbb{R}$.
For a parameter $r\in [0,1]$, and writing $d_i:=\sum_j \omega_{ij}$ for the \emph{degree} of vertex $i\in V$,  we define inner products on $\V$ and $\mathcal{E}$ (and hence inner product norms $\|\cdot\|_{\V}$ and $\|\cdot\|_{\mathcal{E}}$):
\begin{align*}\label{lapdef}
	&\ip{u,v} := \sum_{i\in V} u_i v_i d_i^r, &\langle\varphi,\phi\rangle_{\mathcal{E}}:=\frac{1}{2}\sum_{i,j\in V} \varphi_{ij} \phi_{ij}\omega_{ij}.&
\end{align*} 
Next, we introduce the graph variants of the gradient and Laplacian operators:
\[
	(\nabla u)_{ij}:=\begin{cases}u_j -u_i, & ij\in E,\\ 0, &\text{otherwise}, \end{cases} \qquad \text{and} \qquad (\Delta u)_i:=d_i^{-r}\sum_{j\in V}\omega_{ij}(u_i-u_j),
\]
where the graph Laplacian $\Delta$ is positive semi-definite and self-adjoint with respect to $\V$. As shown in \cite{vGGOB}, these operators are related via
$\ip{u,\Delta v} = \langle \nabla u, \nabla v \rangle_{\mathcal{E}}$.
We can interpret $\Delta$ as a matrix. Define $D := \operatorname{diag}(d)$ (i.e. $D_{ii}:=d_i$, and $D_{ij}:=0$ otherwise) to be the \emph{degree matrix}. Then writing $\omega$ for the matrix of weights $\omega_{ij}$ we get 
\[
\Delta:= D^{-r}(D-\omega).
\] The choice of $r$ is important. For $r=0$, $\Delta = D -\omega$ is the standard \emph{unnormalised (or combinatorial) Laplacian}. For $r=1$, $\Delta=I-D^{-1}\omega$ (where $I$ is the identity matrix) is  the \emph{random walk Laplacian}. There is also an important Laplacian not covered by this definition:  the \emph{symmetric normalised Laplacian} $\Delta_s := I-D^{-1/2}\omega D^{-1/2}$.  For image segmentation it is important to use a normalised Laplacian, see {\cite[\S2.3]{BF}}, so we shall henceforth take $r = 1$. 


\subsection{The graph Ginzburg--Landau functional}\label{subsec:GL}
In this paper, we shall use graph-based segmentation methods based on minimising the graph Ginzburg--Landau functional.	The basic form of this functional is
	\begin{equation*}
		\operatorname{GL}_{\varepsilon}(u) := \frac{1}{2}\left|\left|\nabla u\right|\right|^2_{\mathcal{E}} +\frac{1}{\varepsilon}\left\langle W\circ u,\mathbf{1}\right \rangle_{\V},
	\end{equation*}
	where $W$ is a double-well potential and $\varepsilon >0$ is a parameter. In particular, following Budd {\cite{Buddthesis}} we shall be taking $W$ to be the \emph{double-obstacle potential}: 
	\begin{equation*}
		W(x) := \begin{cases}
			\frac{1}{2}x(1-x), & \text{for } 0 \leq x \leq 1, \\
			\infty, & \text{otherwise.}  \end{cases}
	\end{equation*}
	 Furthermore, we define the \emph{graph Ginzburg\textendash Landau functional with fidelity} by 
	\bes 
	\operatorname{GL}_{\varepsilon,\mu,f}(u) := \frac{1}{2}\left|\left|\nabla u\right|\right|^2_{\mathcal{E}} +\frac{1}{\varepsilon}\left\langle W\circ u,\mathbf{1} \right \rangle_{\V} + \frac{1}{2}\ip{u-f, M(u-f)},
	\ees 
	where $M:=\operatorname{diag}(\mu)$ for $\mu\in\V_{[0,\infty)}$ the \emph{fidelity parameter} and $f \in \V_{[0,1]}$ is the \emph{reference}. We define $Z:=\operatorname{supp}(\mu)$, which we call the \emph{reference data}. 
	Note that $\mu_i$ paramaterises the strength of the fidelity to the reference at vertex $i$. We may assume without loss of generality that $f$ is supported on $Z$.

It is worth briefly describing why minimising $\fGL$ is a good way to segment an image. 
The first term penalises the segmentation $u$ if two vertices with a high edge weight are in different segments, encouraging the segmentation to group similar vertices together. The second term wants the segmentation to be binary. The third term penalises $u$ for disagreeing with an \emph{a priori} segmentation, propagating those \emph{a priori} labels to the rest of the vertices.

It will be useful for our joint reconstruction-segmentation scheme to redefine $\operatorname{GL}_{\varepsilon,\mu,f}$ as a function of both $u$ and $\omega$. A simple calculation gives that
\begin{equation*}
		\operatorname{GL}_{\varepsilon,\mu, f}(u,\omega) 
		= \frac{1}{2}\sum_{i,j\in V} \omega_{ij}(u_i-u_j)^2 +\frac{1}{2\varepsilon}\sum_{i,j\in V}  \omega_{ij}(W(u_i)+W(u_j)) 
		+ \frac{1}{4}\sum_{i,j\in V}\omega_{ij}\left(\mu_i(u_i-f_i)^2+\mu_j(u_j-f_j)^2  \right). 
\end{equation*}
Note that this is linear in $\omega$. We can therefore define $G_{\varepsilon,\mu,f}: \V \to \mathbb{R}^{V\times V}$ as
\begin{equation*}
	(G_{\varepsilon,\mu,f}(u))_{ij} = 
	\frac12(u_i-u_j)^2 + \frac{1}{2\varepsilon}(W(u_i)+W(u_j)) + \frac14\left(\mu_i (u_i-f_i)^2+\mu_j (u_j-f_j)^2\right),
\end{equation*}
such that
\[
\operatorname{GL}_{\varepsilon,\mu, f}(u,\omega) = \operatorname{tr}(G_{\varepsilon,\mu,f}(u)^T\omega)=:\langle G_{\varepsilon,\mu,f}(u), \omega\rangle_{F},
\]
where $\langle\cdot,\cdot\rangle_{F}$ denotes the Frobenius inner product.  
Furthermore, note that if $v_i := \frac12 u_i^2 + \frac{1}{2\varepsilon} W(u_i) + \frac14 \mu_i(u_i - f_i)^2$, then 
$
G_{\varepsilon,\mu,f}(u) = -uu^T + v\mathbf{1}^T + \mathbf{1}v^T$.

\subsection{Turning an image into a graph}\label{imtograph}

To represent an image as a graph, we first let our vertex set $V$ be the set of pixels in the image, and consider the image as a function $x : V\rightarrow \mathbb{R}^\ell$, where $\ell$ depends on whether the image is greyscale, RGB, or hyperspectral etc. 
To build our graph, we construct \emph{feature vectors}  $\mathcal{F}(x) =: z:V\rightarrow\mathbb{R}^q$ where $\mathcal{F}$ is the \emph{feature map} (which we shall assume to be linear). The philosophy behind these vectors is that vertices which are ``similar'' should have nearby feature vectors. What ``similar'' means is application-specific, e.g. \cite{VZ} incorporates texture into the features, \cite{BF,Birdspot} give other options,  and there has been recent interest in deep learning methods for constructing features, see e.g. \cite{AAR,Miller}.
Next, we construct the weights on the graph by defining $\omega_{ij}$ to be given by some \emph{similarity function} evaluated on $z_i$ and $z_j$. There are a number of standard choices for the similarity function, see e.g. \cite[\S2.2]{BF}. For our choices for feature map and similarity measure see \Cref{fmap} and \cref{Omega}, respectively.


\subsection{The Nystr\"om extension}\label{sec:Nys}

A key practical challenge is that $V$ is usually very large, and hence matrices such as the weight matrix $\omega\in\mathbb{R}^{V\times V}$ and $\Delta$ are much too large to store in memory. Instead, we shall compress these matrices using a technique called the Nystr\"om extension, first introduced in Nystr\"om \cite{Nys1} and developed for matrices in Fowlkes \emph{et al.} \cite{FBCM}. 
Consider an $N\times N$ symmetric matrix $A$, written in block form 
\[ A =
\begin{pmatrix}
	A_{XX} & A_{X X^c} \\ A_{X^c X} & A_{X^c X^c}
\end{pmatrix},
\]
where $X$ is the \emph{interpolation set}, with $|X|=:K\ll N$. Let $A_{XX} = U_X \Lambda U_X^T$, and $u_X^i$ be a column eigenvector of $U_X$ with eigenvalue $\lambda_i$. The idea of the Nystr\"om extension is to extend this eigenvector to a vector $((u_X^i)^T \:\:  (u^i_{X^c})^T)^T$ which is defined on all of $V$, using a quadrature rule. That is, 
$u_{X^c}^i$ is defined by 
$
\lambda_i  u^i_{X^c} = 
A_{X^cX}u_X^i
$,
which can be observed to resemble a quadrature rule for the eigenvalue problem $Au = \lambda_i u$.
Let $U_{X^c} := \begin{pmatrix}
	u^1_{X^c} & \cdots & u^K_{X^c}
\end{pmatrix}$. Then
$
U_{X^c}\Lambda = A_{{X^c}X} U_X 
$
and so (assuming that $A^{-1}_{XX}$ exists) $U_{X^c} = A_{{X^c}X}U_X \Lambda^{-1}$.
Finally, 
\begin{equation}
	\label{Nys}
	\begin{split}
		A \approx \begin{pmatrix} U_X \\  U_{X^c} \end{pmatrix} \Lambda \begin{pmatrix} U^T_X &  U^T_{X^c} 	 \end{pmatrix}
		= 
		\begin{pmatrix}
			A_{XX} & A^T_{{X^c}X} \\ A_{{X^c}X} & A_{{X^c}X}A_{XX}^{-1}A_{{X^c}X}^T
		\end{pmatrix}
	=
	 \begin{pmatrix}A_{XX} \\  A_{{X^c}X} \end{pmatrix} A_{XX}^{-1} \begin{pmatrix} A_{XX} &  A_{{X^c}X}^T 	 \end{pmatrix},
	\end{split}
\end{equation} 
where in the first equality we used that  $U_{X^c} = A_{{X^c}X}U_X \Lambda^{-1}$. 
The upshot of \cref{Nys} is that we only need to store and calculate with $A_{XX}$ and $A_{{X^c}X}$, which are much smaller than $A$. 
Also, \cref{Nys} yields an efficient way to approximate matrix-vector products $Av$.

\section{A joint-reconstruction-segmentation scheme on graphs}\label{sec:jrs}
\subsection{Set-up}\label{sec:setup}
We will begin by formally stating our reconstruction-segmentation task. 
\begin{prob}\label{RSprob}
	Let $x^*:Y\to\mathbb{R}^\ell$ be the image to be reconstructed and segmented. Let $y = \mathcal{T}(x^*) + e$ be \emph{observed data} where the \emph{forward model} $\mathcal{T}$ is differentiable and $e$ is a random variable describing observation error. Let $x_d:Z\to \mathbb{R}^\ell$ be an already reconstructed and segmented \emph{reference image} with \emph{a priori} segmentation $f:Z\to\{0,1\}$. Here $Y$ and $Z$ are disjoint finite sets. Given $y$, $\mathcal{T}$, $x_d$, and $f$, reconstruct $x\approx x^*$ and find $u:Y\cup Z\to\{0,1\}$ such that $u|_Y$ segments $x$ and $u|_Z$ is close to $f$. 
\end{prob}

Next, we must incorporate this into a graph framework, following \Cref{imtograph}. Let $V:= Y \cup Z$ be the vertex set of our graph, and let the edge set $E$ be given by $E:= \{ij \mid i,j\in V, i \neq j\}$. Let $N:=|Y|$ and $N_d := |Z|$. To encode the candidate reconstruction $x:Y\to\mathbb{R}^\ell$ and the reference image $x_d$ in the weights on this graph, we define feature maps $\mathcal{F}$ and $\mathcal{F}_d$, 
and feature vectors $z:Y\rightarrow \mathbb{R}^q$ and $z_{d}:Z\rightarrow \mathbb{R}^q$ by $z := \mathcal{F}(x)$ and $z_{d} := \mathcal{F}_d(x_d)$. Since $x_d$ and $\mathcal{F}_d$ are given, we hereafter treat $z_d$ as given. We then define the edge weights via $\omega = \Omega(z,z_d)$, where $\Omega(z,z_d)$ is given by (for $\mathbf{z}:=(z,z_d)$)
\be \label{Omega}
\Omega_{ij}(z,z_d) := e^{-\frac{\|\mathbf{z}_i - \mathbf{z}_j\|_F^2}{q\sigma^2}},
\ee
with $\|\cdot \|_F$ denoting the Frobenius norm.\footnote{This choice of edge weight function is not arbitrary. The fact that it has a particularly well-structured derivative will be used in \Cref{sec:g} and the fact that it is analytic will be used in \Cref{sec:convergence}. } The $q$ in the denominator averages over the $q$ components of $\mathbf{z}$ so that parameter choices for $\sigma$ generalise better. 

\begin{nb}
	The feature vectors $z$ and $z_d$ are defined so that $z$ does not depend on $x_d$ and $z_d$ does not depend on $x$. This is a simplification, since $x$ and $x_d$ might be different parts of the same image and hence one might want $z$ to partially depend on $x_d$. However, this simplification greatly aids in the following analysis, and in computation, as it means that the edge weights between vertices of $Z$ can be considered fixed and given.
\end{nb}
\subsection{The joint reconstruction-segmentation scheme}

To solve \Cref{RSprob}, we will employ a variational approach. We will consider our candidate reconstructions $x$ and segmentations $u$ to be candidate solutions to the following joint minimisation problem: 
\be\label{RSscheme1}
\min_{x\in\mathbb{R}^{N\times \ell}, u\in \V} \mathcal{R}(x) + \alpha \|\mathcal{T}(x) - y\|_F^2 + \beta 
\operatorname{GL}_{\varepsilon,\mu, f}(u,\Omega(\mathcal{F}(x),z_{d})),
\ee
where $\mathcal{R}$ is a convex regulariser, which following \Cref{primaldual} we shall assume can be written as $\mathcal{R}(x) = R(\mathcal{K}(x))$ for $\mathcal{K}$ a linear map and $R$ convex and  lower semicontinuous (l.s.c.) with convex conjugate $R^*$\footnote{\label{foot:R*}That is, $R^*(p):= \sup_{x'} \langle p,x'\rangle - R(x')$.} proper, convex, l.s.c., and non-negative. The first two terms in the objective functional are a standard Tikhonov reconstruction energy as in \cref{Tikhvar}, and the final Ginzburg--Landau term is the segmentation energy. As this problem (and related variational problems considered in this paper) are non-convex, by ``solving'' we will mean finding adequate local minimisers.  

To avoid needing to solve the difficult problem \cref{RSscheme1} directly, we will use the following alternating iterative scheme to approach solutions (where $\alpha,\beta,\eta_n,\nu_n$ are parameters):
\begin{subequations}\label{RSscheme2}
	\begin{align}
		\label{xupdate2}x_{n+1} &= \argmin_{x\in\mathbb{R}^{N\times \ell}}  \mathcal{R}(x) + \alpha \|\mathcal{T}(x) - y\|_F^2 + \beta\operatorname{GL}_{\varepsilon,\mu, f}(u_n,\Omega(\mathcal{F}(x),z_d)) 
		+\eta_n\|x-x_n\|_F^2,&& \\
		\label{uupdate2}u_{n+1} &= \argmin_{u\in\V}  \beta\operatorname{GL}_{\varepsilon,\mu, f}(u,\Omega(\mathcal{F}(x_{n+1}),z_d)) + \nu_n\|u|_Y-u_n|_Y\|_{\V}^2.&&
	\end{align}
\end{subequations}
We can understand this scheme intuitively as iterating the following steps:
\begin{enumerate}[I.]
	\item Given the current segmentation, update the reconstruction using the segmentation energy as an extra regulariser and the previous reconstruction as a momentum term.
	\item Given the current reconstruction, update the segmentation using the previous segmentation of the image to be reconstructed as a momentum term.
\end{enumerate}

\subsection{Initialisation}

The simplest initial reconstruction $x_0$ would be $x_0 := \mathcal{T}^+(y)$, where $\mathcal{T}^+$ is the (Moore--Penrose) pseudoinverse of $\mathcal{T}$ (see \cite[\S 5.5.2]{GVL}). However, in practice $\mathcal{T}^+(y)$ can be too poorly structured to give a good initial segmentation. Also, the pseudoinverse can be highly unstable \cite[\S 5.5.3]{GVL} and does not exist for non-linear $\mathcal{T}$. Thus, we favour initialising by applying a cheap and better behaved reconstruction method to $y$. 
The initial segmentation $u_0$ is constructed by segmenting $x_0$ via the SDIE methods to be described in \Cref{uupdatesec}.

\section{Solving \cref{xupdate2}}\label{xupdatesec}
We now describe how we compute (approximate) solutions to \cref{xupdate2}. This minimisation problem is highly computationally challenging, so we will simplify it by linearising \cref{xupdate2}. This reduces the problem to one which can be solved by standard methods. 
\subsection{Linearising \cref{xupdate2}}
The challenging term in \cref{xupdate2} is the Ginzburg--Landau energy term. Recall from \Cref{subsec:GL} that this can be written, 
\[
\beta\langle{G_n,\Omega(\mathcal{F}(x),z_d)}\rangle_F \simeq \underbrace{\beta\langle{(G_n)_{YY},\Omega_{YY}(\mathcal{F}(x))} \rangle_F}_{=:\mathscr{F}_1(\mathcal{F}(x))} +\underbrace{2\beta\langle{(G_n)_{YZ},\Omega_{YZ}(\mathcal{F}(x),z_d)} \rangle_F}_{=:\mathscr{F}_2(\mathcal{F}(x))},
\]
where for a matrix $A$, $A_{YZ}:= (A_{ij})_{i \in Y,j \in Z}$ and likewise for $A_{YY}$, and where
\be\label{Gn}
G_n := G_{\varepsilon,\mu,f}(u_n) = -u_nu_n^T + v_n\mathbf{1}^T + \mathbf{1}v_n^T,
\ee
for $v_n$ defined by $(v_n)_i:=\frac12 (u_n)_i^2 + \frac{1}{2\varepsilon}W((u_n)_i) + \frac14\mu_i((u_n)_i-f_i)^2$.
Let us assume that our candidate minimiser for \cref{xupdate2} is close to $x_n$ (this assumption will become more reasonable the higher the value of $\eta_n$ is). Then we can make the following approximation:
\begin{align*}
	\mathscr{F}_1(\mathcal{F}(x)) + \mathscr{F}_2(\mathcal{F}(x))
	&\approx \mathscr{F}_1(\mathcal{F}(x_n)) + \mathscr{F}_2(\mathcal{F}(x_n)) + \left\langle x - x_n, \nabla_x\mathscr{F}_1(\mathcal{F}(x_n)) + \nabla_x\mathscr{F}_2(\mathcal{F}(x_n)) \right\rangle \\
	&= \langle x, g_n \rangle + \text{constant terms (in $x$)},
\end{align*}
where $g_n:= \nabla_x\mathscr{F}_1(\mathcal{F}(x_n)) + \nabla_x\mathscr{F}_2(\mathcal{F}(x_n))$. We will describe how to compute $g_n$ in \Cref{sec:g}. Using this approximation, we can approximate \cref{xupdate2} by solving
\be \label{xupdate2a}
\argmin_{x\in \mathbb{R}^{N\times \ell}} \mathcal{R}(x) + \langle x,g_n \rangle_{F} + \alpha \|\mathcal{T}(x)-y\|_F^2 + \eta_n\|x-x_n\|_F^2. 
\ee
Defining $\tilde x_n := x_n - \frac{1}{2}\eta_n^{-1} g_n$, \cref{xupdate2a} is equivalent to 
\be \label{xupdate3}
\argmin_{x\in \mathbb{R}^{N\times \ell}}  \mathcal{R}(x) + \alpha \|\mathcal{T}(x)-y\|_F^2 + \eta_n\|x-\tilde x_n\|_F^2.
\ee
This is of the form of a standard variational image reconstruction problem \cref{Tikhvar}. To solve \cref{xupdate3}, we shall be employing an algorithm of Chambolle and Pock \cite{ChPock}, see \Cref{primaldual}.
\begin{nb}\label{note:linearT} Due to difficulties in employing the algorithms of \emph{\cite{ChPock}} for non-linear $\mathcal{T}$, we will henceforth take $\mathcal{T}$ to be linear (except in \Cref{sec:convergence}). The framework we describe in this paper is however applicable for general $\mathcal{T}$, so long as one is able to efficiently solve \cref{xupdate3} for that $\mathcal{T}$. 
\end{nb}

\subsection{The algorithm for \cref{xupdate2}}


We use \Cref{RSalg2} to approximately solve \cref{xupdate3}, thereby approximately solving \cref{xupdate2}.
\begin{algorithm}[h]
	\caption{Algorithm for solving the linearised \cref{xupdate2}.\label{RSalg2}}
	\begin{algorithmic}[1]
		\Function{ReconUpdate}{$x_n, u_n, y, \mathcal{T}, z_d, f, V, Y, Z, \mathcal{F}, q, \sigma, R, \mathcal{K}, W, \alpha, \beta, \eta_n, 
			K$}
			\\ \Comment{Computes $x_{n+1}$ solving \cref{xupdate3}. Note: assumes that $\mathcal{T}$ is linear }
		\State $v_n = \frac12(u_n)^2 + \frac{1}{2\varepsilon}W(u_n) + \frac14 \mu \odot (u_n - f)^2$ \Comment{Squaring elementwise} 
		\State $z_n = \mathcal{F}(x_n)$
		\State $w_1 = \texttt{CProd}(z_n,(z_n,z_d),u_n,v_n,\sigma,V,Y,Z,K) $ \Comment{See \Cref{Cz} }
		\State $w_2 = \texttt{CProd}(z_n,\mathbf{1}_V,u_n,v_n,\sigma,V,Y,Z,K) $
		\State $g_n = \frac{4\beta}{q\sigma^2}\mathcal{F}^*(w_1 - w_2 \odot z_n)$ 
		\State $\tilde x_n = x_n - \frac12 \eta^{-1}_n g_n $
		\State $\texttt{proxG}: (x,\delta t) \mapsto  \left((\delta t^{-1}+2\eta_n)I + 2\alpha \mathcal {T}^* \mathcal{T}\right)^{-1}\left(2\alpha\mathcal{T}^*(y)+2\eta_n\tilde x_n+x/\delta t\right) $ \Comment{See \cref{proxG}}
		\State $\texttt{proxRS}: (x,\delta t) \mapsto \operatorname{prox}_{\delta t R^*}(x)$ \Comment{Recall that $\mathcal{R}(x) = R(\mathcal{K}x)$}
		\State $x_{n+1} = \texttt{PrimalDual} (x_n,2\eta_n, \mathcal{K}, \mathcal{K}^*, \texttt{proxRS}, \texttt{proxG})$ \Comment{See \Cref{pdalg}}
		\State \textbf{return} $x_{n+1}$
		\EndFunction
	\end{algorithmic}
\end{algorithm}

\section{Solving \cref{uupdate2}}\label{uupdatesec}
After rescaling by $\beta^{-1}$, we rewrite the objective function in \cref{uupdate2} as
\begin{align*}
	&
	\operatorname{GL}_\varepsilon(u,\Omega(\mathcal{F}(x_{n+1}),z_d)) + \frac12\sum_{i\in V}d_i\mu_i(u_i -f_i)^2 +  \frac12 \frac{2\nu_n}{\beta}\|u|_Y-u_n|_Y\|_{\V}^2 \\
	&= \operatorname{GL}_\varepsilon(u,\Omega(\mathcal{F}(x_{n+1}),z_d)) + \frac12\sum_{i\in V}d_i\mu'_i(u_i -f'_i)^2 \\
	&= \operatorname{GL}_{\varepsilon,\mu',{f}'}(u,\Omega(\mathcal{F}(x_{n+1}),z_d)),
\end{align*}
where $\mu' := \mu +2\nu_n\beta^{-1}\chi_Y$ and $f' := f + u_n\odot\chi_Y$. We have used that $\mu|_Y = f|_Y = \mathbf{0}$. 

\subsection{The SDIE scheme for minimising $\operatorname{GL}_{\varepsilon,\mu',\tilde{f}'}$}\label{subsec:SDIE}
Following \cite{Budd3,Buddthesis}, in order to minimise $\operatorname{GL}_{\varepsilon,\mu',\tilde{f}'}$ we shall consider its \emph{Allen--Cahn gradient flow}:
\begin{align}
	\label{fACE2}
	&\varepsilon \frac{du}{dt}(t) + \varepsilon \Delta u(t) + \varepsilon M'(u(t) - f') +\frac{1}{2}\mathbf{1}-u(t)= \beta(t),  
\end{align}
where $M':=\operatorname{diag}(\mu')$ and $\beta(t)$ arises from the subdifferential of $W$, 
see \cite[\S 3.3.3]{Buddthesis} for details. 
Following \cite[\S 4]{Buddthesis}, we compute solutions to \cref{fACE2} via an SDIE 
scheme, defined by 
\be
\label{fSD}
\left(1- \frac{\tau}{\varepsilon} \right)u_{n+1} -\mathcal{S}_\tau u_n+\frac{\tau}{2\varepsilon}\mathbf{1} =\frac{\tau}{\varepsilon}\beta_{n+1},
\ee
for $\tau >0$ a time step, $\beta_{n+1} $ a subdifferential term (see \cite[Definition 4.1.1]{Buddthesis}), 
and $\mathcal{S}_\tau$ the solution operator for fidelity-forced graph diffusion, described by the following theorem. 
\begin{thm}[\text{See \cite[Theorem 3.2.6]{Buddthesis}}]\label{fdiffusethm}
	The \emph{fidelity-forced graph diffusion} of $u_0\in \V$ is
	\begin{align}\label{fdiffuse}
		&\frac{du}{dt}(t) = -\Delta u(t)-M'(u(t)-f'),  &u(0) = u_0.
	\end{align}
	For $t,x\in \mathbb{R}$, let $F_t(x):= (1-e^{-tx})/x$, and extend $F_t$ to (real) matrix inputs via its Taylor series.
	Then, for any given $u_0\in\V$, \cref{fdiffuse} has a unique solution, given by the map:
	\bes
	u(t) = \mathcal{S}_t u_0 := e^{-t(\Delta + M')}u_0 + F_t(\Delta + M') M' f'.
	\ees
\end{thm}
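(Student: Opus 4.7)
The plan is to recognise \cref{fdiffuse} as a non-homogeneous linear ODE with constant coefficients on the finite-dimensional space $\V$, so that existence and uniqueness reduce to standard ODE theory, and the explicit formula can then be verified by direct differentiation.

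First, I would rewrite the system in the form $\dot u(t) = -Au(t) + b$ with $A := \Delta + M'$ and $b := M'f'$, both fixed. The right-hand side is affine-linear, hence globally Lipschitz on the finite-dimensional space $\V$, so the Picard--Lindel\"of theorem yields a unique solution defined for all $t \in \mathbb{R}$ with any initial datum $u_0$. This settles existence and uniqueness before the formula enters the picture.

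To verify that $u(t) := e^{-tA}u_0 + F_t(A)M'f'$ is this solution, I would first note that $F_0$ vanishes as a power series, so $u(0) = u_0$. Recognising the Taylor-series definition of $F_t$ as $F_t(A) = \int_0^t e^{-sA}\,ds$, differentiation under the integral gives $\tfrac{d}{dt}F_t(A) = e^{-tA}$, and so $\dot u(t) = -Ae^{-tA}u_0 + e^{-tA}b$. The key algebraic identity is
\[
A F_t(A) = I - e^{-tA},
\]
which follows term-by-term from the Taylor series of $F_t$ (or equivalently by integrating $A e^{-sA} = -\tfrac{d}{ds}e^{-sA}$ from $0$ to $t$). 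Using this, $-Au(t) + b = -Ae^{-tA}u_0 - (I - e^{-tA})b + b = -Ae^{-tA}u_0 + e^{-tA}b$, which matches $\dot u(t)$. Combined with uniqueness, this proves the representation formula.

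The only mild subtlety, which I do not expect to be a real obstacle, is that $A = \Delta + M'$ need not be invertible (for instance if $M' = 0$, the constant vector $\mathbf{1}$ lies in $\ker \Delta$), so one cannot \emph{a priori} write $F_t(A) = A^{-1}(I - e^{-tA})$. This is precisely why the theorem defines $F_t$ via its Taylor series: that series is entire, unambiguously defines $F_t(A)$ on any finite-dimensional operator, and makes the identity $AF_t(A) = I - e^{-tA}$ and the differentiation step hold universally, independently of spectral assumptions on $A$.
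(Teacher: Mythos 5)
Your proof is correct, and it is essentially the same argument as the cited source (the paper itself defers the proof to \cite[Theorem 3.2.6]{Buddthesis}): rewrite \cref{fdiffuse} as a constant-coefficient affine ODE, get existence/uniqueness from Picard--Lindel\"of, and verify the Duhamel formula using $F_t(A)=\int_0^t e^{-sA}\,ds$ and $AF_t(A)=I-e^{-tA}$, with the Taylor-series definition of $F_t$ correctly handling a possibly singular $\Delta+M'$.
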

The solution to \cref{fSD} is then given by the following theorem. 
\begin{thm}[\text{See \cite[Theorem 4.2.1]{Buddthesis}}]\label{fSDsolnthm}
	For $\tau\in[0,\varepsilon)$, \cref{fSD} has unique solution 
\begin{subequations}\label{fSDsoln}
\begin{equation} 
	(u_{n+1})_i=\begin{cases}
		0, &\text{if } \left(\mathcal{S}_\tau u_n\right)_i < \frac{\tau}{2\varepsilon},
		\\
		\frac{1}{2} + \frac{\left( \mathcal{S}_\tau u_n\right)_i - 1/2}{1-\frac{\tau}{\varepsilon}}
		, &\text{if }\frac{\tau}{2\varepsilon}
		\leq\left(\mathcal{S}_\tau u_n\right)_i < 1-\frac{\tau}{2\varepsilon},
		\\
		1, &\text{if } \left(\mathcal{S}_\tau u_n\right)_i \geq 1-\frac{\tau}{2\varepsilon}.
	\end{cases}
\end{equation}
For $\tau = \varepsilon$, \cref{fSD} has solutions 
\be\label{fSDmbo}
(u_{n+1})_i \in \begin{cases}
	\{1\}, &(\mathcal{S}_{\tau} u_n)_i > 1/2,\\
	[0,1], &(\mathcal{S}_{\tau} u_n)_i = 1/2,\\
	\{0\}, &(\mathcal{S}_{\tau} u_n)_i < 1/2.
\end{cases} \ee
\end{subequations}
\end{thm}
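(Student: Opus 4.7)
The plan is to solve (\ref{fSD}) pointwise at each vertex, exploiting the fact that both $\mathcal{S}_\tau u_n$ and the subdifferential term $\beta_{n+1}$ act componentwise (the latter being the subdifferential of the double-obstacle potential $W$ applied pointwise, with sign convention inherited from (\ref{fACE2}) as detailed in \cite[\S 3.3.3]{Buddthesis}). Fixing $i \in V$ and writing $s_i := (\mathcal{S}_\tau u_n)_i$, the task reduces to finding $(u_{n+1})_i \in [0,1]$ together with a $(\beta_{n+1})_i$ in the admissible subdifferential set at $(u_{n+1})_i$ satisfying the scalar equation
\[
\left(1 - \tfrac{\tau}{\varepsilon}\right)(u_{n+1})_i - s_i + \tfrac{\tau}{2\varepsilon} = \tfrac{\tau}{\varepsilon}(\beta_{n+1})_i.
\]

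For $\tau \in [0,\varepsilon)$, I would carry out a three-way case split on $(u_{n+1})_i$. In the interior case $(u_{n+1})_i \in (0,1)$, the subdifferential vanishes and the linear equation has the unique solution $\tfrac{1}{2} + (s_i - \tfrac{1}{2})/(1 - \tau/\varepsilon)$, which lies in $(0,1)$ precisely when $\tau/(2\varepsilon) < s_i < 1 - \tau/(2\varepsilon)$. In the left-boundary case $(u_{n+1})_i = 0$, the equation forces $(\beta_{n+1})_i = (\varepsilon/\tau)(\tau/(2\varepsilon) - s_i)$, and the sign requirement on $\beta$ rearranges to $s_i \leq \tau/(2\varepsilon)$; an analogous computation for $(u_{n+1})_i = 1$ gives $s_i \geq 1 - \tau/(2\varepsilon)$. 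These three ranges of $s_i$ partition $\mathbb{R}$ and agree at the breakpoints, establishing existence. Uniqueness follows because, since $1 - \tau/\varepsilon > 0$, the set-valued map $u \mapsto (1 - \tau/\varepsilon)u - (\tau/\varepsilon)\beta$ with $\beta$ ranging over the (maximal monotone) subdifferential at $u$ is strictly monotone, hence injective across branches. Assembling the three branches yields exactly (\ref{fSDsoln}).

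For $\tau = \varepsilon$, the coefficient of $(u_{n+1})_i$ vanishes and the equation collapses to $(\beta_{n+1})_i = 1/2 - s_i$. The value of $(u_{n+1})_i$ is then pinned down only indirectly, via the requirement that $(\beta_{n+1})_i$ belong to the subdifferential at $(u_{n+1})_i$: a strictly positive $\beta$ forces the left obstacle, a strictly negative $\beta$ forces the right, and $\beta = 0$ is consistent with every point in $[0,1]$. Translating back through $s_i$ gives (\ref{fSDmbo}), which is genuinely set-valued precisely at the threshold $s_i = 1/2$.

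The main obstacle is bookkeeping the sign of $\beta_{n+1}$ against the subdifferential of $W$ inherited from (\ref{fACE2}): the correspondence between the boundary cases and the inequalities on $s_i$ hinges on this, and if the convention is flipped then the three ranges stop partitioning $\mathbb{R}$ and one gets nonexistence instead of uniqueness. Once the convention is fixed, the pointwise case analysis and the strict-monotonicity uniqueness argument are routine, and the degeneracy at $\tau = \varepsilon$ is precisely what recovers the graph-MBO character of the scheme in that limit, responsible for the set-valuedness there.
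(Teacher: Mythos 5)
Your proposal is correct and follows essentially the same route as the proof the paper cites (Theorem 4.2.1 of Budd's thesis): reduce \cref{fSD} to a scalar equation at each vertex, split on whether $(u_{n+1})_i$ is interior or at an obstacle, use the sign constraints on $\beta_{n+1}$ to obtain the thresholds (with monotonicity giving uniqueness for $\tau<\varepsilon$), and observe the degenerate $\tau=\varepsilon$ case yields the set-valued MBO thresholding. The only cosmetic point is that your boundary-case formula divides by $\tau$, so the trivial case $\tau=0$ (where the scheme reduces to $u_{n+1}=u_n$) should be noted separately.
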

\begin{nb}
The $\tau = \varepsilon$ special case described by \cref{fSDmbo} is the \emph{graph MBO scheme}, which has seen widespread use in image segmentation, pioneered by Merkurjev \emph{et al.} \emph{\cite{MKB}}. 
\end{nb}

We describe how to compute this SDIE scheme in \Cref{sec:computeSDIE}.

\subsection{The algorithm for \cref{uupdate2}}
We summarise the above as \Cref{uupdatealg}.
\begin{algorithm}[h]
	\caption{Algorithm for solving \cref{uupdate2} using the SDIE scheme. \label{uupdatealg}}
	\begin{algorithmic}[1]
		\Function{SegUpdate}{$u_{n},x_{n+1},z_d,\mathcal{F},q,\sigma,V,Y,Z,K,\delta,\beta,\nu_n,\tau,\varepsilon,\mu,f,k_s$} \Comment{Computes the minimiser  \\ \hfill of \cref{uupdate2} by computing an SDIE sequence as in \Cref{sec:computeSDIE}
		} 
		\State $(\mu',f') = (\mu + 2 \nu_n \beta^{-1} \chi_Y,f + u_n \odot \chi_Y)$
		\State $\omega: ij \mapsto \Omega_{ij}(z,z_d,q,\sigma)$ \Comment{Defined as in \cref{Omega}}
		\State $[U_1,\Lambda,U_2] = \texttt{Nystr\"omQR}(\omega,V,Z,K/2,K/2)$ \Comment{See \Cref{nysQR}}
		\State $ F: x \mapsto  \left(-U_1\Lambda (U_2^T x) -\mu' \odot (x - f') \right)$ 
		\State $\hat v = \texttt{ode\_solver}(F,[0,\tau],\mathbf{0})$ \Comment{Solves $\hat v'(t)=F(\hat v)$ on $[0,\tau]$, $\hat v(0) = \mathbf{0}$, e.g. as in \cite{MKB}} 
		\State $ b = \hat v(\tau)$ \Comment{$b:=F_\tau(\Delta + M')M'f'$}
		\State $\Sigma = I_K - \Lambda$ 
			\State $(a_1,a_2,a_3) = \left( \operatorname{exp}(-\tau/k_s (\mu'+\mathbf{1})),  \operatorname{exp}(\tau/k_s \operatorname{diag}(\Sigma))-\mathbf{1}_K, \operatorname{exp}(-\frac12 \tau/k_s (\mu'+\mathbf{1})) \right)$ \Comment{See \cref{eq:Strang}}
		\State $u^0 = \frac12 \chi_Y + f $ \Comment{As an example initial condition}
		\State $ m=0$ 
		\While{$\|u^m-u^{m-1}\|_2^2/\|u^m\|_2^2\geq\delta$}
		\State $v = u^m$ 
		\For{$r = 1$ to $k_s$}
		\State $v =a_1 \odot v + a_3 \odot \left( U_1\left( a_2 \odot \left(U_2^T\left(a_3 \odot v\right) \right)\right) \right)$ \Comment{Strang formula iteration \cref{eq:Strang}}
		\EndFor
		\State $v = v + b$ \Comment{Approximates $v = \mathcal{S}_\tau u^m$}
		\State $V_1 =\{i\in V\mid v_i \in [\tau/2\varepsilon, 1 - \tau/2\varepsilon)\}$
		\State $V_2 = \{i\in V\mid v_i \geq 1 - \tau/2\varepsilon\}$
		\State $u^{m+1}= (1-\tau/\varepsilon)^{-1}(v - \tau/2\varepsilon \mathbf{1})\odot\chi_{V_1} + \chi_{V_2}$ \Comment{
				Applies the \cref{fSDsoln} thresholding}
		\State $m = m +1$ 
		\EndWhile
		\State \textbf{return} $u^m$
		\EndFunction 
	\end{algorithmic}
\end{algorithm}
\section{The full pipeline} \label{sec:pipeline}
We summarise the full pipeline as \Cref{RSalg}. 

\begin{algorithm}[h]
	\caption{Graph-based joint reconstruction-segmentation algorithm using \cref{RSscheme2}.\label{RSalg}}
	\begin{algorithmic}[1]
		\Function{JointRecSeg}{$y, \mathcal{T}, z_d, f, V, Y, Z, \mathcal{F}, q, \sigma, R, \mathcal{K}, W, \alpha, \beta, \delta, \eta_n, \nu_n, \tau, \varepsilon, \mu, K, N, k_s $}\\\hfill\Comment{Computes the first $N$ iterates of \cref{RSscheme2}  }
		\State $ x_0 = \texttt{cheap\_reconstruction}(y,\mathcal{T})$ \Comment{Initial cheap reconstruction}
		\State \parbox[t]{\dimexpr\linewidth-\algorithmicindent-0.0\linewidth}{$ u_0 = \texttt{SegUpdate}(f,x_0,z_d,\mathcal{F},q,\sigma,V,Y,Z,K,\delta,1,0,\tau,\varepsilon,\mu,f,k_s)$ \\ $ $ \strut \Comment{Initial SDIE segmentation, see \Cref{uupdatealg}}}
		\For{$n = 0$ to $N-1$} \Comment{The iterations of \cref{RSscheme2}}
			\State \parbox[t]{\dimexpr\linewidth-\algorithmicindent-0.04\linewidth}{$x_{n+1} = \texttt{ReconUpdate}(x_n, u_n, y, \mathcal{T}, z_d, f, V, Y, Z, \mathcal{F}, q, \sigma, R, \mathcal{K}, W, \alpha, \beta, \eta_n, K)$ \\ $ $ \strut \Comment{Solves \cref{xupdate2}, see \Cref{RSalg2}}} 
			\State \parbox[t]{\dimexpr\linewidth-\algorithmicindent-0.04\linewidth}{$u_{n+1} = \texttt{SegUpdate}(u_{n},x_{n+1},z_d,\mathcal{F},q,\sigma,V,Y,Z,K,\delta,\beta,\nu_n,\tau,\varepsilon,\mu,f,k_s)$ 
			\Comment{Solves \cref{uupdate2}}}
			\EndFor
			\State \textbf{return} $\{x_n\}_{n=0}^N, \{u_n\}_{n=0}^N$
		\EndFunction
	\end{algorithmic}
\end{algorithm}
\section{Convergence analysis}\label{sec:convergence}

In this section, we will show that \cref{RSscheme2} converges to critical points of \cref{RSscheme1}, using the theory from Attouch \emph{et al.} \cite{attouch2010proximal}. From \Cref{note:linearT} we recall that in this section we do not require $\mathcal{T}$ to be linear. We first rewrite \cref{RSscheme1} abstractly:
\bes 
\min_{x \in \mathcal{X},u\in \V} \mathscr{F}(x) + \mathscr{G}(u,x) + \mathscr{H}(u) =: \mathcal{J}(u,x),
\ees
where $\mathcal{X}:=\mathbb{R}^{N\times \ell}$, \begin{align*} \mathscr{F}(x) := \mathcal{R}(x) + \alpha\|\mathcal{T}(x) - y\|_F^2, && \mathscr{G}(u,x):=\beta \langle -uu^T + \mathbf{1}v^T  + v\mathbf{1}^T,\Omega(\mathcal{F}(x),z_d)\rangle_F,\end{align*}
(where $v_i:= \frac12 u_i^2 + \frac{1}{4\varepsilon}u_i(1-u_i) + \frac14 \mu_i(u_i-f_i)^2$), and $\mathscr{H}(u) := 0$ if $u \in \V_{[0,1]}$ and $\mathscr{H}(u):= \infty$ otherwise.
\begin{nb} Both $\mathscr{F}$ and $\mathscr{H}$ are proper and l.s.c., and $\mathscr{G}$ is $C^\infty$ (indeed, analytic), so \emph{\cite[Assumption \mH]{attouch2010proximal}} is satisfied. Furthermore, $\mathscr{G}(u,x) + \mathscr{H}(u) = \fGL(u,\Omega(\mathcal{F}(x),z_d))$. 
	\end{nb}
Then \cref{RSscheme2} can be written as 
\begin{subequations}\label{RSscheme4}
	\begin{align}
		x_{n+1} &= \argmin_{x \in \mathcal{X}} \mathscr{F}(x) + \mathscr{G}(u_n,x) + \eta_n\|x- x_n\|_F^2,\\
		\label{subeq:uupdate} u_{n+1} &= \argmin_{u \in \V} \mathscr{G}(u,x_{n+1}) +  \mathscr{H}(u) + \nu_n \|u|_Y - u_n|_Y\|_{\V}^2,
	\end{align}
\end{subequations}
and our partially linearised iterative scheme can be written as 
\begin{subequations}\label{RSscheme5}
	\begin{align}
		\tilde x_n &= x_n - \frac{1}{2}\eta_n^{-1}\nabla_x\mathscr{G}(u_n,x_n),\\
		x_{n+1} &= \argmin_{x \in \mathcal{X}} \mathscr{F}(x) + \eta_n\|x-\tilde x_n\|_F^2,\\
		u_{n+1} &= \argmin_{u \in \V} \mathscr{G}(u,x_{n+1}) +  \mathscr{H}(u) + \nu_n \|u|_Y - u_n|_Y\|_{\V}^2.
	\end{align}
\end{subequations}
However, the presence of the {\it semi}-norm $\|\cdot|_Y \|_{\V}$ in \cref{subeq:uupdate} is an obstacle to the deployment of the theory from \cite{attouch2010proximal}. Hence, we will make an assumption that for all $n$, $u_n|_Z = f$. In practice, we have observed that this approximately holds, and furthermore the larger the value of $\mu$ the more closely this will hold. Thus, defining $\V^f:=\{u\in \V\mid u|_Z = f\}$ and $\V_{[0,1]}^f:=\{u\in \V_{[0,1]}\mid u|_Z = f\}$, under this assumption \cref{RSscheme4} becomes:
\begin{subequations}\label{RSscheme6}
	\begin{align}
		x_{n+1} &= \argmin_{x \in \mathcal{X}} \mathscr{F}(x) + \mathscr{G}(u_n,x) + \eta_n\|x- x_n\|_F^2,\\
		 u_{n+1} &= \argmin_{u \in \V} \mathscr{G}(u,x_{n+1}) +  \mathscr{H}^f(u) + \nu_n \|u - u_n\|_{\V}^2,
	\end{align}
\end{subequations} 
where $\mathscr{H}^f(u) := 0$ if $ u \in \V^f_{[0,1]}$ and $\mathscr{H}^f(u) :=\infty$ otherwise. Let $\mathcal{J}^f(u,x):=  \mathscr{F}(x) + \mathscr{G}(u,x) + \mathscr{H}^f(u)$, which is equal to $\mathcal{J}(u,x)$ if $u \in \V^f$.
We begin by making some key definitions.\footnote{With respect to the question of what these definitions are for, we ask the reader to bear with us, with the hope that within a page or two their purpose will become clearer.}
\begin{mydef}[Kurdyka--Łojasiewicz property]
	A proper l.s.c. function $g:\mathbb{R}^n \to (-\infty,\infty]$ has the \emph{Kurdyka--Łojasiewicz property} at $\bar z \in \operatorname{dom} \partial g$\footnote{We denote by $\operatorname{dom} g$ the set of $z$ such that $g(z)< \infty$, and by $\operatorname{dom} \partial g$ the set of $z\in \operatorname{dom} g$ such that the (limiting) subdifferential of $g$ at $z$, $\partial g(z)$ (defined in \cite[Definition 2.1]{attouch2010proximal}), is non-empty. } if there exist $\eta \in (0,\infty]$, a neighbourhood $U$ of $\bar z$, and a continuous concave function $\varphi:[0,\eta) \to [0,\infty)$, such that 
	\begin{itemize}
		\item $\varphi$ is $C^1$ with $\varphi(0)=0$ and $\varphi' >0$ on $(0,\eta)$, and
		\item for all $z \in U$ such that $g(\bar z) < g(z) < g(\bar z)+ \eta$, the  \emph{Kurdyka--Łojasiewicz inequality} holds:
		\[
		\varphi'(g(z) - g(\bar z))\operatorname{dist}(\mathbf{0},\partial g(z)) \geq 1. 
		\]
			\end{itemize}
		If $\varphi(s) := cs^{1-\theta}$ is a valid concave function for the above with $c > 0$ and $\theta \in [0,1)$, then we will say that $g$ has the  \emph{Kurdyka--Łojasiewicz property with exponent $\theta$} at $\bar z$. 
	\end{mydef}
\begin{mydef}[Semi-analyticity and sub-analyticity]
Following e.g. Łojasiewicz \emph{\cite{lojasiewicz1964}}, we define $A\subseteq \mathbb{R}^n$ to be a \emph{semi-analytic set} if for all $z^* \in \mathbb{R}^n$ there exists a neighbourhood $U$ containing $z^*$ and a finite collection of analytic functions $(a_{ij},b_{ij})$ such that 
	\[
	A \cap U = \bigcup_i \bigcap_j \{ z \in U \mid a_{ij}(z) = 0 \text{ and } b_{ij}(z)>0\}.
	\]
	Following Hironaka \emph{\cite{hironaka1973subanalytic}}, we define $A$ to be a \emph{sub-analytic set} if for all $z^* \in \mathbb{R}^n$ there exists a neighbourhood $U$ containing $z^*$, $m \in \mathbb{N}$, and a bounded semi-analytic set $B\subset \mathbb{R}^{n+m}$ such that  
		\[
	A \cap U =  \{ z \in \mathbb{R}^n \mid \exists y \in \mathbb{R}^m \text{ such that } (z,y) \in B \}.
	\]
	We define $g:\mathbb{R}^n \to (-\infty,\infty]$ to be a \emph{semi-analytic function} if its graph $\operatorname{Gr} g := \{(z,z')\in \mathbb{R}^n \times \mathbb{R} \mid z'=g(z)\}$ is a semi-analytic set, and $g$ to be a \emph{sub-analytic function} if its graph is a sub-analytic set. Both of these collections of sets are closed under elementary set operations.
	\end{mydef}
We collect some key results regarding these definitions in the following lemma. 
\begin{lemma}\label{lem:subany}\begin{enumerate}[i.]
		\item If $g$ is proper and l.s.c., and $\bar z\in \operatorname{dom} g$ with $\mathbf{0} \notin \partial g(\bar{z})$, then for all $\theta \in [0,1)$, $g$ has the {Kurdyka--Łojasiewicz property with exponent $\theta$} at $\bar z$.
		\item If $g$ is proper and sub-analytic, $\operatorname{dom} g$ is closed, and $g$ is continuous on its domain, then for all $\bar z \in \operatorname{dom} g$ with $\mathbf{0} \in \partial g(\bar z)$, there exists $\theta \in [0,1)$ such that $g$ has the {Kurdyka--Łojasiewicz property with exponent $\theta$} at $\bar z$.
		\item If $g:\mathbb{R}^n \to (-\infty,\infty]$ is sub-analytic and $h:\mathbb{R}^n \to \mathbb{R}$ is analytic, then $g+h$ is sub-analytic.
		\item If $g:\V^f \times \mathcal{X} \to (-\infty,\infty]$ is sub-analytic, then $h:(u,x) \mapsto g(u,x) + \mathscr{H}^f(u)$ is sub-analytic.
	\end{enumerate}
	\end{lemma}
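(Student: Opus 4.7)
The plan is to dispatch the four parts separately. Parts (i) and (ii) are statements about the Kurdyka--Łojasiewicz property that reduce to known results in variational analysis, while parts (iii) and (iv) are closure properties of sub-analyticity that follow directly from the definitions given.

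For part (i), the key observation is that the limiting subdifferential has a closed graph, in the sense that if $z_n \to \bar z$, $g(z_n)\to g(\bar z)$, and $v_n \in \partial g(z_n)$ with $v_n \to v$, then $v \in \partial g(\bar z)$. Therefore the hypothesis $\mathbf{0} \notin \partial g(\bar z)$ implies $\operatorname{dist}(\mathbf{0},\partial g(z)) \geq \delta > 0$ uniformly for some $\delta$ on a suitable neighbourhood $U$ of $\bar z$, possibly shrunk so that $g$ is close to $g(\bar z)$ on $U$. Setting $\varphi(s) := cs^{1-\theta}$ gives $\varphi'(s) = c(1-\theta)s^{-\theta}$, which blows up as $s \to 0^+$, so for $c$ large enough and $\eta$ small enough the KL inequality $\varphi'(g(z) - g(\bar z))\,\delta \geq 1$ holds on the relevant sub-level set. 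Part (ii) is the theorem of Bolte, Daniilidis, and Lewis that every proper lower semicontinuous sub-analytic function with closed domain, continuous on its domain, has the KL property with some exponent at every point of $\operatorname{dom}\partial g$; we would simply cite their result.

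For part (iii), define $\Phi:\mathbb{R}^{n+1}\to\mathbb{R}^{n+1}$ by $\Phi(z,t) := (z,t+h(z))$. Since $h$ is analytic, $\Phi$ is an analytic bijection with analytic inverse $(z,t)\mapsto (z,t-h(z))$, hence a homeomorphism and in particular a proper map. A direct computation yields $\operatorname{Gr}(g+h) = \Phi(\operatorname{Gr}(g))$, and since the image of a sub-analytic set under a proper analytic map is sub-analytic (a standard consequence of Hironaka's definition), we conclude that $g+h$ is sub-analytic. For part (iv), observe that $\V^f_{[0,1]}$ is a semi-analytic (indeed semi-algebraic) subset of $\V$, cut out by the polynomial inequalities $0\leq u_i \leq 1$ together with the affine equalities $u_i = f_i$ for $i\in Z$. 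Extending $g$ to $\V \times \mathcal{X}$ by $+\infty$ off $\V^f \times \mathcal{X}$ does not alter its graph, so viewing $\operatorname{Gr}(g) \subseteq \V \times \mathcal{X} \times \mathbb{R}$ we have
\[
\operatorname{Gr}(h) \;=\; \operatorname{Gr}(g)\,\cap\,\bigl(\V^f_{[0,1]}\times \mathcal{X}\times \mathbb{R}\bigr),
\]
which is the intersection of two sub-analytic sets and hence sub-analytic.

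The main obstacles are bibliographic rather than technical. For parts (i) and (ii) one must verify that the notion of limiting subdifferential used in \cite{attouch2010proximal} enjoys the closed-graph property and falls within the scope of the Bolte--Daniilidis--Lewis theorem; for part (iii) one must appeal to the correct preservation-under-proper-analytic-map result for sub-analytic sets (as encoded in the Hironaka-style definition adopted in the paper). These checks should be routine but require locating the precise statements in the literature.
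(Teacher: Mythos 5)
Your proposal is correct, and for parts (ii) and (iv) it is essentially the paper's proof: (ii) is the citation of Bolte--Daniilidis--Lewis, and (iv) is exactly the identity $\operatorname{Gr}(h) = \operatorname{Gr}(g)\cap(\V^f_{[0,1]}\times\mathcal{X}\times\mathbb{R})$ plus semi-analyticity of $\V^f_{[0,1]}$ and closure of sub-analytic sets under intersection. For part (i) the paper simply cites Li and Pong, whereas you unpack that lemma: closedness of the graph of the limiting subdifferential forces $\operatorname{dist}(\mathbf{0},\partial g(z))\geq\delta>0$ on the relevant set, after which $\varphi(s)=cs^{1-\theta}$ works for any $\theta$; this is sound, but note that one cannot ``shrink $U$ so that $g$ is close to $g(\bar z)$'' (only l.s.c.\ is assumed) --- rather, the restriction $g(\bar z)<g(z)<g(\bar z)+\eta$ built into the KL definition is what guarantees $g(z_n)\to g(\bar z)$ along a hypothetical bad sequence, and your argument should be read that way. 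The genuine divergence is part (iii): the paper gives a local, hands-on argument, taking the bounded semi-analytic witness $B$ for $\operatorname{Gr} g$ near $(z^*,w^*-h(z^*))$ and shearing it to $B'=\{(z,w+h(z),y)\}$, which stays bounded and semi-analytic because $h$ is analytic; you instead observe globally that $\operatorname{Gr}(g+h)=\Phi(\operatorname{Gr} g)$ for the analytic homeomorphism $\Phi(z,t)=(z,t+h(z))$ and invoke preservation of sub-analyticity under proper analytic maps. Your route is shorter and conceptually clean, but it leans on a classical yet nontrivial theorem of sub-analytic geometry, while the paper's local construction is self-contained given only the Hironaka-style definition stated in the text; either is acceptable.
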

\begin{proof}\begin{enumerate}[i.]
		\item Proved in Li and Pong \cite[Lemma 2.1]{KLcalculus}.
		\item Proved in Bolte \emph{et al.} \cite[Theorem 3.1]{Bolte2007}.
		\item Fix $(z^*,w^*) \in \mathbb{R}^n \times \mathbb{R}$. Since $g$ is sub-analytic, there exists $U$ containing $(z^*,w^*-h(z^*))$ and a bounded semi-analytic set $B\subset \mathbb{R}^{n+1+m}$ such that 
			\[
		\operatorname{Gr} g  \cap U =  \{ (z,w) \in \mathbb{R}^n \times \mathbb{R} \mid \exists y \in \mathbb{R}^m \text{ such that } (z,w,y) \in B \}.
		\]
		Since $h$ is continuous, there exists a neighbourhood $V$ containing $(z^*,w^*)$ such that for all $(z,w) \in \operatorname{Gr}( g +h) \cap V$, $(z,w-h(z))\in  \operatorname{Gr} g \cap U$, and hence $(z,w-h(z),y) \in B$ for some $y\in\mathbb{R}^m$. Let $B':= \{(z,w+h(z),y)\mid (z,w,y) \in B\}$. Since $h$ is analytic, $B'$ is a bounded semi-analytic set, and for all  $(z,w) \in \operatorname{Gr}( g +h) \cap V$, there exists $y\in \mathbb{R}^m$ such that $(z,w,y)\in B'$. It follows that $g+h$ is sub-analytic. 
		\item $\operatorname{Gr}h = \{(u,x,y) \mid u \in \V_{[0,1]}^f \text{ and } (u,x,y) \in \operatorname{Gr} g \} = (\V_{[0,1]}^f\times \mathcal{X}\times \mathbb{R}) \cap \operatorname{Gr} g$. It is simple to check that $\V_{[0,1]}^f$, $\mathcal{X}$, and $\mathbb{R}$ are semi-analytic, and hence sub-analytic, and therefore if $\operatorname{Gr} g$ is sub-analytic then so is $\operatorname{Gr} h$.
	\end{enumerate}
\end{proof}
\begin{ass}\label{ass:R}
	Suppose that $\mathcal{R}$ is sub-analytic, continuous on its domain, bounded below, and $\operatorname{dom}\mathcal{R}$ is closed. Suppose also that $\mathcal{T}$ is analytic and that $\mathscr{F}(x) \to \infty$ as $\|x\|_F \to \infty$. 
	\end{ass}
\begin{nb}\label{note:assumptionsRT}
Examples of $\mathcal{R}$ satisfying this assumption are: $\mathcal{R}(x):=\|A x\|_1$ (commonly used in compressed sensing, see Adcock and Hansen \emph{\cite{Hansen}}), where $A$ is any matrix, and $\mathcal{R}$ given by a feedforward neural network with a ReLU activation function (commonly used as regularisers, see e.g. Arridge \emph{et al.} \emph{\cite{Arridge2019}}); see \Cref{thm:app1} for proofs. 
Examples of $\mathscr{F}$ satisfying the assumption are when $\mathcal{T}$ is an invertible linear map or $\mathcal{R}$ is coercive.
\end{nb}
	\begin{thm}[\text{See \cite[Lemma 3.1]{attouch2010proximal}}] \label{thm:RSthm}
	If \Cref{ass:R} holds and, for some $0<a<b$ and all $n$, $\eta_n,\nu_n \in (a,b)$, then $(u_n,x_n)_{n \in \mathbb{N}}$ solving \cref{RSscheme6} are well-defined, and furthermore:
	\begin{enumerate}[i.]
		\item $\mathcal{J}^f(u_{n+1},x_{n+1}) \leq \mathcal{J}^f(u_{n},x_{n})$ with equality if and only if $x_{n+1} = x_n$ and $u_{n+1}=u_n$; 
		\item $\lim_{n \to \infty} \|x_{n+1}-x_n\|_F+\|u_{n+1}-u_n\|_{\V} = 0$ (indeed, the differences are square-summable);
		\item for all bounded subsequences $(u_{n'},x_{n'})$, $\operatorname{dist}(\mathbf{0},\partial\mathcal{J}^f(u_{n'},x_{n'})) \to 0$.
	\end{enumerate}
\end{thm}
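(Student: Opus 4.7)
The plan is to verify the hypotheses of \cite[Lemma 3.1]{attouch2010proximal} for the proximal alternating scheme \cref{RSscheme6} applied to $\mathcal{J}^f = \mathscr{F} + \mathscr{G} + \mathscr{H}^f$, and then invoke that lemma directly. The note preceding \Cref{ass:R} has already checked assumption \mH{} of \cite{attouch2010proximal}: $\mathscr{F}$ and $\mathscr{H}$ are proper l.s.c.\ and $\mathscr{G}$ is $C^\infty$; the same holds for $\mathscr{H}^f$, whose effective domain $\V^f_{[0,1]}$ is a nonempty closed convex (indeed compact) subset of $\V$. What remains is well-definedness of the iterates, boundedness below of $\mathcal{J}^f$, and the three conclusions, each of which follows from standard prox-alternating arguments once the right coercivity is in hand.

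For well-definedness, the $u$-update minimises a continuous function on the compact set $\V^f_{[0,1]}$, so the argmin is nonempty. For the $x$-update, $\mathscr{G}(u_n,\cdot)$ is continuous and bounded (since the Gaussian $\Omega$ takes values in $[0,1]$), the quadratic penalty is continuous, and $\mathscr{F}$ is proper l.s.c.\ with $\mathscr{F}(x) \to \infty$ as $\|x\|_F \to \infty$ by \Cref{ass:R}; hence the total objective is proper l.s.c.\ and coercive on $\mathcal{X}$, so the argmin is again nonempty. Boundedness below of $\mathcal{J}^f$ follows because $\mathscr{F}$ is bounded below by \Cref{ass:R} and, on the effective domain of $\mathscr{H}^f$, $\mathscr{G}(u,x) + \mathscr{H}^f(u) = \operatorname{GL}_{\varepsilon,\mu,f}(u,\Omega(\mathcal{F}(x),z_d)) \geq 0$.

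Using that $x_{n+1}$ (respectively $u_{n+1}$) minimises its subproblem, comparing its objective value to that at $x_n$ (respectively $u_n$) and summing the two resulting inequalities yields the sufficient decrease
\[
\mathcal{J}^f(u_{n+1}, x_{n+1}) + \eta_n\|x_{n+1}-x_n\|_F^2 + \nu_n \|u_{n+1}-u_n\|_{\V}^2 \leq \mathcal{J}^f(u_n, x_n),
\]
which is (i), with strict inequality unless both increments vanish. Telescoping, bounding below by $\inf\mathcal{J}^f > -\infty$, and using $\eta_n, \nu_n \geq a > 0$ gives the square-summability of $\|x_{n+1}-x_n\|_F$ and $\|u_{n+1}-u_n\|_{\V}$, which is (ii).

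For (iii), I would extract the first-order conditions of the two subproblems,
\[
2\eta_n(x_n - x_{n+1}) - \nabla_x \mathscr{G}(u_n, x_{n+1}) \in \partial_x \mathscr{F}(x_{n+1}), \qquad 2\nu_n(u_n - u_{n+1}) - \nabla_u \mathscr{G}(u_{n+1}, x_{n+1}) \in \partial \mathscr{H}^f(u_{n+1}),
\]
and, using the smooth-plus-l.s.c.\ sum rule for limiting subdifferentials (applicable because $\mathscr{H}^f$ depends only on $u$ and $\mathscr{G}$ is $C^1$), assemble the vector
\[
\bigl(2\nu_n(u_n-u_{n+1}),\; 2\eta_n(x_n-x_{n+1}) + \nabla_x\mathscr{G}(u_{n+1},x_{n+1}) - \nabla_x\mathscr{G}(u_n,x_{n+1})\bigr) \in \partial \mathcal{J}^f(u_{n+1}, x_{n+1}).
\]
On a bounded subsequence $(u_{n'},x_{n'})$, smoothness of $\mathscr{G}$ supplies a uniform Lipschitz constant $L$ for $\nabla_x\mathscr{G}$ on the enveloping ball, so the norm of this element is controlled by $(2b+L)\|u_{n'}-u_{n'-1}\|_{\V} + 2b\|x_{n'}-x_{n'-1}\|_F$, which tends to zero by (ii). The main subtlety I anticipate is the subdifferential bookkeeping in this last step: one must carefully decompose $\partial\mathcal{J}^f$ into its $u$- and $x$-components using that $\mathscr{G}$ is jointly smooth, rather than treating $\mathcal{J}^f$ as a generic coupled l.s.c.\ function. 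With that in place, the result is a direct instance of \cite[Lemma 3.1]{attouch2010proximal}.
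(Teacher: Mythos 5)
Your proposal is correct and takes essentially the same route as the paper: the paper's proof simply notes that $\mathcal{J}^f$ is bounded below and that $\mathcal{J}^f(\cdot,x)$ is proper for each $x$ (so that, with the stepsize bounds $\eta_n,\nu_n\in(a,b)$, Assumption $(\mathscr{H}_1)$ of Attouch \emph{et al.} holds, $(\mathscr{H})$ having been checked in the preceding note) and then invokes \cite[Lemma 3.1]{attouch2010proximal} as a black box. The additional material you supply---existence of the argmins, the sufficient-decrease inequality, square-summability, and the subgradient estimate via the separable-plus-smooth decomposition of $\partial\mathcal{J}^f$---is a correct unpacking of that cited lemma's own proof rather than a genuinely different argument.
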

\begin{proof}
	Note that, by \Cref{ass:R}, $\mathcal{J}^f(u,x)$ is bounded below, and that for all $x\in\mathcal{X}$, $\mathcal{J}^f(\cdot,x)$ is proper. Hence given the assumption on $\eta_n$ and $\nu_n$, \cite[Assumption \mHo]{attouch2010proximal} is satisfied, and therefore the result follows by \cite[Lemma 3.1]{attouch2010proximal}.
\end{proof}
\begin{lemma}\label{lem:KLlem} 
	 Let \Cref{ass:R} hold.
	 Then for all $\bar u\in\V^f_{[0,1]}$ and for all $\bar x \in \operatorname{dom} \mathcal{R}$, there exists $\theta \in [0,1)$ such that $\mathcal{J}^f$ has the Kurdyka--Łojasiewicz property with exponent $\theta$ at $(\bar u, \bar x)$. 
	\end{lemma}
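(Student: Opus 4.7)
The natural plan is to dispatch the two clauses of \Cref{lem:subany} by a case-split: if $\mathbf{0}\notin \partial \mathcal{J}^f(\bar u, \bar x)$ then \Cref{lem:subany}.i gives the KL property with \emph{any} exponent $\theta\in[0,1)$ immediately; otherwise one must set up and apply \Cref{lem:subany}.ii, which requires checking that $\mathcal{J}^f$ is proper, sub-analytic, continuous on its domain, and has closed domain.

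Properness and the closed-domain condition are straightforward. We have $\operatorname{dom}\mathcal{J}^f = \V^f_{[0,1]} \times \operatorname{dom}\mathcal{R}$, which contains $(\bar u,\bar x)$ by hypothesis, so $\mathcal{J}^f$ is proper. The set $\V^f_{[0,1]}$ is the intersection of the affine subspace $\{u \in \V \mid u|_Z = f\}$ with the closed box $\V_{[0,1]}$, hence closed, and $\operatorname{dom}\mathcal{R}$ is closed by \Cref{ass:R}, so the product domain is closed. Continuity on the domain reduces to continuity of each summand: $\mathcal{R}$ is continuous on $\operatorname{dom}\mathcal{R}$ by \Cref{ass:R}, $\alpha\|\mathcal{T}(x)-y\|_F^2$ is continuous (indeed analytic) on $\mathcal{X}$, $\mathscr{G}$ is analytic on $\V\times\mathcal{X}$, and $\mathscr{H}^f\equiv 0$ on $\V^f_{[0,1]}$.

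The main work is verifying sub-analyticity. Since $\mathcal{T}$ is analytic by \Cref{ass:R}, the map $x \mapsto \alpha\|\mathcal{T}(x)-y\|_F^2$ is analytic. Combined with the sub-analytic $\mathcal{R}$, \Cref{lem:subany}.iii gives that $\mathscr{F}$ is sub-analytic. Viewing $\mathscr{F}$ as a function on $\V\times \mathcal{X}$ which is independent of $u$, it remains sub-analytic (its graph is $\V \times \operatorname{Gr}\mathscr{F}$, and intersections/products with semi-analytic sets preserve sub-analyticity). Next, $\mathscr{G}(u,x)$ is a polynomial in $u$ and $v$ against $\Omega(\mathcal{F}(x),z_d)$, where $v$ is quadratic in $u$, $\mathcal{F}$ is linear, and $\Omega$ is analytic by construction (see \cref{Omega} and the accompanying footnote), so $\mathscr{G}$ is analytic on $\V\times\mathcal{X}$. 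Hence by another application of \Cref{lem:subany}.iii, $\mathscr{F}+\mathscr{G}$ is sub-analytic on $\V\times\mathcal{X}$. Finally, \Cref{lem:subany}.iv lets us add the indicator $\mathscr{H}^f$ and conclude that $\mathcal{J}^f$ is sub-analytic.

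The main conceptual obstacle is the bookkeeping in the sub-analyticity step: one must be careful to extend $\mathscr{F}$ and $\mathcal{R}$ to the enlarged variable space $\V\times \mathcal{X}$ before invoking \Cref{lem:subany}.iii, and to verify that the analytic hull containing $\mathscr{G}$ and $\alpha\|\mathcal{T}(x)-y\|_F^2$ fits the hypothesis. Once sub-analyticity is established, \Cref{lem:subany}.ii provides the existence of $\theta\in[0,1)$ for which $\mathcal{J}^f$ satisfies the KL property with exponent $\theta$ at $(\bar u,\bar x)$ in the case $\mathbf{0}\in \partial\mathcal{J}^f(\bar u,\bar x)$, completing the proof.
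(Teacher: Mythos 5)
Your proposal is correct and follows essentially the same route as the paper: establish that $\operatorname{dom}\mathcal{J}^f=\V^f_{[0,1]}\times\operatorname{dom}\mathcal{R}$ is closed, that $\mathcal{J}^f$ is continuous on its domain, and that it is sub-analytic via \Cref{lem:subany}(iii--iv), then case-split on whether $\mathbf{0}\in\partial\mathcal{J}^f(\bar u,\bar x)$ and invoke \Cref{lem:subany}(ii) or (i) accordingly. Your extra care in extending $\mathscr{F}$ to the product space $\V\times\mathcal{X}$ before applying \Cref{lem:subany}(iii) is a correct elaboration of a step the paper leaves implicit, not a different argument.
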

\begin{proof} Note first that $\operatorname{dom} \mathcal{J}^f = \V^f_{[0,1]} \times \operatorname{dom}\mathcal{R} $, which is therefore closed by the assumption. Next, note that by the assumption and the continuity of $\mathscr{G}$ and $\mathscr{H}^f$ (in the latter case, on its domain), $ \mathcal{J}^f$ is continuous on its domain. Finally, since $\mathscr{G}$ is analytic, $\|\mathcal{T}(x)-y\|_F^2$ is analytic, and $\mathcal{R}$ is sub-analytic, it follows by \Cref{lem:subany}(iii-iv) that $  \mathcal{J}^f$ is sub-analytic.

	Let $\bar u \in \V_{[0,1]}^f$ and $\bar x \in \operatorname{dom} \mathcal{R}$. If $\mathbf{0}\in \partial \mathcal{J}^f(\bar u,\bar x)$, then by the above  \Cref{lem:subany}(ii) applies. Thus there exists $\theta \in [0,1)$ such that ${ \mathcal{J}^f}$ has the Kurdyka--Łojasiewicz property with exponent $\theta$ at $(\bar u,\bar x)$. If instead $\mathbf{0}\notin \partial \mathcal{J}^f(\bar u,\bar x)$, then by \Cref{lem:subany}(i)---as $\mathcal{J}^f$ is proper and l.s.c.--- for all $\theta \in [0,1)$, ${ \mathcal{J}^f}$ has the Kurdyka--Łojasiewicz property with exponent $\theta$ at $(\bar u,\bar x)$.

	\end{proof}
\begin{lemma} \label{lem:bdd}	Suppose that, for some $0<a<b$ and all $n$, $\eta_n,\nu_n \in (a,b)$, and that \Cref{ass:R} holds. Then for all $u_0\in \V^f_{[0,1]}$ and for all $x_0 \in \operatorname{dom} \mathcal{R} $, if $(u_n,x_n)_{n \in \mathbb{N}}$ is defined from $(u_0,x_0)$ by \cref{RSscheme6}, then $\|(u_n,x_n)\|:=\|u_n\|_{\V} + \|x_n\|_F$ is bounded.
\end{lemma}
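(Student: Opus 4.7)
The plan is to exploit monotone decrease of $\mathcal{J}^f$ along iterates (from \Cref{thm:RSthm}(i)) together with the coercivity built into \Cref{ass:R}, after observing that the $u$-component lives in a compact set by construction.

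\textbf{Step 1: The sequence $(u_n)$ is uniformly bounded.} Since $u_0 \in \V^f_{[0,1]}$ and the update for $u_{n+1}$ in \cref{RSscheme6} enforces $\mathscr{H}^f(u_{n+1})<\infty$, induction gives $u_n \in \V^f_{[0,1]}$ for all $n$. Because $V$ is finite and $\Omega_{ij}\in(0,1]$ so that degrees are uniformly bounded, this immediately yields a constant $C_u$ (depending only on $|V|$ and $r$) with $\|u_n\|_{\V} \le C_u$ for all $n$.

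\textbf{Step 2: $\mathscr{G}(u_n,x_n)$ is uniformly bounded in absolute value.} Expanding $\mathscr{G}(u,x)=\beta\langle -uu^T + \mathbf{1}v^T + v\mathbf{1}^T,\Omega(\mathcal{F}(x),z_d)\rangle_F$, we use $u_n\in[0,1]^V$, the bound $W(u_i)\in[0,1/8]$ for $u_i\in[0,1]$, and $(u_i-f_i)^2\le 1$ to get $|v_i|\le C_v$ uniformly; combined with $|\Omega_{ij}|\le 1$ and the finiteness of $V$, each of the three inner products $\langle uu^T,\Omega\rangle_F$, $\langle \mathbf{1}v^T,\Omega\rangle_F$, $\langle v\mathbf{1}^T,\Omega\rangle_F$ is bounded by a constant independent of $n$. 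Hence $|\mathscr{G}(u_n,x_n)| \le C_\mathscr{G}$.

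\textbf{Step 3: Use monotonicity and coercivity to bound $(x_n)$.} By \Cref{thm:RSthm}(i),
\[
\mathcal{J}^f(u_n,x_n) \le \mathcal{J}^f(u_0,x_0) =: C_0 < \infty,
\]
where finiteness uses $u_0\in\V^f_{[0,1]}$ (so $\mathscr{H}^f(u_0)=0$) and $x_0\in\operatorname{dom}\mathcal{R}$. Since $u_n\in\V^f_{[0,1]}$, $\mathscr{H}^f(u_n)=0$, so
\[
\mathscr{F}(x_n) = \mathcal{J}^f(u_n,x_n) - \mathscr{G}(u_n,x_n) \le C_0 + C_\mathscr{G}.
\]
\Cref{ass:R} states $\mathscr{F}(x)\to\infty$ as $\|x\|_F\to\infty$, so the sublevel set $\{x : \mathscr{F}(x)\le C_0+C_\mathscr{G}\}$ is bounded; hence $\|x_n\|_F\le C_x$ for some constant $C_x$. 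Combining with Step 1 gives $\|(u_n,x_n)\| = \|u_n\|_{\V} + \|x_n\|_F \le C_u + C_x$.

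The only potential obstacle is making sure that $\mathscr{G}(u_n,x_n)$ is bounded below as well as above, but this follows for free from the compactness of $\V^f_{[0,1]}$ and $\Omega\in[0,1]^{V\times V}$ (no information on $x$ is needed for this bound), which is what makes the coercivity of $\mathscr{F}$ in \Cref{ass:R} sufficient to control $\|x_n\|_F$.
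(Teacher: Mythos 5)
Your proof is correct and follows essentially the same route as the paper: boundedness of $u_n$ from the constraint $u_n\in\V^f_{[0,1]}$, the monotone decrease $\mathcal{J}^f(u_n,x_n)\leq\mathcal{J}^f(u_0,x_0)$ from \Cref{thm:RSthm}(i), and the coercivity of $\mathscr{F}$ from \Cref{ass:R}. The only cosmetic difference is that the paper simply uses $\mathscr{G}(u,x)+\mathscr{H}^f(u)\geq 0$ to discard the Ginzburg--Landau term, whereas you establish a (stronger, but not needed) two-sided uniform bound on $\mathscr{G}(u_n,x_n)$; only the lower bound enters the argument.
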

\begin{proof} As $u_n \in \V_{[0,1]}^f$ for all $n$, it suffices to show that $\|x_n\|_F$ is bounded. For all $u\in \V$ and for all $x\in \mathcal{X}$, $\mathscr{G}(u,x)+ \mathscr{H}^f(u) \geq 0$, and, by \Cref{thm:RSthm}(i), $\mathcal{J}^f(u_n,x_n) \leq \mathcal{J}^f(u_0,x_0)$ for all $n$. By \Cref{ass:R}, $\mathscr{F}(x) \to \infty$ as $\|x\|_F \to \infty$; hence $\|x_n\|_F$ is bounded.
\end{proof}
	\begin{thm}[\text{See \cite[Theorems 3.2, 3.3, and 3.4]{attouch2010proximal}}]
		Suppose that: for some $0<a<b$ and all $n$, $\eta_n,\nu_n \in (a,b)$; $(u_n,x_n)_{n \in \mathbb{N}}$ is defined from $(u_0,x_0)$ by \cref{RSscheme6}; and \Cref{ass:R} holds.
		
		Then for all $u_0\in \V^f_{[0,1]}$ and all $x_0 \in \operatorname{dom} \mathcal{R}$, $(u_n,x_n)$ converges to a critical point of $\mathcal{J}^f$. 
		
		Furthermore, suppose that $\bar u \in \V^f_{[0,1]}$, $\bar x \in \operatorname{dom} \mathcal{R}$, and $(\bar u, \bar x)$ is a global minimum of $\mathcal{J}^f$. Then there exists a neighbourhood $U$ containing $(\bar u,\bar x)$ and $\eta > 0$ such that if $(u_0,x_0) \in U$ and $\min \mathcal{J}^f < \mathcal{J}^f(u_0,x_0) < \min \mathcal{J}^f + \eta$, then $(u_n,x_n) \to (u^*,x^*)$, a global minimiser of $\mathcal{J}^f$. 
		
		Finally, let $(u_n,x_n) \to (u_\infty,x_\infty)$ where  $u_\infty \in \V^f_{[0,1]}$ and $ x_\infty \in \operatorname{dom} \mathcal{R}$. If $\theta\in[0,1)$ is the Kurdyka--Łojasiewicz exponent of $\mathcal{J}^f$ at $(u_\infty,x_\infty)$ (which exists by \Cref{lem:KLlem}), then: 
		\begin{enumerate}[i.]
			\item If $\theta = 0$, then $(u_n,x_n)$ converges in finitely many steps.
			\item  If $\theta \in (0,\frac12]$, then there exist $c >0$ and $\zeta \in [0,1)$ such that $\|(u_n,x_n)-(u_\infty,x_\infty)\|\leq c \zeta^n$.
			\item If $\theta \in (\frac12,1)$, then there exists $c >0$ such that $\|(u_n,x_n)-(u_\infty,x_\infty)\|\leq c n^{-(1-\theta)/(2\theta-1)}$.
		\end{enumerate}
		\end{thm}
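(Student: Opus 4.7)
The plan is to verify the hypotheses of \cite[Theorems 3.2, 3.3, 3.4]{attouch2010proximal} in our setting and then invoke those theorems directly. The preceding lemmas have essentially done all the heavy lifting; the remainder of the work is a matter of collation and notation-matching.

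First I would observe that \Cref{thm:RSthm} already establishes \cite[Assumption \mHo]{attouch2010proximal} for our iteration: a sufficient decrease of $\mathcal{J}^f$, square-summability of successive differences $\|x_{n+1}-x_n\|_F+\|u_{n+1}-u_n\|_{\V}$, and the relative-error bound $\operatorname{dist}(\mathbf{0},\partial\mathcal{J}^f(u_{n'},x_{n'}))\to 0$ along bounded subsequences. Combined with \Cref{lem:bdd}, which shows that the entire sequence $(u_n,x_n)$ is bounded, this relative-error bound in fact holds along the whole sequence. For the first assertion, I would extract a convergent subsequence via Bolzano--Weierstrass; the closedness of $\operatorname{dom}\mathcal{J}^f = \V^f_{[0,1]}\times \operatorname{dom}\mathcal{R}$, together with the continuity of $\mathcal{J}^f$ on its domain, places any such limit $(u_\infty,x_\infty)$ inside $\operatorname{dom}\mathcal{J}^f$, and the subdifferential bound identifies it as a critical point. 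The Kurdyka--Łojasiewicz property at $(u_\infty,x_\infty)$, supplied by \Cref{lem:KLlem}, is then precisely what is required to upgrade subsequential convergence to full convergence, via the standard telescoping estimate involving the concave desingularising function $\varphi$---this being the argument at the heart of \cite[Theorem 3.2]{attouch2010proximal}.

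For the stability statement about global minimisers I would apply \cite[Theorem 3.3]{attouch2010proximal} directly. The only new ingredient required there is the KL property at the candidate global minimum $(\bar u, \bar x)$, and since $\bar u \in \V^f_{[0,1]}$ and $\bar x \in \operatorname{dom}\mathcal{R}$, this is provided by \Cref{lem:KLlem}. The conclusion of \cite[Theorem 3.3]{attouch2010proximal} then matches the local-capture statement we wish to prove verbatim. For the rates in (i)--(iii), the plan is simply to quote \cite[Theorem 3.4]{attouch2010proximal} with $\theta$ equal to the KL exponent of $\mathcal{J}^f$ at the limit $(u_\infty,x_\infty)$ furnished by \Cref{lem:KLlem}; the three cases in the exponent $\theta = 0$, $\theta \in (0,\tfrac12]$, $\theta \in (\tfrac12,1)$ produce, respectively, finite-step termination, linear, and sublinear rates.

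The hard part of the proof is not in any single step but in the bookkeeping: one must ensure that the descent inequality, the relative-error subdifferential bound, and the KL property all refer to the \emph{same} objective $\mathcal{J}^f$ on the \emph{same} closed domain, and that every subsequential limit produced in the argument remains inside $\V^f_{[0,1]} \times \operatorname{dom}\mathcal{R}$ so that \Cref{lem:KLlem} is actually applicable at that limit. Closedness of $\operatorname{dom}\mathcal{R}$ and boundedness of $\V^f_{[0,1]}$ (both from \Cref{ass:R}), together with the coercivity consequence of \Cref{lem:bdd}, are what make all three invocations of the Attouch--Bolte--Redont--Soubeyran theorems legitimate.
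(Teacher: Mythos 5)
Your proposal is correct and follows essentially the same route as the paper: verify that \cite[Assumptions \mH{} and \mHo]{attouch2010proximal} hold under the stated suppositions, supply the Kurdyka--Łojasiewicz property at all relevant points via \Cref{lem:KLlem} and boundedness of the iterates via \Cref{lem:bdd}, and then quote \cite[Theorems 3.2, 3.3, and 3.4]{attouch2010proximal} directly for the three conclusions. The only slight imprecision is labelling the descent, square-summability, and subdifferential conclusions of \Cref{thm:RSthm} as ``Assumption \mHo'' (that assumption concerns the parameters $\eta_n,\nu_n$ and properness, with those conclusions being outputs of \cite[Lemma 3.1]{attouch2010proximal}), but this does not affect the validity of the argument.
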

	\begin{proof}
		Given the suppositions: \cite[Assumptions \mH \:  and \mHo]{attouch2010proximal} hold; by \Cref{lem:KLlem}, for all  $\bar u \in \V^f_{[0,1]}$ and $\bar x \in \operatorname{dom} \mathcal{R}$ there exists $\theta \in [0,1)$ such that $\mathcal{J}^f$ has the Kurdyka--Łojasiewicz property with exponent $\theta$ at $(\bar u, \bar x)$; and by \Cref{lem:bdd}, $\|(u_n,x_n)\|$ does not tend to infinity. Therefore the results follow  immediately from  \cite[Theorems 3.2, 3.3, and 3.4]{attouch2010proximal}, respectively.
	\end{proof}
\begin{nb} The above convergence results concerned \cref{RSscheme6}, in which everything is solved without linearisation. In Bolte \emph{et al.} \emph{\cite{BoltePALM}}, similar convergence results as in \emph{\cite{attouch2010proximal}} were proved for fully linearised alternating schemes for energies with the Kurdyka--Łojasiewicz property. It is beyond the scope of this work to extend such results to \cref{RSscheme5}, which is only partially linearised, but it is the authors' belief that such an extension is likely to be straightforward.
	\end{nb}
\section{Applications}\label{sec:applications}

We will test our scheme on distorted versions of the ``two cows'' images familiar from \cite{MKB,BF,Budd3,Buddthesis}.\footnote{These images are taken from the Microsoft Research Cambridge Object Recognition Image Database, available at \url{https://www.microsoft.com/en-us/research/project/image-understanding/}; accessed 4 March 2022.} 

\begin{example}[Two cows]\label{ex_twocows} 
	We take the image in the top left of \Cref{Fig_twocows} as the reference data $Z$ and the segmentation in the bottom left as the reference labels $f$, which separate the cows from the background. 
	The image $x^*$ to be segmented is the top-right image of \Cref{Fig_twocows}, with the ground truth segmentation in the bottom right.
	Both images are RGB images of size $480 \times 640$ pixels. 
	\Cref{fig:twocowsseg} shows the result of segmenting $x^*$ via \Cref{uupdatealg}, as in \emph{\cite[\S5.3.3]{Buddthesis}}.
\end{example}

\begin{figure}
	\centering
	\includegraphics[scale=0.6]{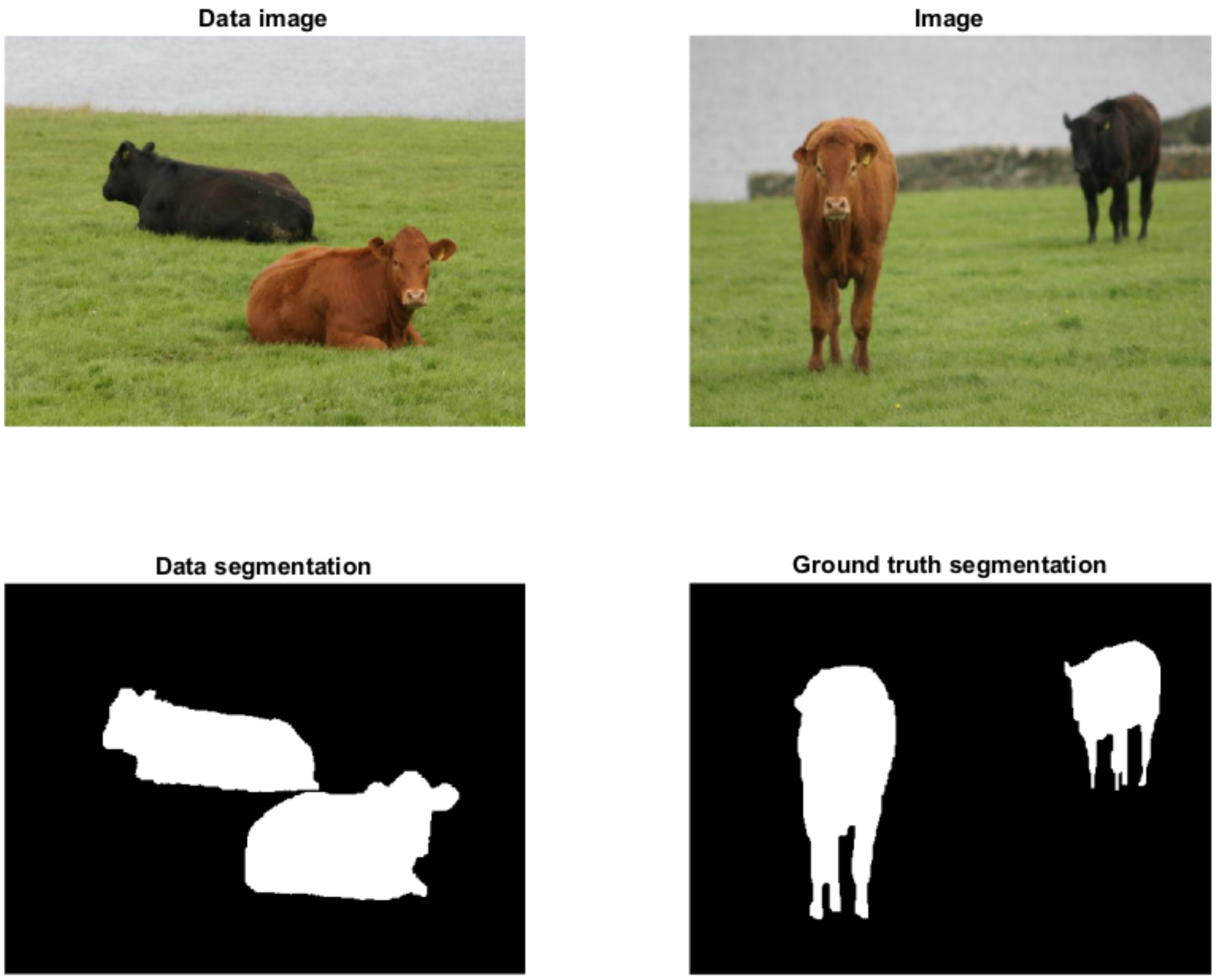}
	
	\caption{Two cows: the reference data image, the image to be segmented, the reference $f$ (which is a segmentation of the reference data image), and the ground truth segmentation associated to \Cref{ex_twocows}. Both segmentations were drawn by hand by the authors.} \label{Fig_twocows}
\end{figure}
\begin{figure}
	\centering
	\includegraphics[width=0.5\textwidth]{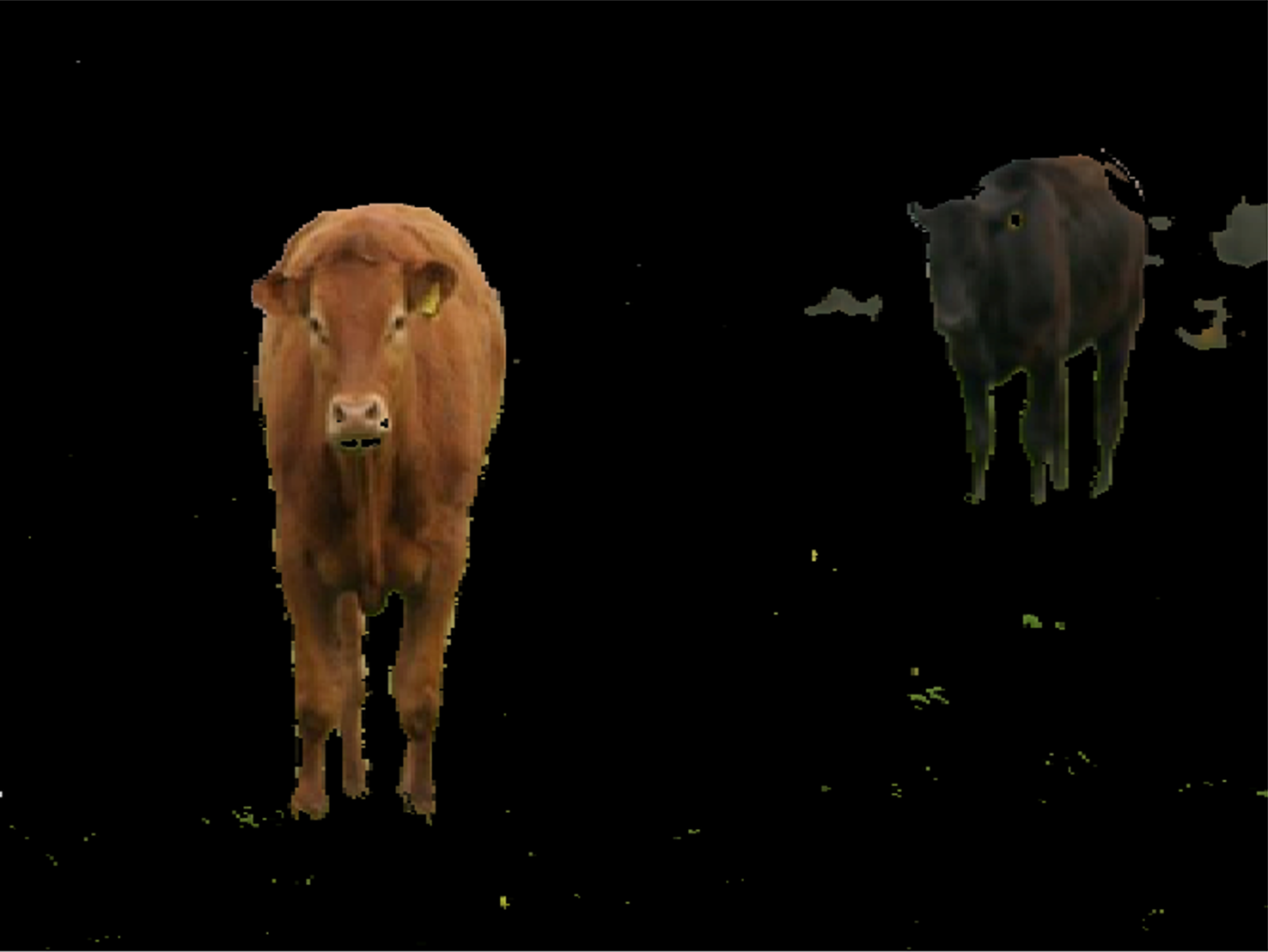}
	\caption{Image masked with MBO segmentation for \Cref{ex_twocows}, reproduced from \cite[Fig. 5.9(d)]{Buddthesis}. The segmentation accuracy is $98.4622\%$.  \label{fig:twocowsseg}}
\end{figure}
\subsection{Computational set-up}
All programming was done in \textsc{Matlab}R2020b with relevant toolboxes the Computer Vision Toolbox Version 10.1, Image Processing Toolbox Version 11.4, Signal Processing Toolbox Version 8.7, and Deep Learning Toolbox Version 14.3. In this section, all functions in \texttt{typewriter font} will refer to built-in \textsc{Matlab} functions.

Timings were taken with implementations executed serially on a machine with an Intel\textsuperscript{\textregistered}  Core\textsuperscript{\texttrademark} i7-9800X @ 3.80 GHz [16 cores] CPU and 32 GB RAM of memory.
\subsection{The feature map and its adjoint} \label{fmap}

In the following applications, we will define $\mathcal{F}$ (and likewise $\mathcal{F}_d$ \emph{mutatis mutandis}) as follows. Recall that  $x : Y \rightarrow \mathbb{R}^\ell$. For each pixel $i\in Y$, suppose we have a map $\mathcal{N}_i:\{1,...,k\} \to Y$ which defines the $k$ ``image-neighbours'' of $i$ in $Y$ and we likewise have a kernel $\mathscr{K}:\{1,...,k\} \to \mathbb{R}$. Then for each channel $s \in \{ 1,...,\ell\}$ of $x$, $i\in Y$, and $p\in \{1,...,k\}$, we define
$(\mathcal{Z}(x^s))_{ip} := \mathscr{K}(p)x^s_{\mathcal{N}_i(p)}$, and we define $z = \mathcal{F}(x)\in \mathbb{R}^{N \times k\ell}$ by $z := \begin{pmatrix} \mathcal{Z}(x^1) &\mathcal{Z}(x^2) &... &\mathcal{Z}(x^\ell)\end{pmatrix}$. We now derive $\mathcal{F}^*:\mathbb{R}^{N\times k\ell}\to \mathbb{R}^{N\times \ell}$, the adjoint of $\mathcal{F}$ with respect to $\langle\cdot,\cdot\rangle_F$. 
Write $w = \begin{pmatrix} w^1 &w^2 &... &w^\ell \end{pmatrix}$, where $w^s \in \mathbb{R}^{N\times k}$. 
Then since
\begin{equation*}
	\langle \mathcal{Z}(x^s),w^s\rangle_F = \sum_{i \in Y} \sum_{p=1}^k x^s_{\mathcal{N}_i(p)} \mathscr{K}(p)w^s_{ip}
	= \sum_{j \in Y} x^s_j \left(\sum_{\{(i,p) \mid \mathcal{N}_i(p) = j\}} \mathscr{K}(p)w^s_{ip} \right),
\end{equation*}
it follows that $\mathcal{Z}$ has adjoint $\mathcal{Z}^*:\mathbb{R}^{N\times k}\to \mathbb{R}^{N}$ given by
\[
(\mathcal{Z}^*(w^s))_j = \sum_{\{(i,p) \mid \mathcal{N}_i(p) = j\}} \mathscr{K}(p)w^s_{ip}. 
\]
Finally, by construction, $\mathcal{F}^*(w) = \begin{pmatrix} \mathcal{Z}^*(w^1) &\mathcal{Z}^*(w^2) &... &\mathcal{Z}^*(w^\ell) \end{pmatrix}$.

For this section we will take $k = 9$, the image-neighbours of pixel $i$ to be the $3\times 3$ square centred on $i$ (with replication padding at the boundaries), and $\mathscr{K}$ to be 9 multiplied by a $3\times 3$ Gaussian kernel with standard deviation 1, centred on the centre of that square, computed via \texttt{fspecial(`gaussian')}. As the images are RGB: $\ell = 3$, and $z \in \mathbb{R}^{N \times 27}$. 
\subsection{Measures of reconstruction and segmentation accuracy} 

In the following examples, we will measure the accuracy of a reconstruction $x$ relative to a ground truth of $x^*$ by the Peak Signal-to-Noise Ratio (PSNR) (defined as in \cite[(2.6)]{Hansen}). 

We will measure the accuracy of a segmentation $u$ relative to a ground truth $u^*$ by its Dice score, defined as
$2TP/(2TP + FP + FN)$, where $TP$ is the number of pixels which are true positives (``positives'' will here be pixels identified as ``cow''), $FP$ of false positives, and $FN$ of false negatives. That is (since ``positive'' pixels will be given label ``1''):
\[
\text{Dice score of } u \text{ relative to }u^* 
:= \frac{2u\cdot u^*}{2u\cdot u^* + u\cdot (1-u^*) + (1-u)\cdot u^*}.
\]
\subsection{Denoising the ``two cows'' image}
As our first application, we will test our method on a highly noised version of \Cref{ex_twocows}.
\begin{example}[Noised two cows]
	\label{ex_noisycows}
	Let $Z$, $f$, and $x^*$ (the true image that is to be reconstructed and segmented) be as in \Cref{ex_twocows}. Let the observed data $y$ (see \Cref{Fig_noisedcows}) be $x^*$ plus Gaussian noise with mean $0$ and standard deviation $1$, created via \emph{$\texttt{imnoise}$}.
	Thus, $\mathcal{T}$ is the identity. This is a very high noise level, with a typical PSNR of $y$ relative to $x^*$ of  $6.2$ dB.  
\end{example}
\begin{figure}[h]
	\centering
	\includegraphics[width = 0.6\textwidth]{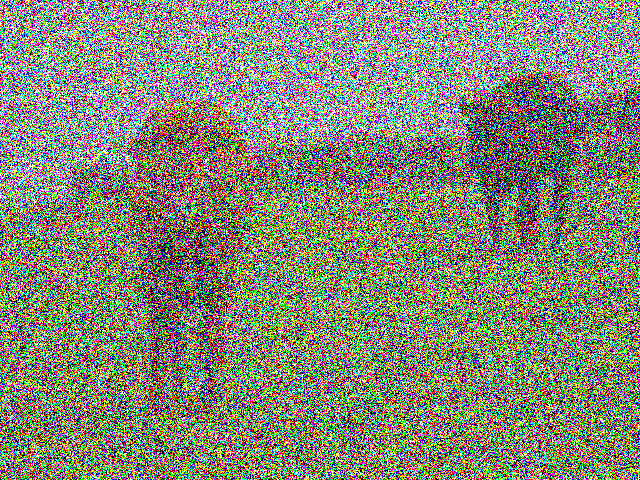}
	\caption{Typical observed data $y$ for \Cref{ex_noisycows}.} \label{Fig_noisedcows}
\end{figure}

\subsubsection{Parameters and initialisation}\label{twocowsparams}

Unless otherwise stated, all parameter values were chosen through manual trial-and-error. 
We let $\alpha = 0.75$, $\beta = 10^{-5}$, $\eta_n = 0.1$, $\nu_n = 10^{-6}$, and $\sigma = 3$. For the SDIE scheme for \cref{uupdate2}, we choose $\tau = \varepsilon = 0.00285$, $\mu = 50\chi_Z$, $k_s = 5$, and stopping condition parameter $\delta = 10^{-10}$. For the Nystr\"om-QR scheme we take $K = 100$. 
As regulariser $\mathcal{R}$ we use Huber-TV \cite{Huber}:\footnote{This choice of regulariser from among the various commonly used candidates is somewhat arbitrary; we have not yet explored the impact of the choice of regulariser on the behaviour of this scheme.} 
\[
\mathcal{R}(x) = R(\mathcal{K}x) = 10 \sum_{i\in Y} \begin{cases}
	\|(\nabla x)_i\|_2 - 0.005, &\text{if }\|(\nabla x)_i\|_2 > 0.01,\\
	\|(\nabla x)_i\|^2_2/0.02, &\text{if }\|(\nabla x)_i\|_2 \leq 0.01,
\end{cases}
\] 
where $\mathcal{K}x:= \nabla x$, with $(\nabla x)_i = (
x_{i\downarrow}-x_i,
x_{i\rightarrow}-x_i 
)^T$, where $x_{i\downarrow}$ is the $x$-value at the pixel directly below pixel $i$ in the image and $x_{i\rightarrow}$ the $x$-value at the pixel directly to the right of pixel $i$ (with replication padding at boundaries).

The initial reconstruction $x_0$ was computed via a standard TV-based (i.e., Rudin--Osher--Fatemi \cite{ROF}) denoising, with fidelity term 1.05. That is 
\[
x_0 = \argmin_{x \in \mathbb{R}^{N\times \ell}} \operatorname{TV}(x) + 1.05\|x - y\|_F^2,
\]
where $\TV(x):=\sum_{i\in Y} \|(\nabla x)_i\|_2$
; this is solved using the split Bregman method from Getreuer \cite{getreuer2012tvdenoising}, using code from \url{https://getreuer.info/posts/tvreg/index.html} (accessed 10 August 2022), with 50 split Bregman iterations and a tolerance of $10^{-5}$. 
The initial segmentation $u_0$ of $x_0$ was computed via the SDIE scheme with the above parameters and initial state $u^0 = 0.47\chi_Y + f$. 

\subsubsection{Example results} \label{subsubsec_example_denoise_segmen}

Before discussing the choice of parameters, timings, and accuracy in more detail, we present an example run of the reconstruction-segmentation method for the noisy data and set-up described in \Cref{ex_noisycows} and \Cref{twocowsparams}, respectively. 

\begin{figure}
	\centering 
	\includegraphics[width = \textwidth]{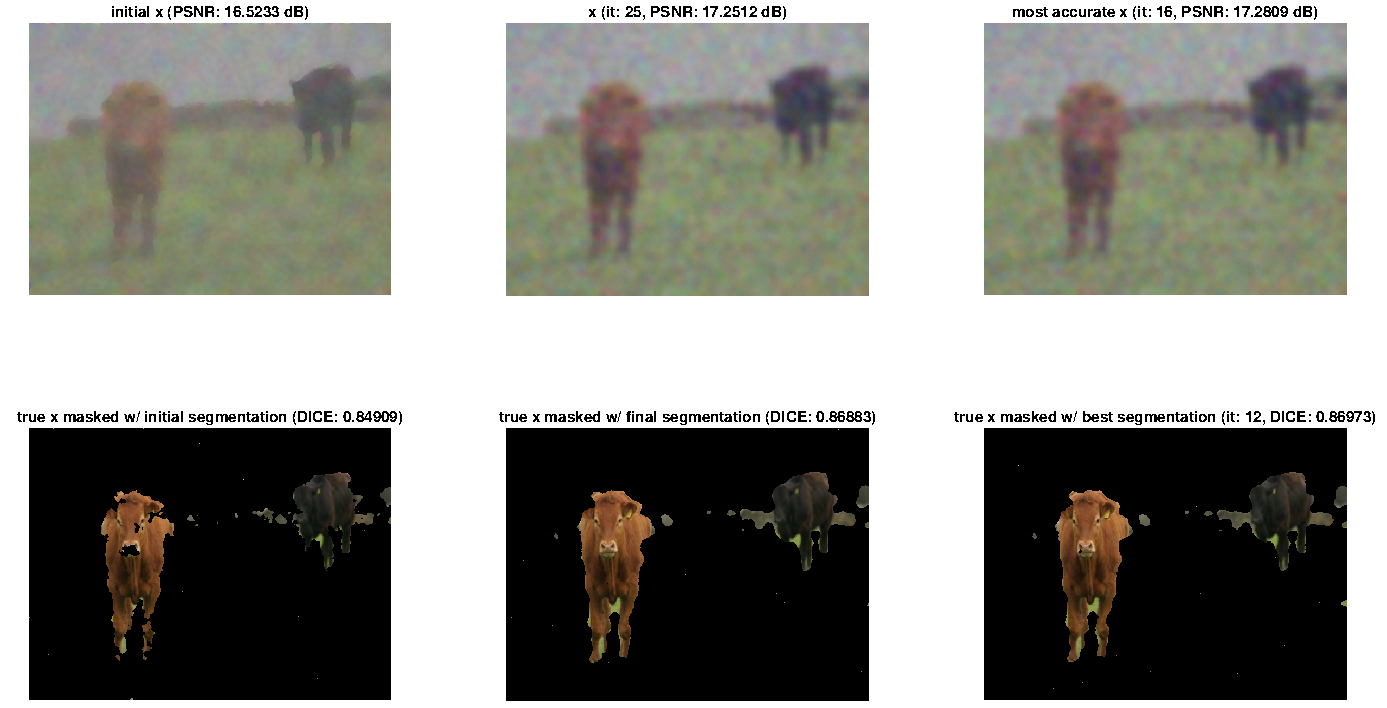}
	\caption{Reconstruction and segmentation of \Cref{ex_noisycows} from a single run of the reconstruction-segmentation algorithm using the parameters from \Cref{twocowsparams}. Each column contains the reconstruction (top) and segmentation (bottom). We show (left to right): initialisation, final iteration (=25), and the output with the best scores over the run. } \label{fig:RSdenoising} 
\end{figure}

We show the results of our denoising-segmentation in \Cref{fig:RSdenoising}. In the figure we see that the reconstruction is not particularly good by itself (as one would expect given the very high noise level), but the segmentation is very good (compared to the baseline achieved by other methods, see \Cref{sec:seqdenoise}). Importantly, the reconstruction increases the contrast between cows and background, which aids the segmentation.

\subsubsection{Parameters, accuracy, and timings} \label{sec:denoresu}

We now study the denoising-segmentation of \Cref{ex_noisycows} more quantitatively. We consider timings of the total runs, but also of the most important steps, as well as reconstruction and segmentation accuracy. In order to understand the dependence on the algorithm's parameters we consider four settings: the setting from \Cref{twocowsparams}, a change in the segmentation parameters compared to \Cref{twocowsparams} ($K = 70$ (decrease), $\varepsilon =  \tau = 0.003$ (increase)) (as used in \cite{Budd3}), a doubling of the segmentation weight ($\beta = 2\cdot 10^{-5}$), and a decrease in the reconstruction weight ($\alpha =0.5 $). We list the results of these settings in \Cref{tbl:denoiseresults}, and present them in \Cref{fig:ex_noisy_recons,fig:results1,fig:results2,fig:results3,fig:results4}.

\begin{table}[]
\begin{tabular}{ll|rrrr}
\multirow{2}{*}{}         & Changes to \ref{twocowsparams}      & $\emptyset$ & $K, \varepsilon, \tau$  & $\beta$ & $\alpha$  \\ \hline
                          & Figure              & \ref{fig:results1} &\ref{fig:results2}  & \ref{fig:results3} & \ref{fig:results4} \\ \hline
\multirow{2}{*}{Accuracy} & Dice score [\%]    &  $85.84 (1.68)$ & $85.46 (1.28)$ & $87.14 (1.45)$  & $86.51 (1.59)$ \\
                          & PSNR [dB]    &$17.19 (0.03)$  & $17.22 (0.03)$ & $17.74 (0.05)$ & $17.54 (0.04)$ \\ \hline
\multirow{4}{*}{Timings [s]}  & total               &$150.19 (1.52)$  &$128.86 (1.23)$ & $154.00(1.81)$  & $153.80 (1.25)$ \\
                          & initialisation      & $1.49 (0.03)$  & $1.52 (0.03)$  &  $1.51 (0.08)$ &  $1.49 (0.03)$  \\
                          & reconstruction   & $3.43 (0.08)$ & $3.31 (0.07)$   & $3.57 (0.08)$ & $3.56 (0.07)$ \\
                          & segmentation & $2.52 (0.03)$ & $1.79 (0.03)$ & $2.53 (0.04)$ & $2.53 (0.03)$
\end{tabular}
\caption{Results averaged over 50 runs for different parameter settings for \Cref{ex_noisycows}. Values are given as ``mean(standard deviation)''. Total timing is for 25 iterations of the algorithm. Details of the changes are described in the main text.
\label{tbl:denoiseresults}}
\end{table}

\begin{figure}
    \centering
    \includegraphics[width = \textwidth,  clip]{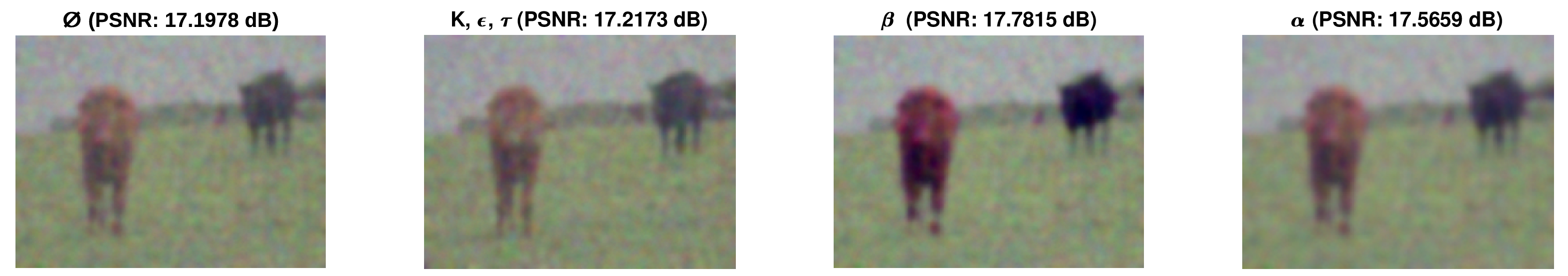} 
    \caption{Example reconstruction results obtained from the denoising-reconstruction method for the different parameter settings for \Cref{ex_noisycows} described in \Cref{sec:denoresu}.}
    \label{fig:ex_noisy_recons}
\end{figure}

In all of the settings, we obtain a quick and accurate segmentation. The PSNR of the reconstruction is similar in all settings, 
suggesting that this low accuracy is intrinsic to this noise level. The initial segmentation is very fast, but in all settings not as accurate as the joint reconstruction-segmentation. Of the results, some observations are particularly remarkable:
\begin{itemize}
    \item For the pure MBO segmentation discussed in \cite{Budd3}, the  Nystr\"om rank $K$ and $\tau = \varepsilon$ were chosen as in the second setting presented here (\Cref{fig:results2}). Whilst this choice was optimal for pure segmentation, and the smaller rank leads to a smaller computational cost, the obtained Dice scores are worse than all the other settings.
\item The larger $\beta$ value (\Cref{fig:results3}, see also \Cref{fig:ex_noisy_recons} third from left) puts more weight on the Ginzburg--Landau energy, leading to the best Dice average, but (unexpectedly) also the highest PSNR average. 
\item The Dice values 
have relatively large standard deviations. This is likely caused by the very high noise level, but also by the inherent randomness of the Nystr\"om extension.
\item We attain near peak Dice and PSNR values in fewer than ten iterations in all cases. 
\end{itemize}

\begin{figure} 
    \centering
    \includegraphics[width = \textwidth, trim = 500 0 500 0, clip]{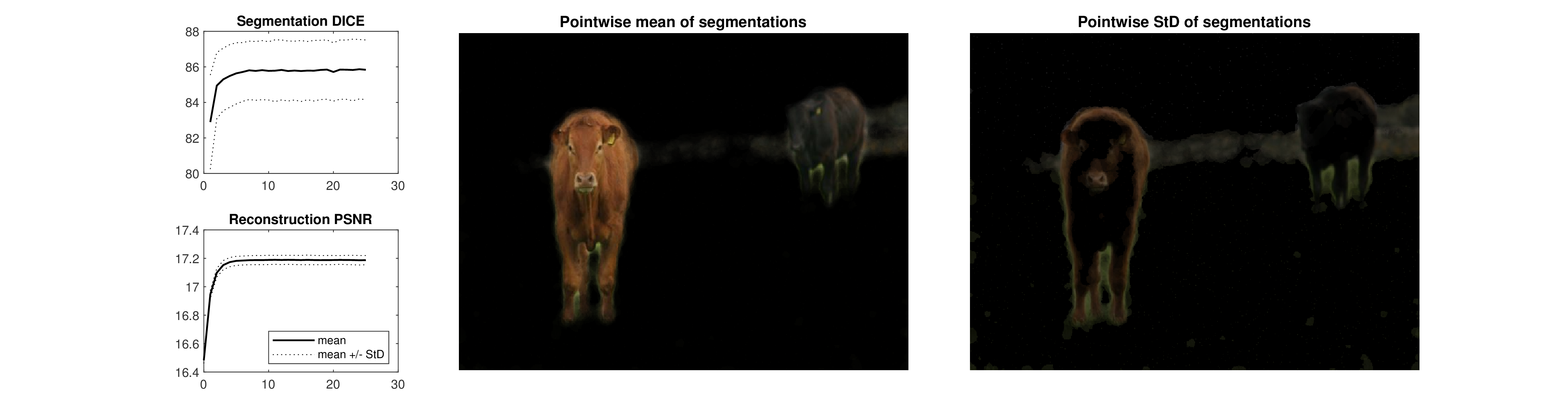}
    \caption{Results averaged over 50 runs of the reconstruction-segmentation scheme in the denoising case. The line plots (left) show the evolution of the Dice score (\%) and PSNR (dB) over the 25 iterations of the algorithm. The images (centre and right) show mean and standard deviation of the segmentation at $t=25$ which are used as weights for the ground truth images. The setting is that of \Cref{twocowsparams}. 
    }
    \label{fig:results1}
\end{figure}

\begin{figure} 
    \centering
    \includegraphics[width = \textwidth, trim = 500 0 500 0, clip]{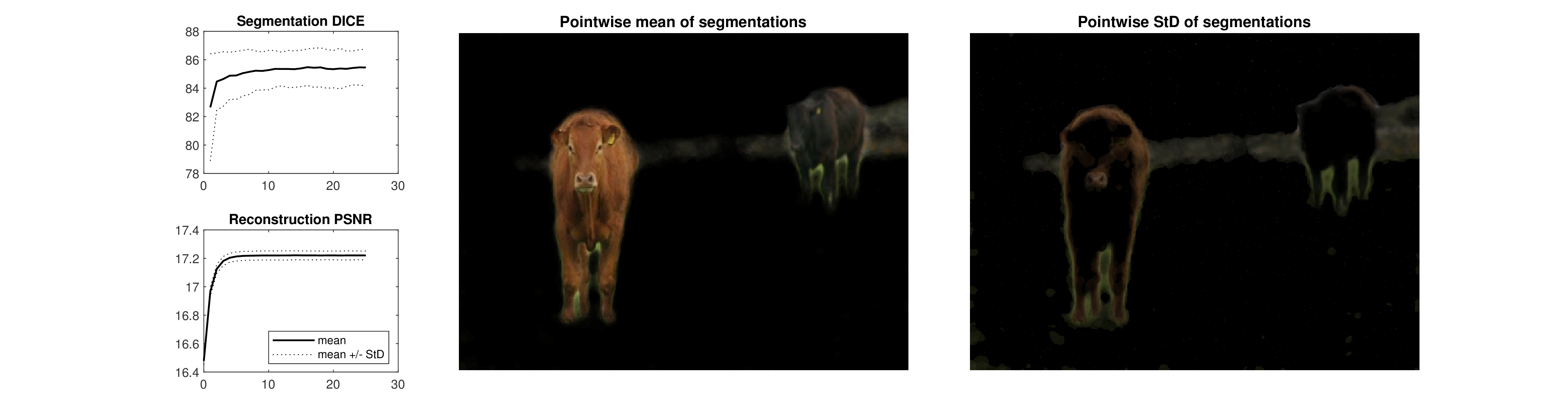}
    \caption{Results averaged over 50 runs of the reconstruction-segmentation scheme in the denoising case. The line plots (left) show the evolution of the Dice score (\%) and PSNR (dB) over the 25 iterations of the algorithm. The images (centre and right) show mean and standard deviation of the segmentation at $t=25$ which are used as weights for the ground truth images. The setting is that of \Cref{twocowsparams}, subject to the changes: $K = 70$, $\tau = \varepsilon = 0.003$. 
    }
    \label{fig:results2}
\end{figure}

\begin{figure} 
    \centering
    \includegraphics[width = \textwidth, trim = 500 0 500 0, clip]{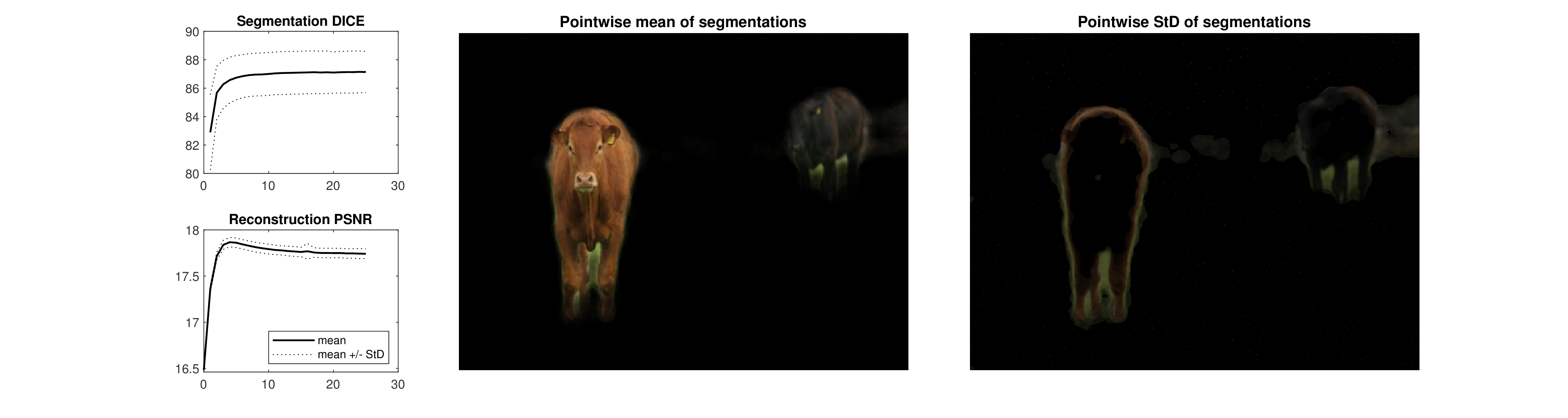}
    \caption{Results averaged over 50 runs of the reconstruction-segmentation scheme in the denoising case. The line plots (left) show the evolution of the Dice score (\%) and PSNR (dB) over the 25 iterations of the algorithm. The images (centre and right) show mean and standard deviation of the segmentation at $t=25$ which are used as weights for the ground truth images. The setting is that of \Cref{twocowsparams}, subject to the change: $\beta = 2 \cdot 10^{-5}$. 
    }
    \label{fig:results3}
\end{figure}

\begin{figure} 
    \centering
    \includegraphics[width = \textwidth, trim = 500 0 500 0, clip]{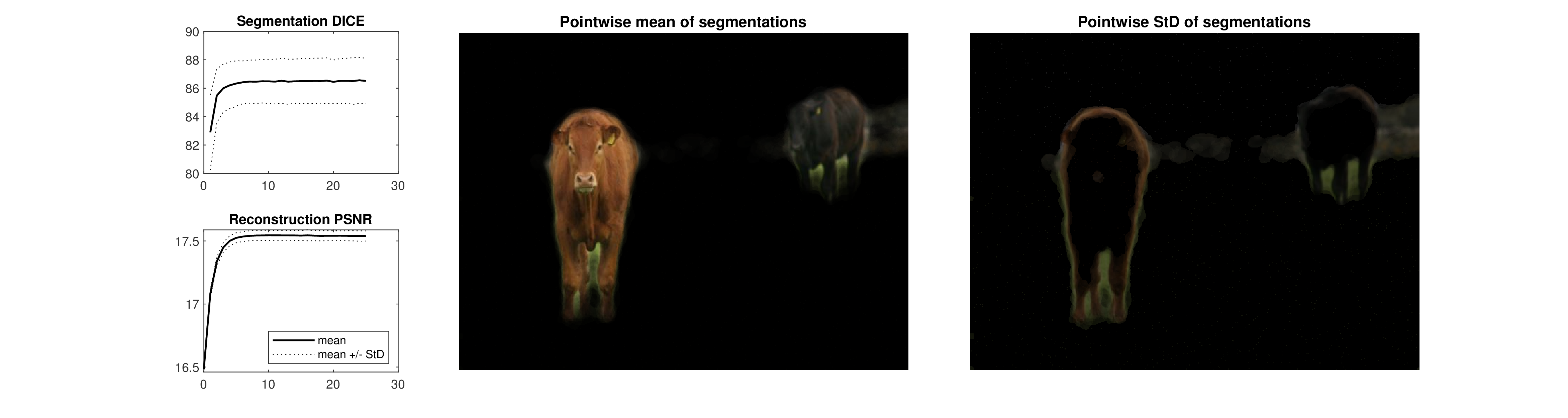}
    \caption{Results averaged over 50 runs of the reconstruction-segmentation scheme in the denoising case. The line plots (left) show the evolution of the Dice score (\%) and PSNR (dB) over the 25 iterations of the algorithm. The images (centre and right) show mean and standard deviation of the segmentation at $t=25$ which are used as weights for the ground truth images. The setting is that of \Cref{twocowsparams}, subject to the change:  $\alpha = 0.5$. 
    }
    \label{fig:results4}
\end{figure}

\subsubsection{Comparison to sequential reconstruction-segmentation}\label{sec:seqdenoise}
Finally, we compare the accuracy of our joint reconstruction-segmentation approach to the more traditional sequential approach. That is, we will first denoise the image of cows, and then segment it. Segmentations will be performed using the graph MBO scheme with the same set-up as in \Cref{twocowsparams}.

One example of this sequential approach is our initialisation process (i.e., TV denoising followed by MBO segmentation), which we observed gave worse PSNR and Dice scores than the joint reconstruction-segmentation output. However, this is perhaps unfair because that initialisation was specifically chosen because it is quick (around $1.5s$, see \Cref{tbl:denoiseresults}), whilst the whole joint reconstruction-segmentation scheme takes about 2.5 minutes to run (although as was mentioned above, the scheme achieves near peak accuracy in closer to 1 minute).

For a potentially fairer comparison, we consider three more sophisticated denoisers: denoising with total generalised variation (TGV) regularisation \cite{BKPock2011} through code by Condat \cite{Condat2013} and downloaded from \url{https://lcondat.github.io/software.html} (accessed 21 May 2022); the block-matching and 3D filtering (BM3D) algorithm \cite{MakinenAzzariFoi2020} via code by Mäkinen, Azzari, and Foi and downloaded from \url{https://webpages.tuni.fi/foi/GCF-BM3D/} (accessed 18 May 2022); and the built-in \textsc{Matlab} pre-trained denoising Convolutional Neural Network (CNN) (defined in Zhang \emph{et al.} \cite{DnCNN}) loaded via \texttt{denoisingNetwork(`DnCNN')} and implemented via \texttt{denoiseImage}. Remarkably, however, we observe that the BM3D denoiser barely outperforms TV, the TGV denoiser performs slightly worse than TV, and the CNN denoiser performs much worse,\footnote{We expect this poor performance to be because the network is pre-trained; this performance is somewhat concerning, and may
not be representative of the general capabilities of CNNs for this denoising task.} see \Cref{fig:denoisers}. Comparing with the PSNR scores in \Cref{tbl:denoiseresults}, we note that the TV, TGV, BM3D, and CNN denoisers are all outperformed by the reconstruction from our scheme.
\begin{figure}[ht]
	\centering
	\includegraphics[width = \textwidth]{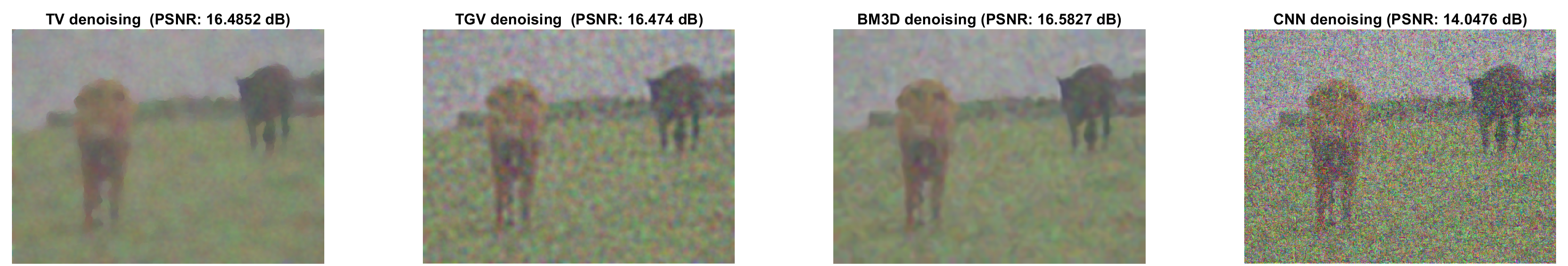}
	\caption{Typical denoised output for the TV-based, TGV-based, BM3D, and CNN denoisers.} \label{fig:denoisers}
\end{figure}

The mean Dice score $\pm$ standard deviation (over 50 trials) for MBO segmentations of the TGV denoised image is $0.7138 \pm 0.0578$, of the BM3D denoised image is $0.8180 \pm 0.0130$, and of the CNN denoised image is $0.5167\pm 0.0094$. 
Typical segmentations are shown in \Cref{fig:BM3D_CNN_segs}. Similarly to the reconstructions, all of the segmentations are worse or considerably worse than the segmentations obtained from our scheme. However, the sequential denoising-segmentations are notably faster than our scheme; we report the timings in \Cref{tbl:seqtimings}.  
\begin{figure}[h]
	\centering 
	\includegraphics[width = \textwidth]{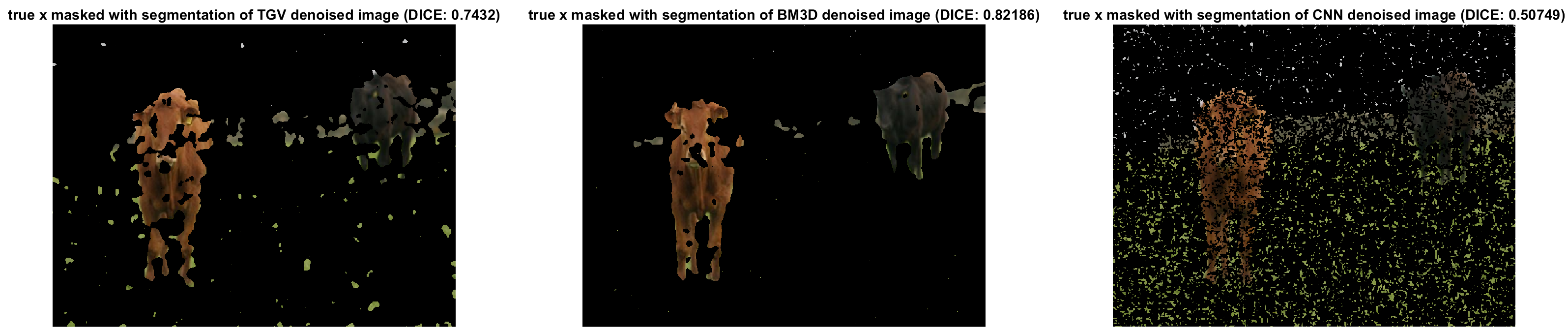}
	\caption{The ground truth image masked with typical MBO segmentations of the TGV, BM3D, and CNN denoised image.} \label{fig:BM3D_CNN_segs} 
\end{figure}
\begin{table}[h] \centering
\begin{tabular}{|l|r|r|r|}
\hline
                            & TGV         & BM3D        & CNN            \\ \hline
Reconstruction time {[}s{]} & 8.51(0.05)  & 14.67(0.14)  & 0.19(0.14)  \\ \hline
Segmentation time {[}s{]}   & 5.99(0.89)  & 5.84(0.04) & 5.87(0.03)  \\ \hline
Total time {[}s{]}          & 14.50(0.89) & 20.51(0.14) & 6.06(0.13)  \\ \hline
\end{tabular}
\caption{Timings averaged over 50 runs for the three sequential denoising-segmentation methods. 
Values are given as ``mean(standard deviation)''. 
\label{tbl:seqtimings}}
\end{table}

\subsection{Deblurring the ``two cows''}
For our next example, now with $\mathcal{T}$ not equal to the identity map, we consider a blurred version of \Cref{ex_twocows}.
\begin{example}[Blurred two cows]
	\label{ex_blurrycows}
	Let $Z$, $f$, and the true image $x^*$ (that is to be reconstructed and segmented) be as in \Cref{ex_twocows}. Let the observed data $y$ (see \Cref{Fig_blurredcows}) be a horizontal motion blurring of $x^*$ of distance $75$ pixels (with symmetric padding at the boundary) created via \emph{$\texttt{imfilter}$}, plus Gaussian noise with mean $0$ and standard deviation $10^{-1}$ created via \emph{$\texttt{imnoise}$}. This $y$ has a typical PSNR relative to $x^*$ of around $17.9$ dB.
\end{example}
\begin{figure}[h]
	\centering
	\includegraphics[width = 0.6\textwidth]{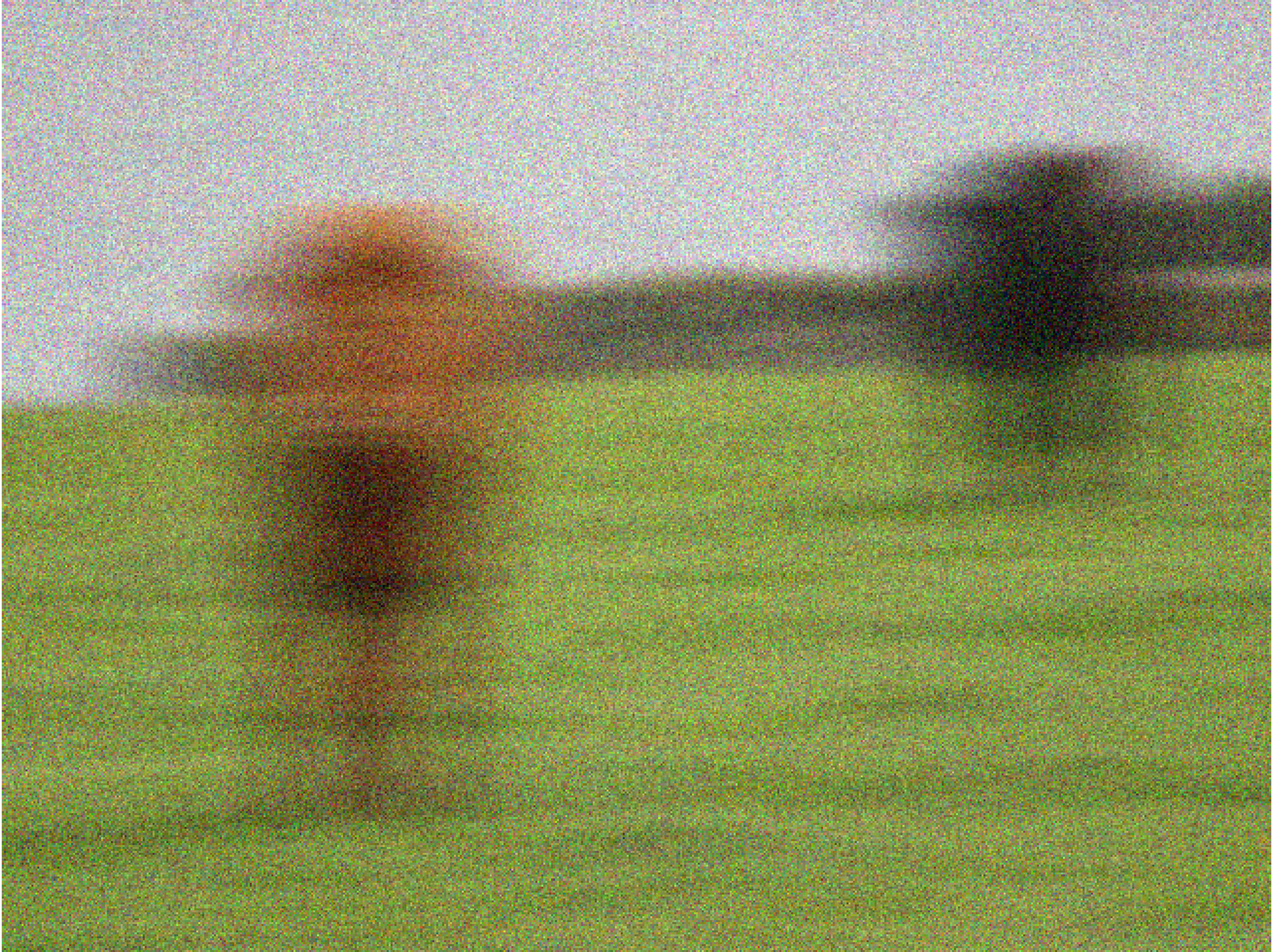}
	\caption{Typical observed data $y$ for \Cref{ex_blurrycows}.} \label{Fig_blurredcows}
\end{figure}
\subsubsection{$\mathcal{T}$, its adjoint, and \cref{proxG}}

In \Cref{ex_blurrycows}, the forward model $\mathcal{T}$ works by convolving $x$ with a motion blur filter $\mathcal{M}$ (computed using \texttt{fspecial(`motion')}). The adjoint $\mathcal{T}^*$ therefore corresponds to convolution with a filter $\mathcal{M}'$ defined by reflecting $\mathcal{M}$ in both axes. In the case of motion blur, $\mathcal{M} = \mathcal{M}'$, so $\mathcal{T}$ is self-adjoint. Furthermore, $\mathcal{M}$ has non-negative values, and so $\mathcal{T}$ is represented by a non-negative (symmetric) matrix.  

In order to solve \cref{xupdate3}, recall that we need to compute \cref{proxG}. That is, we need to be able to compute solutions to equations of the form $
\left( \left(2\eta_n + \delta t^{-1}\right) I + 2\alpha \mathcal{T}^2\right)x = z.
$   
We will do this via a fixed-point iteration. That is, we let $x^0 := z$ and iterate 
\begin{equation}\label{fpi}
	x^{m+1} := \frac{1}{2\eta_n + \delta t^{-1}} z - \frac{2\alpha}{2\eta_n + \delta t^{-1}}\mathcal{T}^2x^m. 
\end{equation}
By the Banach fixed-point theorem, if $\zeta:=2\alpha(2\eta_n + \delta t^{-1})^{-1}\|\mathcal{T}^2\|<1$, 
then $\|x - x^m\|_F = \bigO(\zeta^m)$ as $m \to \infty$. As $\|\mathcal{T}\| = 1$,\footnote{ As $\mathcal{T}$ is self-adjoint, $\|\mathcal{T}\|$ is the modulus of the largest eigenvalue of $\mathcal{T}$. Let $\mathcal{T}x = \lambda x$, where $\lambda \in \mathbb{R}$ since $\mathcal{T}$ is real and symmetric. Since $\mathcal{T}(\mathbf{1}) = \mathbf{1}$, $\mathcal{T}$ is represented by a non-negative matrix, and $-\|x\|_{\infty}\mathbf{1} \leq x \leq \|x\|_{\infty}\mathbf{1}$ elementwise, by applying $\mathcal{T}$ to the latter we get 
$
-\|x\|_{\infty}\mathbf{1} \leq \lambda x \leq \|x\|_{\infty}\mathbf{1}
$
elementwise, and so $|\lambda|\leq 1$.} it suffices to take $\alpha < \eta_n + \frac12 \delta t^{-1}$ for convergence of \cref{fpi}. 
\subsubsection{Parameters and initialisation} \label{sec:blurparams}
We take $\alpha = \eta_n = 2$, $\tau = \varepsilon = 0.002$, $K = 200$, and parameters $\beta$, $\nu_n$, $\sigma$, $\mu$, $k_s$, and $\delta$ as in \Cref{twocowsparams}.
We take $\mathcal{R}$ as in \Cref{twocowsparams} except that we change the multiplicative factor from $10$ to $1$. Another change is the number of iterations of the algorithm: while we iterate for 25 steps in the denoising problems, in preliminary runs (not reported) we noticed in the deblurring case that 15 steps are sufficient.

The initial reconstruction $x_0$ is computed via TV deblurring with fidelity 
term 45, 
i.e.,
\[
x_0 = \argmin_{x \in \mathbb{R}^{N\times \ell}} \operatorname{TV}(x) + 45\|\mathcal{T}(x) - y\|_F^2.
\]
This is solved by the split Bregman method of Getreuer \cite{getreuer2012tvdeconvolution}, using code from \url{https://getreuer.info/posts/tvreg/index.html} (accessed 10 August 2022), with 50 split Bregman iterations and a tolerance of $10^{-5}$. 
The initial segmentation $u_0$ of $x_0$ is computed via the SDIE scheme with the above parameters and initial state $u^0 = 0.47\chi_Y + f$. 

\subsubsection{Example results} \label{subsubsec_example_deblur_segmen}

Before discussing the choice of parameters, timings, and accuracy in more detail, we present an example run of the reconstruction-segmentation method for the noisy data and set-up discussed in the beginning of this section. Here, we use the parameter setting from \Cref{sec:blurparams}. We show the results of this example run in \Cref{fig:RSdeblur}.

\begin{figure}
	\centering 
	\includegraphics[width = \textwidth]{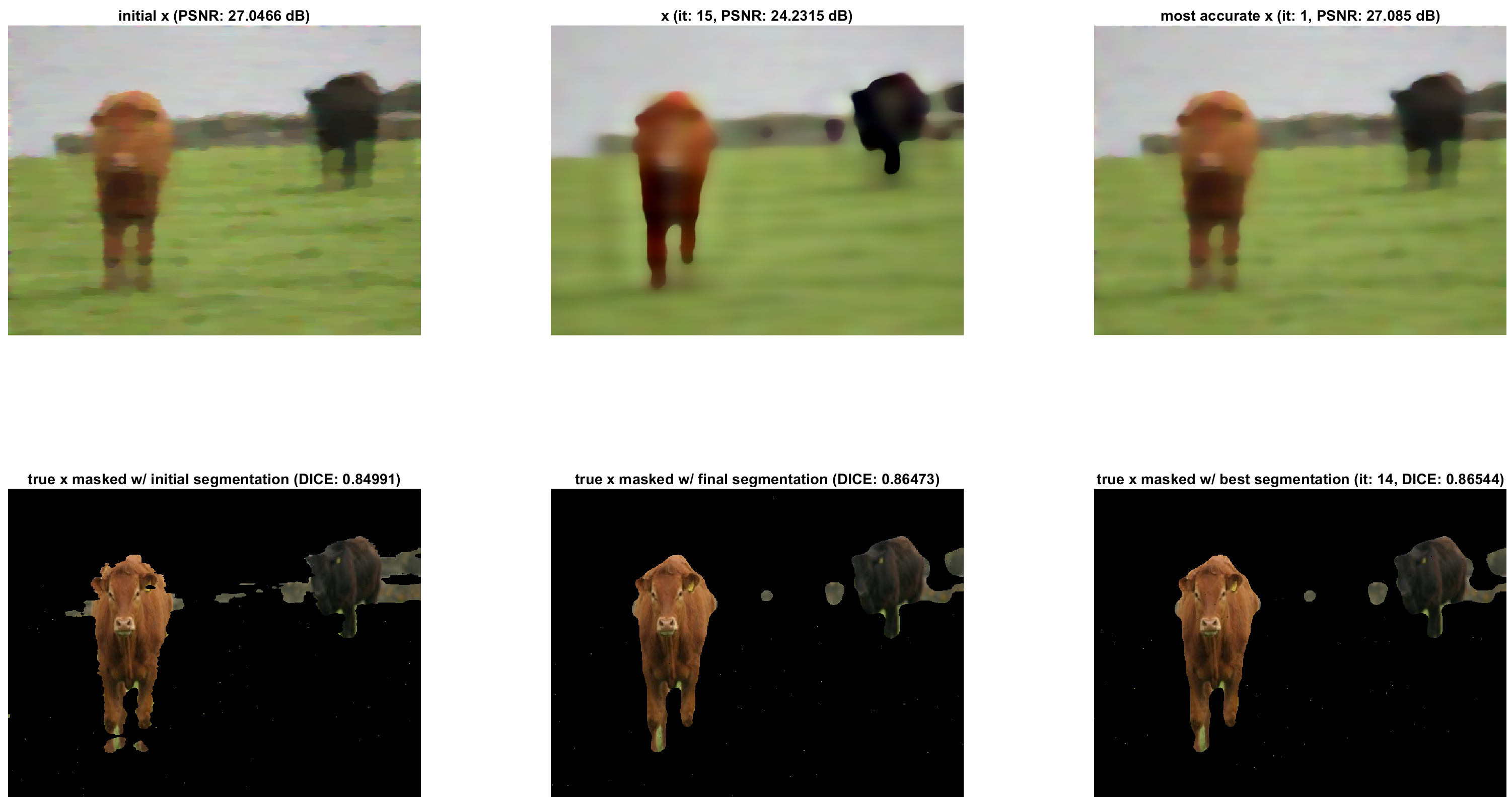}
	\caption{Reconstruction and segmentation of \Cref{ex_blurrycows} from a single run of the reconstruction-segmentation algorithm using the parameters from \Cref{sec:blurparams}. Each column contains the reconstruction (top) and segmentation (bottom). We show (left to right): initialisation, final iteration (=15), and the output with the best scores over the run. 
	} \label{fig:RSdeblur} 
\end{figure}

The effect of an increasing contrast between cows and background in the reconstruction, which we already observed for the denoising results in \Cref{subsubsec_example_denoise_segmen}, is even more visible in this deblurring reconstruction. In contrast to the denoising setting, here this even leads to reconstructions with PSNRs that deteriorate over the runtime of the algorithm. 

\subsubsection{Parameters, accuracy, and timings} \label{sec:deblresu}
As in \Cref{sec:denoresu}, we now study the deblurring-segmentation of \Cref{ex_blurrycows} more quantitatively. We again consider timings of the total runs and of the key steps, as well as reconstruction and segmentation accuracy. Again, we look at four parameter settings: the setting from \Cref{sec:blurparams}, a change in the segmentation parameters compared to \Cref{sec:blurparams} ($K = 100$ (decrease), $\varepsilon =  \tau = 0.00285$ (increase), $u^0=0.45\chi_Y + f$ (lower constant on $Y$)), an increase in the segmentation weight ($\beta = 1.52\cdot 10^{-5}$), and an increase in the reconstruction parameters ($\alpha =\eta_n = 10 $). We list the results of these settings in \Cref{tbl:deblurresults}, and present them in \Cref{fig:ex_blurry_recons,fig:results1blurr,fig:results2blurr,fig:results4blurr,fig:results3blurr}.
\begin{table}[]
\begin{tabular}{ll|rrrr}
\multirow{2}{*}{}         & Changes to \ref{sec:blurparams}      & $\emptyset$ & $K, \varepsilon, \tau, u^0$  & $\beta$ & $\alpha,  \eta_n$  \\ \hline
                          & Figure              & \ref{fig:results1blurr} &\ref{fig:results2blurr}  & \ref{fig:results4blurr} & \ref{fig:results3blurr} \\ \hline
\multirow{2}{*}{Accuracy} & Dice score [\%]    &  $86.99 (0.63)$ & $77.23 (12.96)$ & $85.11 (0.98)$  & $85.79 (0.82)$ \\
                          & PSNR [dB]    &$24.16 (0.15)$  & $22.30 (3.52)$ & $18.80 (0.23)$ & $27.32 (0.06)$ \\ \hline
\multirow{4}{*}{Timings [s]}  & total               &$179.73 (0.97)$  &$124.21 (4.73)$ & $179.60 (0.58)$  & $177.53 (0.39)$ \\
                          & initialisation      & $3.31 (0.09)$  & $3.93 (0.18)$  &  $3.30 (0.09)$ &  $3.29 (0.06)$  \\
                          & reconstruction   & $5.91 (0.11)$ & $5.44 (0.77)$   & $5.97 (0.09)$ & $5.87 (0.06)$ \\
                          & segmentation & $5.85 (0.08)$ & $2.58 (0.24)$ & $5.78 (0.06)$ & $5.75 (0.06)$
\end{tabular}
\caption{Results averaged over 50 runs for different parameter settings for \Cref{ex_blurrycows}. Values are given as ``mean(standard deviation)''. Total timing is for 15 iterations of the algorithm.
\label{tbl:deblurresults}}
\end{table}

\begin{figure}
    \centering
    \includegraphics[width = \textwidth, trim = 0 0 0 0, clip]{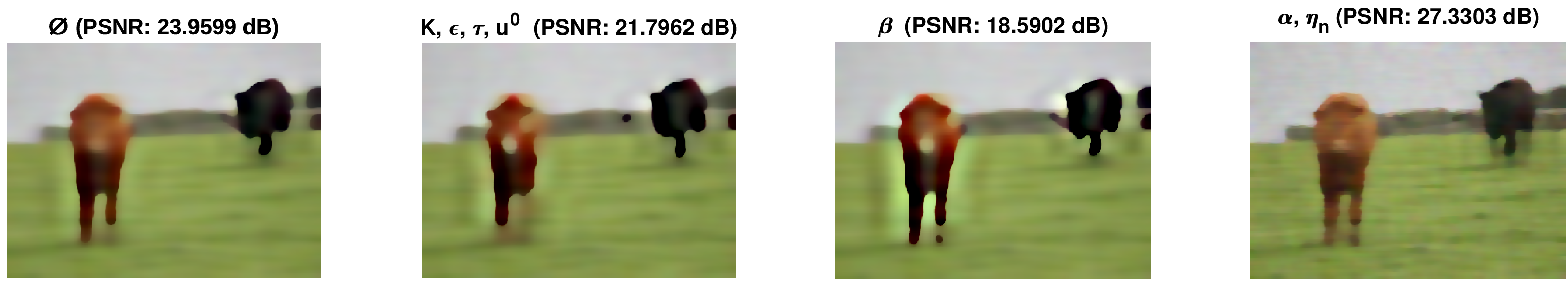}
    \caption{Example reconstruction results obtained from the deblurring-reconstruction method for the different parameter settings for \Cref{ex_blurrycows} described in \Cref{sec:deblresu}.}
    \label{fig:ex_blurry_recons}
\end{figure}

\begin{figure} 
    \centering
    \includegraphics[width = \textwidth, trim = 500 0 500 0, clip]{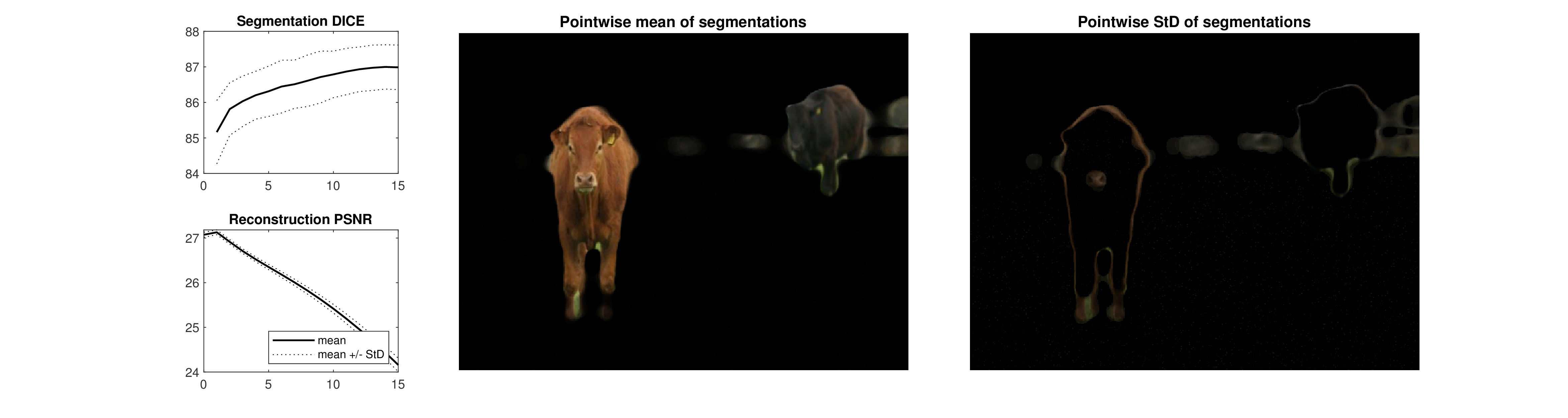}
    \caption{Results averaged over 50 runs of the reconstruction-segmentation scheme in the deblurring case. The line plots (left) show the evolution of the Dice score (\%) and PSNR (dB) over the 15 iterations of the algorithm. The images (centre and right) show mean and standard deviation of the segmentation at $t=15$ which are used as weights for the ground truth images. The setting is that of \Cref{sec:blurparams}.
}
    \label{fig:results1blurr}
\end{figure}

\begin{figure} 
    \centering
    \includegraphics[width = \textwidth, trim = 500 0 500 0, clip]{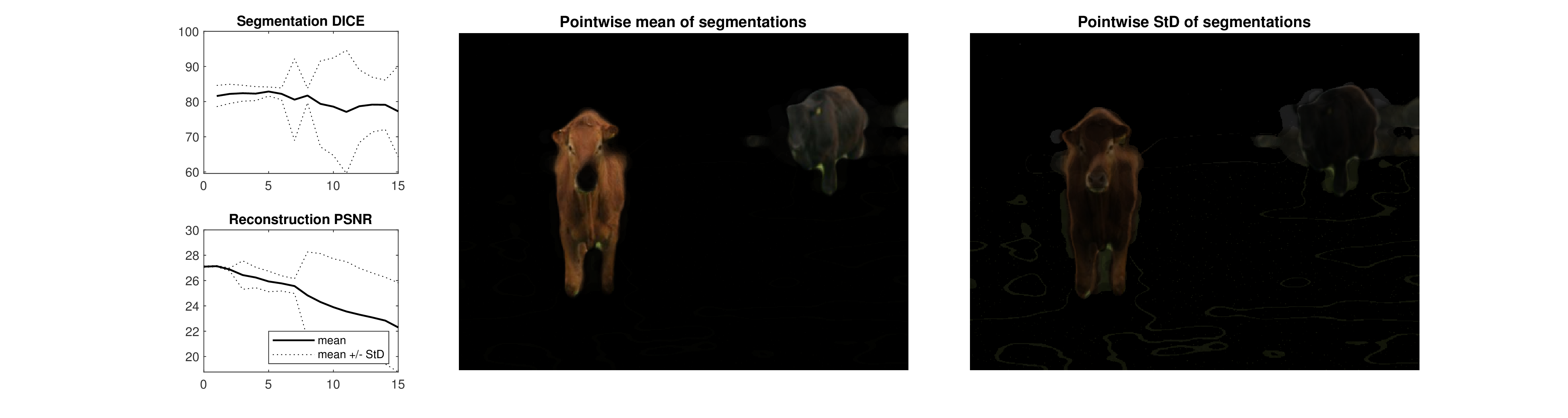}
    \caption{Results averaged over 50 runs of the reconstruction-segmentation scheme in the deblurring case. The line plots (left) show the evolution of the Dice score (\%) and PSNR (dB) over the 15 algorithm iterations. The images (centre and right) show mean and standard deviation of the segmentation at $t=15$ which are used as weights for the ground truth images.  Setting is as in \Cref{sec:blurparams}, with changes:
    $\varepsilon = \tau = 0.00285$, $K=100$, $u^0=0.45\chi_Y +f$.
    }
    \label{fig:results2blurr}
\end{figure}

\begin{figure} 
    \centering
    \includegraphics[width = \textwidth, trim = 500 0 500 0, clip]{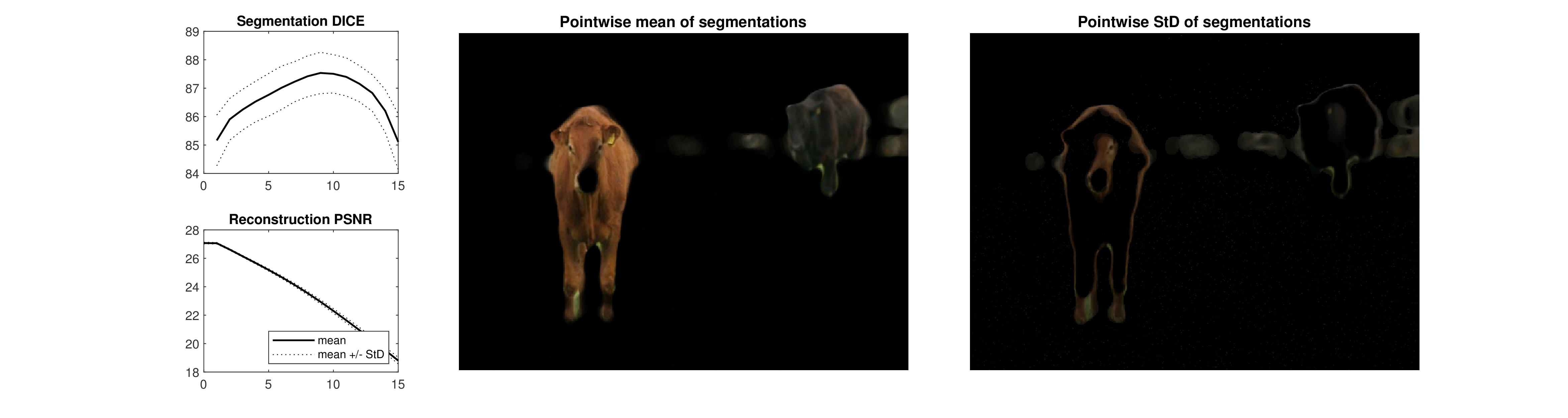}
    \caption{Results averaged over 50 runs of the reconstruction-segmentation scheme in the deblurring case.  The line plots (left) show the evolution of the Dice score (\%) and PSNR (dB) over the 15 iterations of the algorithm. The images (centre and right) show mean and standard deviation of the segmentation at $t=15$ which are used as weights for the ground truth images. The setting is that of \Cref{sec:blurparams}, subject to the change: $\beta = 1.5 \cdot 10^{-5}$.
}
    \label{fig:results4blurr}
\end{figure}

\begin{figure} 
    \centering
    \includegraphics[width = \textwidth, trim = 500 0 500 0, clip]{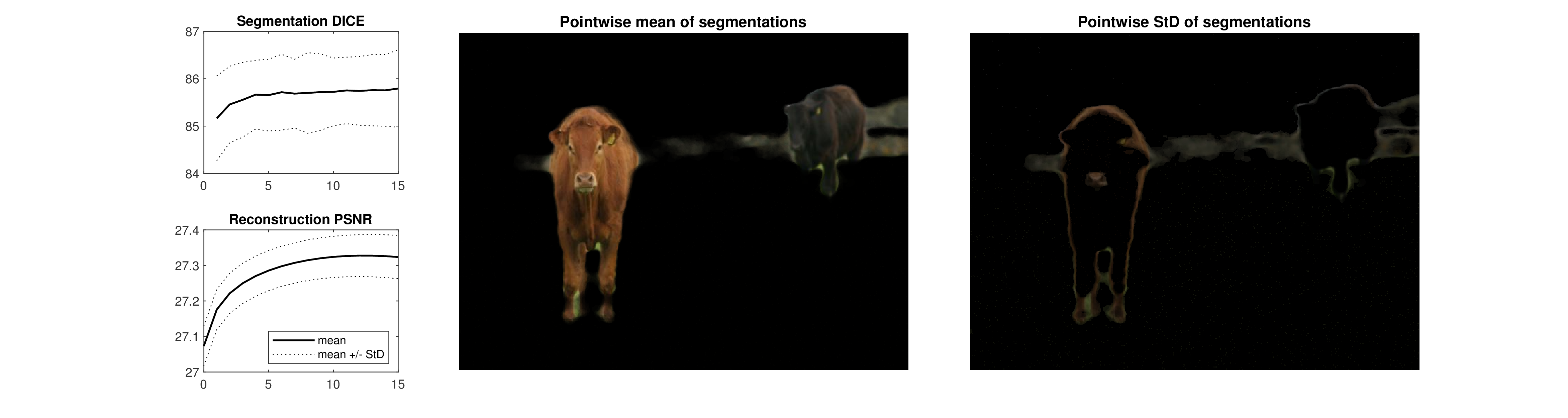}
    \caption{Results averaged over 50 runs of the reconstruction-segmentation scheme in the deblurring case. The line plots (left) show the evolution of the Dice score (\%) and PSNR (dB) over the 15 iterations of the algorithm. The images (centre and right) show mean and standard deviation of the segmentation at $t=15$ which are used as weights for the ground truth images. The setting is that of \Cref{sec:blurparams}, subject to the change:  $\alpha  = \eta_n = 10$.
}
    \label{fig:results3blurr}
\end{figure}

We now comment on those simulation results. As mentioned before, we see that in most settings, the reconstruction PSNR is reduced over the course of the algorithm. The joint reconstruction-segmentation algorithm enhances the constrast between segments to a point where the reconstruction quality suffers (see \Cref{fig:ex_blurry_recons}). Unlike in most settings we used in the denoising problem, here the segmentation accuracy is not always monotonically increasing. 

It is interesting to note that (as shown in \Cref{fig:results2blurr}) $\varepsilon, \tau$ should be chosen smaller for the deblurring problem than for \Cref{ex_noisycows} or the noise-free problem (see \cite{Budd3}). In the Allen--Cahn equation, a smaller $\varepsilon$ leads to a smaller interface and, thus, a harder thresholding. A harder thresholding should lead to a stronger regularisation which aids the deblurring. Indeed, the softer thresholding in  \Cref{fig:results2blurr} leads to more parts of the background being incorrectly identified as cow. Moreover, an even larger Nystr\"om rank $K$ is required. If $K = 100$, the results have a large variance and barely improve the initial segmentation on average. 

When increasing $\beta$, we see a slight increase in the Dice standard deviation, which might imply that the method becomes more unstable when increasing the influence of the Ginzburg--Landau energy. When increasing $\alpha$, as expected, we see an increased reconstruction accuracy. 

In the case where $\beta = 1.5\cdot 10^{-5}$ (\Cref{fig:results4blurr}), we see a certain long-term instability: the Dice score reaches its maximum at iteration step number 9, but is considerably lower at the end of the algorithm. In iteration step 9, we obtain an average Dice of $0.8754$ ($\pm 0.0073$), beating all of the results reported in \Cref{tbl:deblurresults}. Hence, the number of iterations is also an important tuning parameter as the system can experience metastability.

\subsubsection{Comparison to sequential reconstruction-segmentation}
As in the denoising case, we observe that (except in the setting of \Cref{fig:results2blurr}) our joint scheme outperforms the sequential TV-based initialisation in terms of Dice score (and in the setting of \Cref{fig:results3blurr}, also in PSNR). 

For a fairer comparison, we seek to compare the performance of our scheme to that of a sequential method using a more sophisticated deblurrer. We consider three alternative deblurrers:\footnote{We also tested the built-in \textsc{Matlab} deblurrers, but these did much worse than the TV deblurring.} the TV-based deblurrer but with 500 split Bregman iterations and tolerance $10^{-10}$; the BM3D deblurrer from Mäkinen \emph{et al.} \cite{MakinenAzzariFoi2020} (i.e., \texttt{BM3DDEB} in the associated software); and a BM3D denoising followed by a TV deblurring (with fidelity term 150, 100 split Bregman iterations, and tolerance $10^{-7}$). Typical deblurrings via these methods are shown in \Cref{fig:deblurrers}.
\begin{figure}[ht]
	\centering
	\includegraphics[width = \textwidth]{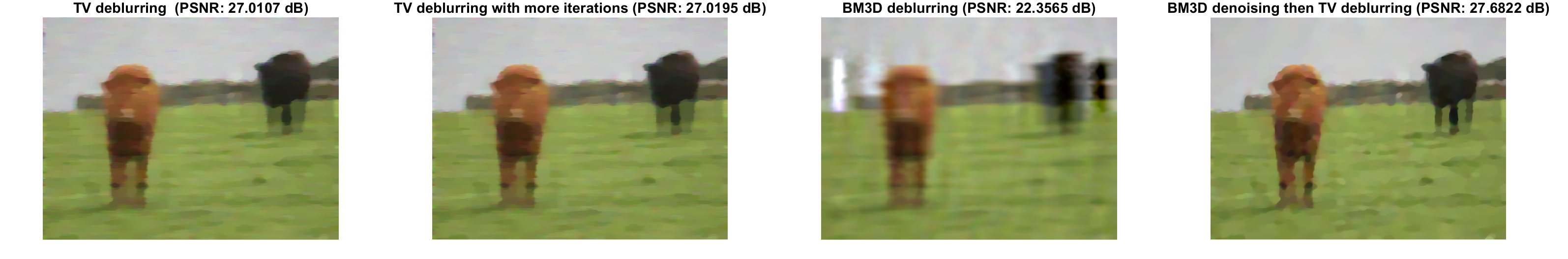}
	\caption{Typical deblurred output for the TV-based deblurrings, BM3D deblurring, and BM3D denoising followed by TV deblurring. \label{fig:deblurrers}}
\end{figure}
We segment only the latter of these, as it is the only one with a perceptible improvement over the TV deblurring.  The mean Dice score $\pm$ standard deviation (over 50 trials) for MBO segmentations of this deblurred image (with $\tau = \varepsilon = 0.00285$, $K = 200$, and initial state $0.45\chi_Y + f$, cf. \Cref{fig:results2blurr}) is $0.8737\pm 0.0143$.
\footnote{Taking $\tau=\varepsilon=0.002$ and initial state $0.47\chi_Y +f$ (as in \Cref{sec:blurparams}) performs slightly worse, with mean Dice score $\pm$ standard deviation (over 50 runs) equal to $0.8679\pm 0.0095$.
}   Segmentations from the first three of these runs are shown in \Cref{fig:BM3D+TV_seg}. Visually, these segmentations appear slightly patchier than those we obtain with our joint method (cf. \Cref{fig:RSdeblur}), but they have higher Dice scores (with the exception of the metastable optimal segmentation observed in \Cref{fig:results4blurr}, which has a slightly higher mean Dice score). However, the sequential deblurring-segmentations are much faster than the joint method;  over 50 runs, their mean ($\pm$ standard deviation)  reconstruction time is $23.45 (\pm 0.16)s$, segmentation time $5.85 (\pm 0.05)s$, and total time $29.30 (\pm 0.17)s$.

\begin{figure}[h]
	\centering 
	\includegraphics[width = \textwidth]{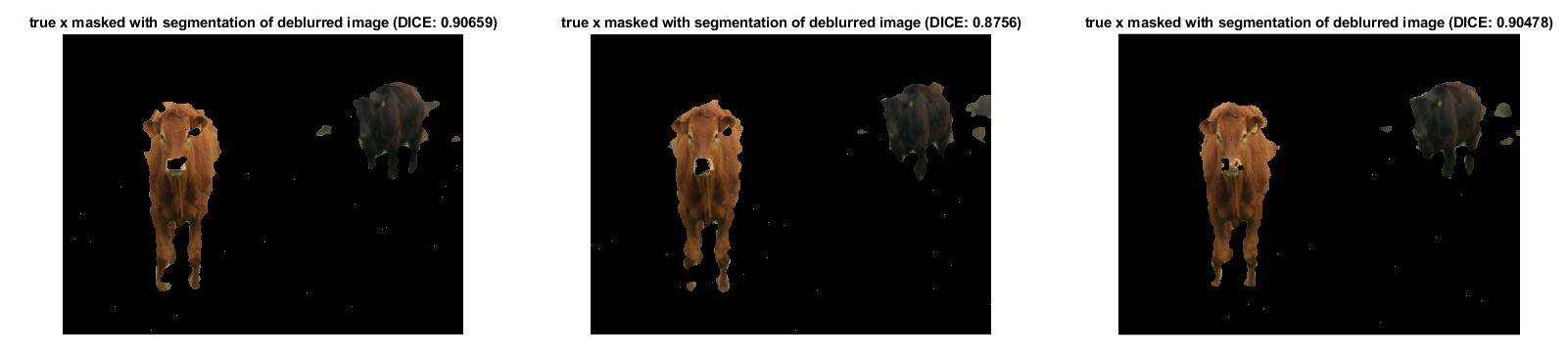}
	\caption{The true image masked with three typical segmentations of the BM3D denoised and TV deblurred image.} \label{fig:BM3D+TV_seg} 
\end{figure}
\section{Conclusions and directions for future work}

In this paper, we have developed a joint reconstruction-segmentation scheme which incorporates the highly effective graph-based segmentation techniques that have been developed over the past decade. There are numerous challenges which arise in the efficient implementation of this scheme, but we have shown how these obstacles can be navigated. Furthermore, we have shown how the Kurdyka--\L ojasiewicz-based theory of \cite{attouch2010proximal,BoltePALM} can be applied to show the convergence of our scheme.

Finally, we have tested our scheme on highly-noised and blurry counterparts of the ``two cows'' image familiar from the literature. In the denoising case, we observed that our scheme gives very accurate segmentations despite the very high noise level, and gives reasonably accurate reconstructions, with a run time of about 2.5 minutes. Moreover, our joint scheme substantially outperforms sequential denoising-segmention methods, in both segmentation and reconstruction accuracy, even when much more sophisticated denoisers are employed, albeit at the cost of a much longer run time. 

In the deblurring case, again our scheme gives highly accurate segmentations (with a run time of about 3 minutes), but in the reconstructions it introduces an artificial level of contrast between the ``cows'' and the ``background''. This aids the segmentation accuracy at the cost of the reconstruction accuracy deteriorating over the course of the iterations. Increasing the reconstruction weighting prevents this effect, at the expense of a lower segmentation accuracy. Increasing the segmentation weighting has the curious effect of producing a very accurate but metastable (w.r.t. a change in the number of iterations) segmentation after 9 iterations. Compared to sequential deblurring-segmentation, our scheme produces worse reconstructions and on the whole slightly worse segmentations (with the exception of the metastable segmentation, which is slightly better) and runs considerably slower. However it should be noted that the deblurring method which was used is more sophisticated than the one used within our scheme, and that our joint scheme does give substantially more accurate segmentations than sequential deblurring-segmentation using only a TV-based deblurrer (i.e., our initialisation process). 

There are three major directions for future work. First, the scheme in its current form has many parameters which must be tuned by hand. Future work will seek to develop techniques for tuning these parameters in a more principled way, so that this scheme can be applied to a large image set without the need for constant manual re-tuning. 

Second, we have in this work applied our scheme only to artificially noised/blurred images, and in the comparison to sequential methods we did not exhaust the state-of-the-art. Future work will seek to test our scheme on real observations, with potentially unknown ground truths and/or forward maps, and compare our scheme to other state-of-the-art methods (including other joint reconstruction-segmentation methods such as those in Corona \emph{et al.} \cite{Corona2019}). 

Finally, there are a number of potential ways to make our scheme more accurate. One is to use a different regulariser. Candidates of particular interest are implicit regularisation with a ``Plug-and-Play'' denoiser (as in Venkatakrishnan \emph{et al.} \cite{PNP}) and learned regularisation (see e.g. Arridge \emph{et al.} \cite[\S 4]{Arridge2019}). Another potential improvement is the use of more sophisticated feature maps (e.g., the CNN-VAE maps used in Miller \emph{et al.} \cite{Miller}), though in this choice one is constrained by the need to compute both the map and its adjoint very efficiently. Moreover, the theory in this paper assumes a linear feature map. A third opportunity for improvement lies in solving \cref{xupdate2} without resorting to linearisation. 

\appendix

\section{Computing $g_n$}\label{sec:g}
In this appendix we explain how we compute $g_n$, which is defined in \Cref{xupdatesec}. We recall from \Cref{sec:setup} that $z:=\mathcal{F}(x)$, and likewise we define $z_n:=\mathcal{F}(x_n)$. Then 
\[
 g_n = \mathcal{F}^*(\nabla_z\mathscr{F}_1(z_n) + \nabla_z\mathscr{F}_2(z_n)).
\]
We now compute each term in turn.
To compute $\nabla_z\mathscr{F}_1(z)$, note that 
\begin{align*}
	\mathscr{F}_1(z+\delta z) &=\beta \langle{(G_n)_{YY},\Omega_{YY}(z+\delta z)} \rangle_F
	\\&
	= \beta\left\langle{(G_n)_{YY},\Omega_{YY}(z) + \left[\langle\nabla_z\Omega_{ij}(z),\delta z\rangle_F\right]_{i,j\in Y}} \right\rangle_F +o(\delta z)\\
	&= \mathscr{F}_1(z) + \beta \sum_{i,j\in Y} (G_n)_{ij}\sum_{l \in Y}\sum_{r=1}^q (\nabla_z \Omega_{ij}(z))_{lr} \delta z_{lr} + o(\delta z)
\end{align*}
and thus for all $l\in Y$ and $r \in \{1,...,q\}$
\[
(\nabla_z\mathscr{F}_1(z))_{l r} =\beta \sum_{i,j\in Y} (G_n)_{ij} (\nabla_z\Omega_{ij}(z))_{l r}.
\]
Now, since $(\Omega_{YY})_{ij}(z) = e^{-\|z_i - z_j\|_2^2/q\sigma^2}$ for $i\neq j$ and $(\Omega_{YY})_{ii}(z)=0$, we have  
\begin{equation*}
	\nabla_{z_{l r}} (\Omega_{YY})_{ij}(z) =\frac{2}{q\sigma^2} \left.\begin{cases} 0, &\hspace{-1em}l \notin \{i,j\} \\
		(\Omega_{YY})_{il}(z)(z_{ir} -z_{l r}), &j = l\\
		(\Omega_{YY})_{l j}(z)(z_{l r}-z_{jr}), &i=l
	\end{cases}\hspace{-0.05cm}\right\} 
	= \frac{2}{q\sigma^2}(\Omega_{YY})_{ij}(z)(z_{ir}-z_{jr})(\delta_{jl}-\delta_{il}),
\end{equation*}
where $\delta$ denotes the Kronecker delta. Therefore,
\[
(\nabla_z\mathscr{F}_1(z))_{l r} = \frac{2\beta}{q\sigma^2}\sum_{i,j\in Y} (G_n)_{ij} \Omega_{ij}(z)(z_{ir}-z_{jr})(\delta_{jl}-\delta_{il}).
\]
Hence, letting $\mathcal{A}(z):=(G_n)_{YY}\odot \Omega_{YY}(z)$ (where $\odot$ denotes the Hadamard product), 
\begin{equation*}
	(\nabla_z\mathscr{F}_1(z))_{l r}= \frac{2\beta}{q\sigma^2}\sum_{i,j\in Y} \mathcal{A}_{ij}(z)(z_{ir}-z_{jr})(\delta_{jl}-\delta_{il})
	=  \frac{4\beta}{q\sigma^2} \left(\sum_{j \in Y} \mathcal{A}_{lj}(z)z_{jr} - z_{lr} \sum_{j\in Y} \mathcal{A}_{lj}(z) \right)
\end{equation*}
since $\mathcal{A}(z)$ is symmetric, and therefore
\begin{equation*}
\nabla_z\mathscr{F}_1(z) = \frac{4\beta}{q\sigma^2} \left( \mathcal{A}(z)z - (\mathcal{A}(z)\mathbf{1}_N)\odot z \right) .
\end{equation*}

To compute $\nabla_z\mathscr{F}_2(z)$, note that by a similar argument as the above,
\[
(\nabla_z\mathscr{F}_2(z))_{l r} = -\frac{4\beta}{q\sigma^2} \sum_{i\in Y,j\in Z}  (G_n)_{ij}(\Omega_{YZ})_{ij}(z,z_d)(z_{ir}-(z_d)_{jr})\delta_{il}, 
\]
 for all $l \in Y$ and $r \in \{1,...,q\}$. Hence, letting $\mathcal{B}(z):=(G_n)_{YZ}\odot \Omega_{YZ}(z,z_d)$, we get 
\begin{equation*}
	(\nabla_z\mathscr{F}_2(z))_{l r}= \frac{4\beta}{q\sigma^2}\sum_{i\in Y,j\in Z} \mathcal{B}_{ij}(z)((z_d)_{jr}-z_{ir})\delta_{il} 
	= \frac{4\beta}{q\sigma^2}\left( \sum_{j\in Z}\mathcal{B}_{lj}(z)(z_d)_{jr} - z_{lr}\sum_{j\in Z}\mathcal{B}_{lj}(z) \right)
\end{equation*}
and therefore we arrive at a similar formula as for $\nabla_z\mathscr{F}_1(z)$:
\begin{equation*}
\nabla_z\mathscr{F}_2(z) = \frac{4\beta}{q\sigma^2} \left( \mathcal{B}(z)z_d - (\mathcal{B}(z)\mathbf{1}_{N_d})\odot z \right) .
\end{equation*}

Tying this all together, we get 
\be
\label{gradF2}
\begin{split}
	g_n &= \frac{4\beta}{q\sigma^2}\mathcal{F}^*\big( \mathcal{A}(z_n)z_n+\mathcal{B}(z_n)z_d -  (\mathcal{A}(z_n)\mathbf{1}_N +\mathcal{B}(z_n)\mathbf{1}_{N_d})\odot z_n \big)\\
	&=\frac{4\beta}{q\sigma^2} \mathcal{F}^*\left( \mathcal{C}_n\begin{pmatrix}z_n\\z_d\end{pmatrix} -  (\mathcal{C}_n\mathbf{1}_{N+N_d})\odot z_n\right),
\end{split}
\ee
where $\mathcal{C}_n:=(G_n)_{YV}\odot \Omega_{YV}(z_n,z_d)$. 
To compute \cref{gradF2}, we need to compute matrix-vector products of the form $\mathcal{C}_nv$. 
Recalling \cref{Gn}, it follows that 
\[
(G_n)_{YV} = -u_n|_Yu_n^T + v_n|_Y\mathbf{1}_V^T  + \mathbf{1}_Yv_n^T. 
\]
Next, we observe a neat linear algebra result\footnote{To see this, observe that  in suffix notation, for $i\in Y$ and $j\in V$, the LHS is $(-(u_n)_i(u_n)_j + (v_n)_i + (v_n)_j)A_{ij}v_j$ and the RHS is $-(u_n)_i A_{ij}((u_n)_jv_j) + (v_n)_i A_{ij} v_j + A_{ij}((v_n)_j v_j)$.}
\[
\left((-u_n|_Yu_n^T + v_n|_Y\mathbf{1}_V^T  + \mathbf{1}_Yv_n^T )\odot A\right)v = -u_n|_Y \odot (A(u_n \odot v)) + v_n|_Y \odot (Av) + A(v_n\odot v),
\]where in this case $A =  \Omega_{YV}(z_n,z_d)$. Hence it suffices to be able to compute terms of the form $\Omega_{YV}(z_n,z_d)v $. Via the Nystr\"om extension \cref{Nys} we have 
\be\label{Omv}
\Omega_{YV}(\mathcal{F}(x_n),z_d)v \approx \left( \Omega_{VX}(\mathcal{F}(x_n),z_d)\Omega_{XX}^{-1}(\mathcal{F}(x_n),z_d)\Omega_{XV}(\mathcal{F}(x_n),z_d) v \right)|_{Y},
\ee
where $X\subseteq V$ is some interpolation set, so we can compute such products quickly.

These considerations lead us to \Cref{Cz} to compute $\mathcal{C}_n v$ for $v\in \mathcal{V}$.

\begin{algorithm}[h]
	\caption{Definition of the \texttt{CProd} function to be used in \Cref{RSalg2}.\label{Cz}} 
	\begin{algorithmic}[1]
\Function{CProd}{$z_n,v,u,v_n,\sigma,V,Y,Z,K$} \Comment{Approximates $\mathcal{C}_n v$ as above}
\State $B: w \mapsto (\texttt{OmegaProd}(w,z_n,z_d,q,\sigma,V,Y,Z,K))|_Y$ \Comment{See below}
\State \textbf{return} $-u|_Y \odot B(u \odot v) + v_n|_Y \odot B(v) +  B(v_n \odot v)$
\EndFunction
\Function{OmegaProd}{$v,z,z_d,q,\sigma,V,Y,Z,K$} \Comment{
	\parbox[t]{.4\linewidth}{
Approximates $\Omega(z,z_d)v$ via the Nystr\"om extension as in \cref{Omv}
}
}
\State $\omega: ij \mapsto \Omega_{ij}(z,z_d,q,\sigma)$ \Comment{Defined as in \cref{Omega}}
\State $ {X} = \texttt{random\_subset}(Y,K/2) \cup \texttt{random\_subset}(Z,K/2) $ 
\State $(\omega_{XX},\omega_{VX}) = (\omega(X,X),\omega(V,X))$
\State $\omega_{XX}v' = \omega_{VX}^T v$ \Comment{Solving the linear system for $v'$}
\State \textbf{return} $\omega_{VX}v'$
\EndFunction
	\end{algorithmic}
\end{algorithm}

\section{Primal-dual optimisation methods for \cref{xupdate3}}\label{primaldual}

To solve \cref{xupdate3}, we shall be employing an algorithm of Chambolle and Pock \cite{ChPock}.
Following \cite{ChPock}, we rewrite \cref{xupdate3} as:
\be \label{CPprimal}
\min_{x \in \mathbb{R}^{N \times \ell}} R(\mathcal{K}x) + G(x), 
\ee
where $G(x):= \alpha \|\mathcal{T}(x)-y\|_F^2 + \eta_n\|x-\tilde x_n\|_F^2$.
Then the primal problem \cref{CPprimal} can be reformulated as the primal-dual saddle point problem (we recall \Cref{foot:R*} for the definition of $R^*$)
\be \label{CPsaddle}
\min_{x \in \mathbb{R}^{N \times \ell}} \max_{p \in \mathcal{K}( \mathbb{R}^{N \times \ell})} \: \langle \mathcal{K}x, p \rangle + G(x) - R^*(p).
\ee
Reformulating \cref{CPprimal} as \cref{CPsaddle} suggests approaching the minimiser of \cref{CPprimal} by alternately updating a sequence of $x_n$ and $p_n$. 
The algorithm we shall be using, i.e. \cite[Algorithm 2]{ChPock}, is a sophistication of this basic idea, and is summarised as \Cref{pdalg}. For details and a convergence analysis, see \cite{ChPock}. 
\begin{algorithm}[h]
	\caption{Algorithm for solving \cref{CPprimal}, using  \cite[Algorithm 2]{ChPock}.\label{pdalg}}
	\begin{algorithmic}[1]
		\Function{PrimalDual}{$x_0, \gamma, \mathcal{K}, \mathcal{K^*}, \texttt{proxRS},\texttt{proxG}$}
		\Comment{$\gamma$ must satisfy \cite[(35)]{ChPock}} 
		\State $p_0 = \mathcal{K}(x_0)$
		\State $\bar x_0 = x_0$
		\State $ (\delta t_{0,1},\delta t_{0,2}) = (1/\|\mathcal{K}\|,0.99/\|\mathcal{K}\|)$ \Comment{We must have $\delta t_{0,1}\delta t_{0,2} \|\mathcal{K}\|^2\leq 1$}
		\State $n=0$
		\While{stopping condition not met}
		\State $ p_{n+1} = \texttt{proxRS}(p_n + \delta t_{n,1} \mathcal{K} (\bar x_n),\delta t_{n,1})$ \Comment{$\texttt{proxRS}(p,\delta t):= \operatorname{prox}_{\delta t R^*}(p)$}
		\State $ x_{n+1} = \texttt{proxG}(x_n - \delta t_{n,2} \mathcal{K}^* (p_{n+1}),\delta t_{n,2})$ \Comment{$\texttt{proxG}(x,\delta t):= \operatorname{prox}_{\delta t G}(x)$}
		\State $ \theta_n = (1 + 2\gamma \delta t_{n,2})^{-\frac12} $
		\State $ (\delta t_{n+1,1}, \delta t_{n+1,2}) = (\delta t_{n,1}\theta_n ^{-1},\delta t_{n,2}\theta_n) $
		\State $ \bar x_{n+1} = x_{n+1} + \theta_n(x_{n+1}-x_n)$
		\State $ n = n + 1$
		\EndWhile
		\State \textbf{return} $x_n$
		\EndFunction
	\end{algorithmic}
\end{algorithm}
We shall assume that for any $\delta t > 0$ we can efficiently compute the proximal operator of $\delta t R^*$, $\operatorname{prox}_{\delta t R^*}$.\footnote{For a proper, l.s.c., and convex function $f$, the \emph{proximal (a.k.a. resolvent) operator of $f$} is defined by $\operatorname{prox}_f(x'):=\argmin_x f(x) + \frac12\|x-x'\|^2$. As mentioned in \cite{ChPock}, Moreau's identity allows $\operatorname{prox}_{\delta t R^*}$ to be computed via $\operatorname{prox}_{\delta t^{-1}R}$. Since computing $\operatorname{prox}_{\delta t^{-1}R}$ is equivalent to performing a denoising regularised by $R$, this is a place where ``Plug-and-Play'' denoising methods (see Venkatakrishnan \emph{et al.} \cite{PNP}) could be employed instead of the explicit regularisation methods we will use in this paper. } So to employ \Cref{pdalg}, we need two ingredients: a method for computing $\operatorname{prox}_{\delta t G}$, and a $\gamma$ obeying \cite[(35)]{ChPock}, i.e. such that for all $x,x'\in \mathbb{R}^{N \times \ell}$, 
\begin{equation*} G(x') \geq G(x) + \langle \nabla_x G(x), x'-x \rangle_F + \frac12 \gamma \|x - x'\|_F^2.\end{equation*}

First, assume that $\mathcal{T}$ is linear. Then it follows that $
\nabla_x G(x) = 2\alpha \mathcal{T}^*(\mathcal{T}(x)-y) + 2\eta_n(x-\tilde x_n)$
and thus we require that
\begin{align*}
  \frac{ 1}{2} \gamma\|x - x'\|_F^2 & \leq 
  \hspace{0cm}\alpha \langle \mathcal{T}(x)+\mathcal{T}(x')-2y, \mathcal{T}(x') - \mathcal{T}(x)\rangle_F + \eta_n \langle x + x' - 2\tilde x_n, x'-x \rangle_F \\
  &\hspace{0cm}\quad-  \langle 2\alpha \mathcal{T}^*(\mathcal{T}(x)-y) + 2\eta_n(x-\tilde x_n), x'-x \rangle_F  \\
 &\hspace{0cm}= \alpha \left( \langle \mathcal{T}(x)+\mathcal{T}(x')-2y, \mathcal{T}(x' - x)\rangle_F - 2 \langle\mathcal{T}(x)-y,\mathcal{T}(x'-x) \rangle_F \right) 
 + \eta_n\|x - x'\|_F^2\\
 &\hspace{0cm}= \alpha \|\mathcal{T}(x' - x)\|^2_F + \eta_n\|x - x'\|_F^2, 
\end{align*}
and hence it suffices to take $\gamma = 2\eta_n$. Finally, recall that $\operatorname{prox}_{\delta t G}$ is defined by 
\[
\operatorname{prox}_{\delta t G}(x) = \argmin_{x'\in \mathbb{R}^{N \times \ell}} G(x') + \frac{\|x'-x\|_F^2}{2\delta t}
\]
which has unique minimiser $x'$ solving $\nabla_x G(x') + \frac{1}{\delta t}(x'-x) = 0$. Solving for $x'$, we get: 
\be\label{proxG}
 \operatorname{prox}_{\delta t G}(x) =  \left((\delta t^{-1}+2\eta_n)I + 2\alpha \mathcal {T}^* \mathcal{T}\right)^{-1}\left(2\alpha\mathcal{T}^*(y)+2\eta_n\tilde x_n+\frac{x}{\delta t}\right).
\ee
If $\mathcal{T}$ is non-linear, things are more difficult. There may not exist a valid $\gamma$, and if so one must use a method such as \cite[Algorithm 1]{ChPock} to solve \cref{xupdate3}, which has slower convergence. We must still compute  $\operatorname{prox}_{\delta t G}(x)$. Since $\mathcal{T}$ is assumed to be differentiable, for all $x$ there exists a linear map $D\mathcal{T}_x$ such that $\mathcal{T}(x + \delta x) = \mathcal{T}(x) + D\mathcal{T}_x(\delta x) + o(\delta x)$. Then it follows that 
\[
\nabla_x G(x) = 2\alpha (D\mathcal{T}_x)^*(\mathcal{T}(x)-y) + 2\eta_n(x-\tilde x_n)
\] 
and hence $x' := \operatorname{prox}_{\delta t G}(x)$ solves 
\begin{equation}\label{proxGnonlin}
	 (1+2\delta t \eta_n )x' + 2\alpha \delta t (D\mathcal{T}_{x'})^*(\mathcal{T}(x')-y) = x + 2\delta t \eta_n\tilde x_n.
\end{equation} 
Finally, \cref{proxGnonlin} can be solved by numerical root-finding methods (see e.g. \cite[\S9]{NumRec}). 

\section{Computing the SDIE scheme}\label{sec:computeSDIE}
In this appendix we describe how we compute the SDIE scheme that is described in \Cref{subsec:SDIE}. By \Cref{fSDsolnthm}, an SDIE update has two steps: a fidelity-forced diffusion and a piecewise linear thresholding. The thresholding is trivial to compute, but the diffusion is non-trivial. Our method for computing fidelity-forced diffusion was described in detail in \cite[\S 5.2.6]{Buddthesis}, so here we only reproduce the key details. 

By \Cref{fdiffusethm}, given $u_n$ and the parameter $\tau>0$, we seek to compute
\[
\mathcal{S}_\tau u_n =	e^{-\tau(\Delta + M')}u_n + b,
\]
where $b:=F_\tau(\Delta + M')M'f'$. To compute $e^{-\tau(\Delta + M')}u_n $, we use the Strang formula \cite{Strang}:
\begin{equation*} 
    e^{-\tau(\Delta + M')} = \left( e^{-\frac{\tau}{2m}M'} e^{-\frac{\tau}{m} \Delta} e^{-\frac{\tau}{2m}M'}\right)^m + \bigO\left({m^{-2}}\right) 
	= \left( e^{-\frac12 \delta t M'} e^{-\delta t \Delta}e^{-\frac12 \delta t M'}\right)^m + \bigO\left(\delta t^2\right),
\end{equation*}
where $m \in \mathbb{N}$ and $\delta t := \tau/m$. Computing matrix-vector products with $e^{-\frac12 \delta t M'}$ is straightforward, since $M'$ is a diagonal matrix. To compute matrix-vector products with $e^{-\delta t \Delta}$, we will compute an approximate eigendecomposition of $\Delta$ using the Nystr\"om-QR method 
 (recommended by Alfke \emph{et al.} \cite{APSV2018}) 
described in \Cref{nysQR}. 
For details on this method, see \cite[\S5.2.3]{Buddthesis}.
\begin{nb}\label{nb:Nys}
 \Cref{nysQR} really computes an approximate decomposition $ U_s \Sigma U_s^T$ of $\tilde \omega := D^{-1/2} \omega D^{-1/2}$, and then makes a further approximation
	$
	\Delta_s = I - \tilde\omega \approx U_s(I_{K}-\Sigma)U_s^T = U_s\Lambda U_s^T
	$,
	where $I_K$ is the $K\times K$ identity matrix, and so on for the random walk Laplacian.
\end{nb}
\begin{algorithm}[h]
	\caption{Nystr\"om-QR method for computing an approximate SVD of $\Delta$ or $\Delta_s$.
		\label{nysQR}}
	\begin{algorithmic}[1]
		\Function{Nystr\"omQR}{$ij\mapsto\omega_{ij}, V, Z, K_1,K_2$}\Comment{Computes $U_1,\Lambda$, and $U_2$, where $\Delta\approx U_1\Lambda U_2^T$ is an  \\ \hfill approximate SVD of rank $K:=K_1+K_2$  }
		\State $ {X} = \texttt{random\_subset}(V\setminus Z,K_1) \cup  \texttt{random\_subset}(Z,K_2)$ 
\State $(\omega_{XX},\omega_{VX}) = (\omega(X,X),\omega(V,X))$
		\State $\hat d = \omega_{V{X}} \left(\omega_{{XX}}^{-1} \left(\omega_{V{X}}^T\mathbf{1}\right)\right) $ \Comment{Uses \cref{Nys} to approximate $d = \omega\mathbf{1}$}
		\State $ \tilde\omega_{VX} =\hat d^{-1/2} \odot \omega_{VX}$ \Comment{Applying $\odot$ columnwise, i.e. $(\tilde\omega_{V{X}})_{ij} = \hat d_i^{-1/2}(\omega_{V{X}})_{ij} $ }
		\State $ [Q,R] = \texttt{thin\_qr}(\tilde \omega_{V{X}}) $\Comment{Computes thin QR decomposition $\tilde \omega_{V{X}}=QR$}
		\State $ S = R\omega_{{XX}}^{-1}R^T$ \Comment{N.B. $S \in \mathbb{R}^{K \times K}$}
		\State $ S = (S + S^T)/2 $ \Comment{Corrects symmetry-breaking computational errors}
		\State $ [\Phi,\Sigma] = \texttt{eig}(S)$\Comment{Computes eigendecomposition $S = \Phi\Sigma\Phi^T$}
		\State $ \Lambda = I_K -\Sigma$ 
		\State $ U_s = Q\Phi$ \Comment{
				$\Delta_s\approx U_s \Lambda U_s^T$, so to return the decomposition of $\Delta_s$ terminate here}
		\State $ U_1 = \hat d^{-1/2} \odot U_s $ \Comment{I.e. $(U_1)_{ij} =  \hat d_i^{-1/2} (U_s)_{ij} $}
		\State $  U_2 = \hat d^{1/2} \odot U_s $ \Comment{I.e. $(U_2)_{ij} =  \hat d_i^{1/2} (U_s)_{ij} $}
		\State \textbf{return} $U_1,\Lambda,U_2$
		\EndFunction
	\end{algorithmic}
\end{algorithm}

In \cite{Budd3,Buddthesis}, this Nystr\"om-QR approximate decomposition was used to approximate the matrix exponential via $e^{-\delta t \Delta} \approx I + U_1(e^{-\delta t \Lambda} - I_K)U_2^T$. 
But by \Cref{nb:Nys}, $\Delta \approx I - U_1\Sigma U_2^T$ is a slightly more accurate approximation, and so we have the improved approximation:
\[
 e^{-\delta t \Delta} \approx e^{-\delta t} \left(I + U_1(e^{\delta t \Sigma} - I_K)U_2^T\right).
\]
Therefore, we compute $v_m \approx e^{-\tau(\Delta + M')}u_n$ by defining $v_0 := u_n$, and $v_r$ for $r \in\{1,...,m\}$ by 
\be \label{eq:Strang}
\begin{split}
	v_{r+1} &= e^{-\delta t} e^{-\delta t M'}v_r + e^{-\delta t}e^{-\frac{1}{2}\delta t M'} U_1(e^{\delta t \Sigma}-I_K)U_2^T e^{-\frac{1}{2}\delta t M'}v_r\\
	&= a_1(\delta t) \odot v_r + a_3(\delta t) \odot \left( U_1\left( a_2(\delta t) \odot \left(U_2^T\left(a_3(\delta t) \odot v_r\right) \right)\right) \right)
\end{split}
\ee
where $a_1(\delta t):= \operatorname{exp}(-\delta t (\mu'+\mathbf{1}))$, $a_2(\delta t) := \operatorname{exp}(\delta t\operatorname{diag}(\Sigma))-\mathbf{1}_K$, and $a_3(\delta t)$ is the elementwise square root of $a_1(\delta t)$ 
(where $\exp$ is applied elementwise, and $\mathbf{1}_K$ is the vector of $K$ ones).

Finally we note that, by \Cref{fdiffusethm}, $b$ is the fidelity-forced diffusion with initial condition $u_0=\mathbf{0}$ at time $\tau$. We compute $b$ via the semi-implicit Euler scheme used for fidelity-forced diffusion in \cite{MKB}. To compute this, again we use the Nystr\"om-QR decomposition of $\Delta$.

\begin{nb}
	We do not use the scheme from \emph{\cite{MKB}} for all of the fidelity-forced diffusions because the Strang formula is more accurate for the $e^{-\tau(\Delta + M')}u_n$ term, see \emph{\cite[\S5.2.5--5.2.7]{Buddthesis}} for details. 
\end{nb}

\section{Examples of sub-analytic regularisers}\label{app:subany}

The following theorem shows that the examples of regularisers $\mathcal{R}$ that are given in \Cref{note:assumptionsRT}, do indeed satisfy the conditions required by \Cref{ass:R}.

\begin{thm}\label{thm:app1}
	The following functions are semi-analytic, bounded below, and continuous on their domain (which is $\mathbb{R}^n$):
	\begin{enumerate}[i.]
		\item $f:x \mapsto\|Ax\|_1$ for $A\in \mathbb{R}^{m\times n}$. 
		\item A feedforward neural network $\mathcal{NN}:=f_L \circ \cdots \circ f_1$ where $f_j(x') := \rho(A^{(j)}x' + b^{(j)} )$, $A^{(j)}\in \mathbb{R}^{n_j\times n_{j-1}}$, $b^{(j)} \in \mathbb{R}^{n_j}$, $n_0 = n$, $n_L =1$, and $(\rho(x'))_i:=\max\{0,x'_i\}$ (the ReLU function).  
	\end{enumerate}
\end{thm}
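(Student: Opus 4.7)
The plan is to treat both items with a unified piecewise-affine argument. Continuity and nonnegativity are immediate: $\|Ax\|_1\geq 0$ by definition, and $\mathcal{NN}(x)\geq 0$ because the final layer applies $\rho=\max\{0,\cdot\}$, and each map is continuous as a finite composition of continuous maps. Since $\operatorname{dom}\mathcal{R}=\mathbb{R}^n$ is closed, the only remaining assertion is semi-analyticity (which implies sub-analyticity as noted in the definition). In each case I will exhibit a finite polyhedral cover $\{P_\sigma\}_\sigma$ of $\mathbb{R}^n$, indexed by a finite set of ``activation patterns'' $\sigma$, such that $f|_{P_\sigma}=g_\sigma$ for some affine function $g_\sigma$. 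The graph then decomposes as
\begin{equation*}
\operatorname{Gr} f \;=\; \bigcup_\sigma \bigl\{(x,t)\in\mathbb{R}^{n+1} : x\in P_\sigma,\; t=g_\sigma(x)\bigr\},
\end{equation*}
a finite union of sets defined by linear (hence analytic) equalities and inequalities. Any non-strict constraint $\ell(x)\geq 0$ is split as $\{\ell>0\}\cup\{\ell=0\}$ to match the strict-positivity-plus-zero-set form of Łojasiewicz's definition. Because semi-analyticity is a local property, this global semi-algebraic presentation will suffice at every point.

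For item (i), I will index by sign patterns $\sigma\in\{-1,+1\}^m$ and set
\begin{equation*}
P_\sigma:=\{x\in\mathbb{R}^n:\sigma_i(Ax)_i\geq 0 \text{ for all } i\},\qquad g_\sigma(x):=\sum_{i=1}^m \sigma_i(Ax)_i.
\end{equation*}
Then $\|Ax\|_1 = g_\sigma(x)$ on $P_\sigma$ and $\bigcup_\sigma P_\sigma=\mathbb{R}^n$, which finishes this case via the template above.

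For item (ii), the index set is the tuple of layerwise activation patterns $\bar\sigma=(\bar\sigma^{(j)})_{j=1}^L$ with $\bar\sigma^{(j)}\in\{0,1\}^{n_j}$, and $P_{\bar\sigma}$ is the set of $x$ for which the pre-activation coordinate $(A^{(j)}\hat x^{(j-1)}(x)+b^{(j)})_i$ is positive precisely when $\bar\sigma^{(j)}_i=1$, where $\hat x^{(j)}(x):=(f_j\circ\cdots\circ f_1)(x)$. The central step is an induction on $j$ showing that, on the partial intersection $P_{\bar\sigma^{(1)},\dots,\bar\sigma^{(j-1)}}$, both $\hat x^{(j-1)}(x)$ and the next pre-activation $A^{(j)}\hat x^{(j-1)}(x)+b^{(j)}$ are affine in $x$; this holds because along that polyhedron the ReLU at layer $j-1$ acts coordinatewise either as the identity or as zero, as prescribed by $\bar\sigma^{(j-1)}$. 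Consequently $P_{\bar\sigma}$ is polyhedral and $\mathcal{NN}|_{P_{\bar\sigma}}$ equals a fixed affine function $\mathcal{NN}_{\bar\sigma}$, so the template applies.

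The main obstacle is precisely this inductive bookkeeping for (ii): verifying that composing an affine function on a polyhedron with ReLU yields an affine function on the refined polyhedron, and carefully rewriting the non-strict half-space constraints that arise into the strict-inequality-plus-zero-set form required by the definition of a semi-analytic set. Once this is in place, the rest reduces to the closure of (semi-)analytic sets under finite unions and intersections.
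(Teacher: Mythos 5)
Your proposal is correct and takes essentially the same approach as the paper: the sign-pattern decomposition for item (i) is the paper's argument almost verbatim, and your activation-pattern induction for item (ii) simply makes explicit the paper's terser claim that a composition of piecewise linear continuous functions is piecewise linear, hence semi-analytic. The extra care you take in splitting non-strict inequalities into the strict-inequality-plus-zero-set form, and in invoking closure under finite unions and intersections, is a valid (and slightly more detailed) filling-in of the same route, not a different one.
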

\begin{proof}
\begin{enumerate}[i.]
\item $\operatorname{Gr} f = \{(x,\|Ax\|_1)\mid x \in \mathbb{R}^n\}$ can be written as 
\[
\bigcup_{e\in \{-1,1\}^m} \bigcap_{j=1}^m \left\{(z_1,z_2) \in \mathbb{R}^{n+1} \: \middle | \: z_2 - \sum_{i=1}^m e_i(Az_1)_i = 0 \text{ and } e_j(Az_1)_j \geq 0  \right\}
\]
and thus $\operatorname{Gr} f$ is a semi-analytic set. The other properties are trivial.
\item $\mathcal{NN}$ is a composition of piecewise linear continuous functions, and is hence piecewise linear and continuous. It follows that it is semi-analytic.
It is bounded below due to $\rho$ applying the ReLU function. 
\end{enumerate}
\end{proof}

\bibliographystyle{siamplain}
 \bibliography{dissertation}

\begin{thebibliography}{10}

\bibitem{Hansen}
{\sc B.~Adcock and A.~C. Hansen}, {\em Compressive Imaging: Structure,
  Sampling, Learning}, Cambridge University Press, 2021,
  \url{https://doi.org/10.1017/9781108377447}.

\bibitem{Adler2018}
{\sc J.~Adler, S.~Lunz, O.~Verdier, C.-B. Schönlieb, and O.~Öktem}, {\em Task
  adapted reconstruction for inverse problems}, Inverse Problems, 38 (2022),
  p.~075006, \url{https://doi.org/10.1088/1361-6420/ac28ec}.

\bibitem{APSV2018}
{\sc D.~Alfke, D.~Potts, M.~Stoll, and T.~Volkmer}, {\em {NFFT Meets Krylov
  Methods: Fast Matrix-Vector Products for the Graph Laplacian of Fully
  Connected Networks}}, Frontiers in Applied Mathematics and Statistics, 4
  (2018), p.~61, \url{https://doi.org/10.3389/fams.2018.00061}.

\bibitem{Arridge2019}
{\sc S.~Arridge, P.~Maass, O.~Öktem, and C.-B. Schönlieb}, {\em Solving
  inverse problems using data-driven models}, Acta Numerica, 28 (2019),
  p.~1–174, \url{https://doi.org/10.1017/S0962492919000059}.

\bibitem{attouch2010proximal}
{\sc H.~Attouch, J.~Bolte, P.~Redont, and A.~Soubeyran}, {\em {Proximal
  alternating minimization and projection methods for nonconvex problems: An
  approach based on the Kurdyka-{\L}ojasiewicz inequality}}, Mathematics of
  operations research, 35 (2010), pp.~438--457.

\bibitem{BM}
{\sc E.~Bae and E.~Merkurjev}, {\em Convex variational methods on graphs for
  multiclass segmentation of high-dimensional data and point clouds}, Journal
  of Mathematical Imaging and Vision, 58 (2017), pp.~468--493.

\bibitem{BF}
{\sc A.~Bertozzi and A.~Flenner}, {\em Diffuse interface models on graphs for
  classification of high dimensional data}, Multiscale Modeling \& Simulation,
  10 (2012), pp.~1090--1118, \url{https://doi.org/10.1137/11083109X}.

\bibitem{Bolte2007}
{\sc J.~Bolte, A.~Daniilidis, and A.~Lewis}, {\em {The Łojasiewicz Inequality
  for Nonsmooth Subanalytic Functions with Applications to Subgradient
  Dynamical Systems}}, SIAM Journal on Optimization, 17 (2007), pp.~1205--1223,
  \url{https://doi.org/10.1137/050644641}.

\bibitem{BoltePALM}
{\sc J.~Bolte, S.~Sabach, and M.~Teboulle}, {\em Proximal alternating
  linearized minimization for nonconvex and nonsmooth problems}, Mathematical
  Programming, 146 (2014), pp.~459--494,
  \url{https://doi.org/10.1007/s10107-013-0701-9}.

\bibitem{BKPock2011}
{\sc K.~Bredies, K.~Kunisch, and T.~Pock}, {\em {Total Generalized Variation}},
  SIAM Journal on Imaging Sciences, 3 (2010), pp.~492--526,
  \url{https://doi.org/10.1137/090769521}.

\bibitem{Buddthesis}
{\sc J.~Budd}, {\em Theory and Applications of Differential Equation Methods
  for Graph-based Learning}, PhD thesis, Technische Universiteit Delft, 2022,
  \url{https://doi.org/10.4233/uuid:b8e0648c-d38e-4f95-bcd7-b99a943cb2d1}.

\bibitem{Budd1}
{\sc J.~Budd and Y.~van Gennip}, {\em {Graph Merriman--Bence--Osher as a
  SemiDiscrete Implicit Euler Scheme for Graph Allen--Cahn Flow}}, SIAM Journal
  on Mathematical Analysis, 52 (2020), pp.~4101--4139,
  \url{https://doi.org/10.1137/19M1277394}.

\bibitem{Budd3}
{\sc J.~Budd, Y.~{van Gennip}, and J.~Latz}, {\em {Classification and image
  processing with a semi-discrete scheme for fidelity forced Allen–Cahn on
  graphs}}, GAMM Mitteilungen, 44 (2021), pp.~1--43,
  \url{https://doi.org/10.1002/gamm.202100004}.

\bibitem{Birdspot}
{\sc L.~Calatroni, Y.~Gennip, C.-B. Schönlieb, H.~Rowland, and A.~Flenner},
  {\em {Graph Clustering, Variational Image Segmentation Methods and Hough
  Transform Scale Detection for Object Measurement in Images}}, Journal of
  Mathematical Imaging and Vision, 57 (2017), pp.~269--291,
  \url{https://doi.org/10.1007/s10851-016-0678-0}.

\bibitem{PoissonMBO}
{\sc J.~Calder, B.~Cook, M.~Thorpe, and D.~Slepcev}, {\em Poisson learning:
  Graph based semi-supervised learning at very low label rates}, in Proceedings
  of the International Conference on Machine Learning, 2020, pp.~8588--8598.

\bibitem{ChPock}
{\sc A.~Chambolle and T.~Pock}, {\em {A First-Order Primal-Dual Algorithm for
  Convex Problems with Applications to Imaging}}, Journal of Mathematical
  Imaging and Vision, 40 (2011),
  \url{https://doi.org/10.1007/s10851-010-0251-1}.

\bibitem{CV}
{\sc T.~Chan and L.~Vese}, {\em Active contours without edges}, IEEE
  Transactions on Image Processing, 10 (2001), pp.~266--277,
  \url{https://doi.org/10.1109/83.902291}.

\bibitem{Condat2013}
{\sc L.~Condat}, {\em {A Primal---Dual Splitting Method for Convex Optimization
  Involving Lipschitzian, Proximable and Linear Composite Terms}}, J. Optim.
  Theory Appl., 158 (2013), p.~460–479,
  \url{https://doi.org/10.1007/s10957-012-0245-9}.

\bibitem{Corona2019}
{\sc V.~Corona, M.~Benning, M.~J. Ehrhardt, L.~F. Gladden, R.~Mair, A.~Reci,
  A.~J. Sederman, S.~Reichelt, and C.-B. Schönlieb}, {\em Enhancing joint
  reconstruction and segmentation with non-convex bregman iteration}, Inverse
  Problems, 35 (2019), p.~055001,
  \url{https://doi.org/10.1088/1361-6420/ab0b77}.

\bibitem{AAR}
{\sc M.~de~Vriendt, P.~Sellars, and A.~I. Aviles-Rivero}, {\em {The GraphNet
  Zoo: An All-in-One Graph Based Deep Semi-supervised Framework for Medical
  Image Classification}}, in Uncertainty for Safe Utilization of Machine
  Learning in Medical Imaging, and Graphs in Biomedical Image Analysis,
  Springer, 2020, pp.~187--197.

\bibitem{ET}
{\sc S.~Esedo\=glu and Y.-H.~R. Tsai}, {\em {Threshold dynamics for the
  piecewise constant Mumford–Shah functional}}, Journal of Computational
  Physics, 211 (2006), pp.~367--384,
  \url{https://doi.org/10.1016/j.jcp.2005.05.027}.

\bibitem{FBCM}
{\sc C.~Fowlkes, S.~Belongie, F.~Chung, and J.~Malik}, {\em {Spectral Grouping
  Using the Nystr\"{o}m Method}}, IEEE Trans. Pattern Anal. Mach. Intell., 26
  (2004), p.~214–225, \url{https://doi.org/10.1109/TPAMI.2004.1262185}.

\bibitem{gao2005image}
{\sc S.~Gao and T.~D. Bui}, {\em Image segmentation and selective smoothing by
  using {Mumford--Shah} model}, IEEE Transactions on Image Processing, 14
  (2005), pp.~1537--1549.

\bibitem{mcBertozzi}
{\sc C.~Garcia-Cardona, E.~Merkurjev, A.~L. Bertozzi, A.~Flenner, and A.~G.
  Percus}, {\em Multiclass data segmentation using diffuse interface methods on
  graphs}, IEEE Trans. Pattern Anal. Mach. Intell., 36 (2014), pp.~1600--1613,
  \url{https://doi.org/10.1109/TPAMI.2014.2300478}.

\bibitem{getreuer2012tvdenoising}
{\sc P.~Getreuer}, {\em {Rudin--Osher--Fatemi} total variation denoising using
  split {Bregman}}, Image Processing On Line,  (2012),
  \url{https://doi.org/10.5201/ipol.2012.g-tvd}.

\bibitem{getreuer2012tvdeconvolution}
{\sc P.~Getreuer}, {\em Total variation deconvolution using split {Bregman}},
  Image Processing On Line,  (2012),
  \url{https://doi.org/10.5201/ipol.2012.g-tvdc}.

\bibitem{GVL}
{\sc G.~H. Golub and C.~F. van Loan}, {\em Matrix Computations}, Johns Hopkins
  University Press, Baltimore, 4~ed., 2013,
  \url{http://www.cs.cornell.edu/cv/GVL4/golubandvanloan.htm}.

\bibitem{hironaka1973subanalytic}
{\sc H.~Hironaka}, {\em Subanalytic sets, number theory}, Algebraic Geometry
  and Commutative Algebra,  (1973), pp.~453--493.

\bibitem{Huber}
{\sc P.~J. Huber}, {\em Robust Statistics}, Wiley Series in Probability and
  Statistics, John Wiley \& Sons, New York, 1981,
  \url{https://doi.org/10.1002/0471725250}.

\bibitem{KLcalculus}
{\sc G.~Li and T.~K. Pong}, {\em {Calculus of the Exponent of
  Kurdyka{\textendash}{\L}ojasiewicz Inequality and Its Applications to Linear
  Convergence of First-Order Methods}}, Foundations of Computational
  Mathematics, 18 (2017), pp.~1199--1232,
  \url{https://doi.org/10.1007/s10208-017-9366-8}.

\bibitem{Bertozzi2021}
{\sc H.~Li, H.~Chen, M.~Haberland, A.~L. Bertozzi, and P.~J. Brantingham}, {\em
  {PDEs on graphs for semi-supervised learning applied to first-person activity
  recognition in body-worn video}}, Discrete \& Continuous Dynamical Systems,
  41 (2021), pp.~4351--4373.

\bibitem{lojasiewicz1964}
{\sc S.~{\L}ojasiewicz}, {\em Triangulation of semi-analytic sets}, Annali
  della Scuola Normale Superiore di Pisa-Classe di Scienze, 18 (1964),
  pp.~449--474.

\bibitem{LB2016}
{\sc X.~Luo and A.~Bertozzi}, {\em {Convergence of the Graph Allen–Cahn
  Scheme}}, Journal of Statistical Physics, 167 (2017), pp.~934--958,
  \url{https://doi.org/10.1007/s10955-017-1772-4}.

\bibitem{MKB}
{\sc E.~Merkurjev, T.~Kostić, and A.~Bertozzi}, {\em {An MBO Scheme on Graphs
  for Classification and Image Processing}}, SIAM Journal on Imaging Sciences,
  6 (2013), pp.~1903--1910, \url{https://doi.org/10.1137/120886935}.

\bibitem{Miller}
{\sc K.~Miller, J.~Mauro, J.~Setiadi, X.~Baca, Z.~Shi, J.~Calder, and A.~L.
  Bertozzi}, {\em {Graph-based Active Learning for Semi-supervised
  Classification of SAR Data}}, 2022, \url{https://arxiv.org/abs/2204.00005}.

\bibitem{MM}
{\sc L.~Modica and S.~Mortola}, {\em {Un esempio di $\Gamma^-$-convergenza}},
  Bollettino della Unione Matematica Italiana. Series V. B, 14 (1977),
  pp.~285--299.

\bibitem{MS}
{\sc D.~Mumford and J.~Shah}, {\em Optimal approximations by piecewise smooth
  functions and associated variational problems}, Communications on Pure and
  Applied Mathematics, 42 (1989), pp.~577--685,
  \url{https://doi.org/10.1002/cpa.3160420503}.

\bibitem{MakinenAzzariFoi2020}
{\sc Y.~Mäkinen, L.~Azzari, and A.~Foi}, {\em {Collaborative Filtering of
  Correlated Noise: Exact Transform-Domain Variance for Improved Shrinkage and
  Patch Matching}}, IEEE Transactions on Image Processing, 29 (2020),
  pp.~8339--8354, \url{https://doi.org/10.1109/TIP.2020.3014721}.

\bibitem{Nys1}
{\sc E.~J. Nyström}, {\em {Über Die Praktische Auflösung von
  Integralgleichungen mit Anwendungen auf Randwertaufgaben}}, Acta Mathematica,
  54 (1930), pp.~185 -- 204, \url{https://doi.org/10.1007/BF02547521}.

\bibitem{phillips1962}
{\sc D.~L. Phillips}, {\em A technique for the numerical solution of certain
  integral equations of the first kind}, Journal of the ACM (JACM), 9 (1962),
  pp.~84--97.

\bibitem{NumRec}
{\sc W.~H. Press, S.~A. Teukolsky, W.~T. Vetterling, and B.~P. Flannery}, {\em
  Numerical Recipes: The Art of Scientific Computing}, Cambridge University
  Press, New York, USA, 3~ed., 2007.

\bibitem{BayesianGraphs}
{\sc Y.~Qiao, C.~Shi, C.~Wang, H.~Li, M.~Haberland, X.~Luo, A.~M. Stuart, and
  A.~L. Bertozzi}, {\em Uncertainty quantification for semi-supervised
  multi-class classification in image processing and ego-motion analysis of
  body-worn videos}, Electronic Imaging, 2019 (2019), pp.~264--1--264--7,
  \url{https://doi.org/10.2352/ISSN.2470-1173.2019.11.IPAS-264}.

\bibitem{RR2007}
{\sc R.~Ramlau and W.~Ring}, {\em {A Mumford--Shah level-set approach for the
  inversion and segmentation of X-ray tomography data}}, Journal of
  Computational Physics, 221 (2007), pp.~539--557.

\bibitem{ROF}
{\sc L.~I. Rudin, S.~Osher, and E.~Fatemi}, {\em Nonlinear total variation
  based noise removal algorithms}, Physica D: Nonlinear Phenomena, 60 (1992),
  pp.~259--268, \url{https://doi.org/10.1016/0167-2789(92)90242-F}.

\bibitem{Strang}
{\sc G.~Strang}, {\em On the construction and comparison of difference
  schemes}, SIAM Journal on Numerical Analysis, 5 (1968), pp.~506--517,
  \url{http://www.jstor.org/stable/2949700}.

\bibitem{Tikh}
{\sc A.~N. Tikhonov}, {\em Solution of incorrectly formulated problems and the
  regularization method}, Soviet Math. Dokl., 4 (1963), pp.~1035--1038.

\bibitem{vGB}
{\sc Y.~{van}~Gennip and A.~L. Bertozzi}, {\em {Gamma-convergence of graph
  Ginzburg-Landau functionals}}, 2018, \url{https://arxiv.org/abs/1204.5220}.

\bibitem{vGGOB}
{\sc Y.~van Gennip, N.~Guillen, B.~Osting, and A.~Bertozzi}, {\em {Mean
  Curvature, Threshold Dynamics, and Phase Field Theory on Finite Graphs}},
  Milan Journal of Mathematics, 82 (2014), pp.~3--65.

\bibitem{VZ}
{\sc M.~Varma and A.~Zisserman}, {\em A statistical approach to texture
  classification from single images}, International Journal of Computer Vision,
  62 (2005), pp.~61--81, \url{https://doi.org/10.1007/s11263-005-4635-4}.

\bibitem{PNP}
{\sc S.~V. Venkatakrishnan, C.~A. Bouman, and B.~Wohlberg}, {\em {Plug-and-Play
  priors for model based reconstruction}}, in 2013 IEEE Global Conference on
  Signal and Information Processing, 2013, pp.~945--948,
  \url{https://doi.org/10.1109/GlobalSIP.2013.6737048}.

\bibitem{zhang2010active}
{\sc K.~Zhang, H.~Song, and L.~Zhang}, {\em Active contours driven by local
  image fitting energy}, Pattern recognition, 43 (2010), pp.~1199--1206.

\bibitem{DnCNN}
{\sc K.~Zhang, W.~Zuo, Y.~Chen, D.~Meng, and L.~Zhang}, {\em {Beyond a Gaussian
  Denoiser: Residual Learning of Deep CNN for Image Denoising}}, IEEE
  Transactions on Image Processing, 26 (2017), pp.~3142--3155,
  \url{https://doi.org/10.1109/TIP.2017.2662206}.

\end{thebibliography}
\end{document}